    \newtheorem{theorem}{Theorem}[section]
    \newtheorem{corollary}{Corollary}[section]
    \newtheorem{lemma}{Lemma}[section]
\newcites{App}{Supplementary references}
\def\nn{\nonumber \\ }
\def\ti{\tilde}
\def\mr{\mathrm} 
\def\f{\frac}
\def\mbr{\mathbb{R}} 
\def\mbz{\mathbb{Z}}
\def\mbn{\mathbb{N}}
\def\mcd{\mathcal{D}}
\def\mca{\mathcal{A}}
\def\mbone{\mathds{1}}
\def\al{\alpha}
\def\be{\beta}
\def\ga{\gamma}
\def\ep{\epsilon}
\def\et{\eta}
\def\th{\theta}
\def\la{\lambda}
\def\ta{\tau}
\def\De{\Delta}
\def\La{\Lambda}
\def\bth{{\bm \th}}
\def\dbth{\dot{\bm \th}}
\def\ddbth{\ddot{\bm \th}}
\def\dddbth{\dddot{\bm \th}}
\def\btht{{\bm \th} (t)}
\def\dbtht{\dot{\bm \th} (t)}
\def\ddbtht{\ddot{\bm \th} (t)}
\def\bthketa{\bm{\theta}(k\eta)}
\def\berr{\bm e}
\def\bpsi{\bm \psi}
\def\bgbtht{\bm{G} (\btht, \al)}
\def\bg{\bm{g}}
\def\bG{\bm{G}}
\def\bx{\bm{\xi}}
\def\tbx{\tilde{\bm{\xi}}}
\def\bthperp{\bm{\theta}_{\mca\perp}}
\def\bthpara{\bm{\theta}_{\mca\parallel}}
\def\bthperpt{\bm{\theta}_{\mca\perp} (t)}
\def\bthparat{\bm{\theta}_{\mca\parallel} (t)}
\def\dbthpara{\dot{\bm{\theta}}_{\mca\parallel}}
\def\dbthparat{\dot{\bm{\theta}}_{\mca\parallel} (t)}
\def\bthaperp{\bm{\theta}_{\mca\perp}}
\def\bthapara{\bm{\theta}_{\mca\parallel}}
\def\dbthaperp{\dot{\bm{\theta}}_{\mca\perp}}
\def\dbthapara{\dot{\bm{\theta}}_{\mca\parallel}}
\def\dbthaperpt{\dot{\bm{\theta}}_{\mca\perp} (t)}
\def\hbth{\hat{\bm{\theta}}}
\def\dhbth{\dot{\hat{\bm{\theta}}}}
\def\mcac{{\mathcal{A}^\mathsf{c}}}
\def\btha{\bm{\theta}_{\mathcal{A}}}
\def\bthac{\bm{\theta}_{\mathcal{A}^{\mathsf{c}}}}
\def\bthat{\bm{\theta}_{\mathcal{A}}(t)}
\def\bthact{\bm{\theta}_{\mathcal{A}^{\mathsf{c}}}(t)}
\def\hbtha{\hat{\bm{\theta}}_{\mathcal{A}}}
\def\hbthat{\hat{\bm{\theta}}_{\mathcal{A}}(t)}
\title{Toward Equation of Motion for Deep Neural Networks: Continuous-time Gradient Descent and \\ Discretization Error Analysis}
\author{%
  Taiki Miyagawa
  \\
  NEC Corporation, Japan\\
  \texttt{miyagawataik@nec.com} \\
}
\begin{document}

\maketitle

\begin{abstract}

We derive and solve an ``Equation of Motion'' (EoM) for deep neural networks (DNNs), a differential equation that precisely describes the discrete learning dynamics of DNNs.
Differential equations are continuous but have played a prominent role even in the study of discrete optimization (gradient descent (GD) algorithms).
However, there still exist gaps between differential equations and the actual learning dynamics of DNNs due to \textit{discretization error}. 
In this paper, we start from gradient flow (GF) and derive a counter term that cancels the discretization error between GF and GD.
As a result, we obtain \textit{EoM}, a continuous differential equation that precisely describes the discrete learning dynamics of GD.
We also derive discretization error to show to what extent EoM is precise.
In addition, we apply EoM to two specific cases: scale- and translation-invariant layers.
EoM highlights differences between continuous-time and discrete-time GD, indicating the importance of the counter term for a better description of the discrete learning dynamics of GD.
Our experimental results support our theoretical findings.

\end{abstract}

\section{Introduction} \label{sec: Introduction}
\begin{wrapfigure}{r}{0.5\textwidth}
    \centering
    \includegraphics[width=0.95\linewidth]
        {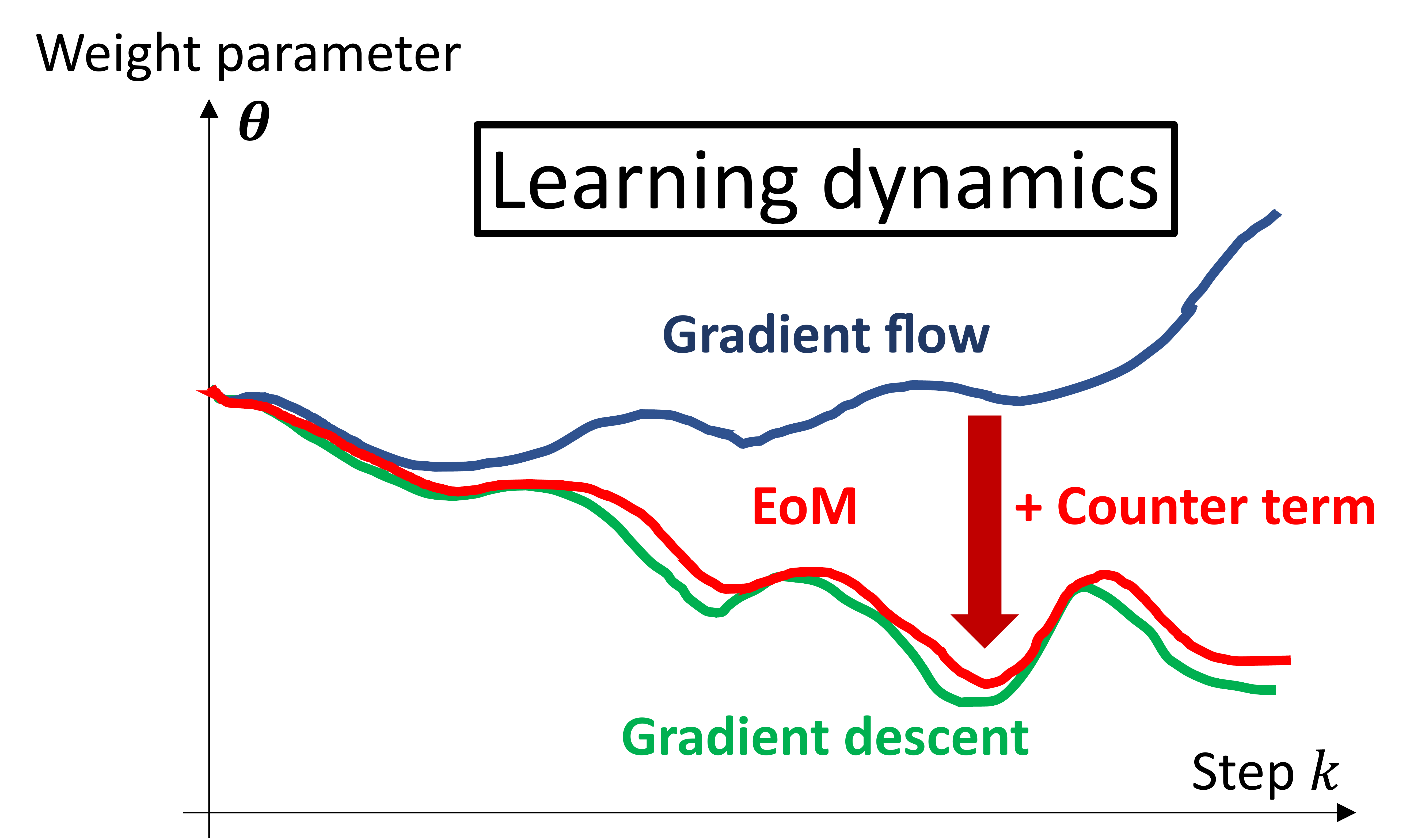}
    \caption{\textbf{Our approach.} GF fails in describing the learning dynamics of GD due to \textit{discretization error}. Our counter term approach successfully cancels the discretization error between GF and GD and hence allows for a reliable analysis of GD.}
    \label{fig: figure1}
\end{wrapfigure}

Let us first explain our primary motivation for the present paper. 
In \textit{physics}, one of the fundamental goals is to predict the dynamics of matter and its fundamental constituents. Specifically, ``predict'' here means to construct differential equations that best describe the physical system under consideration and to solve them. 
Such differential equations are called \textit{Equations of Motion} (EoM). 
An interesting question here may be ``What is the EoM for deep neural networks (DNNs)?''
That is, to what extent can we predict the discrete learning dynamics of DNNs by constructing differential equations? 
This is our research question.

Differential equations have played a prominent role in studying discrete optimization (gradient descent (GD) algorithms), although they are continuous \cite{kushner1978rates, kushner1978stochastic, kushner1984invariant, ljung1992stochastic, kushner2003stochastic, raginsky2012continuous, krichene2015accelerated, mertikopoulos2017convergence, krichene2017acceleration, liu2017stein, feng2017semigroups, scieur2017integration, xu2018accelerated, xu2018continuous_SDE_is_good_for_SGD, alnur2020implicit_ICML2020, kovachki2021continuous, wojtowytsch2021stochastic, elkabetz2021continuous_vs_discrete_NeurIPS2021Spotlight, bu2022feedback, li2022what_ICLR2022Spotlight}.
In the context of deep learning, gradient flow (GF) and stochastic differential equations (SDEs) are used to analyze (stochastic) gradient descent ((S)GD).
Research targets include: convergence \cite{raginsky2012continuous, krichene2015accelerated, mertikopoulos2017convergence, scieur2017integration, xu2018accelerated, krichene2017acceleration, xu2018continuous_SDE_is_good_for_SGD, wojtowytsch2021stochastic}, stability of optimization \cite{bu2022feedback}, optimization with constraints \cite{bu2022feedback}, convergent states \cite{wojtowytsch2021stochastic, li2022what_ICLR2022Spotlight}, flatness of loss landscapes \cite{wojtowytsch2021stochastic}, empirical risk bounds \cite{alnur2020implicit_ICML2020}, and online PCA \cite{feng2017semigroups}. 
Various techniques for continuous analysis have been imported to the analysis of discrete GD algorithms.
However, there still exist gaps between differential equations and actual learning dynamics due to \textit{discretization error}, which is the main interest of the present paper and is often missing in the literature above.
To be specific, we focus on GF $\dbtht = - \bg(\btht)$ as a continuous approximation of GD $\bth_{k+1} = \bth_k - \eta \bg(\bth_k)$, where $\btht \in \mbr^d$ and $\bth_k \in \mbr^d$ are the weight parameters of a DNN at time $t \in \mbr$ and step $k \in \mbz$, respectively, and $\bg \in \mbr^d$ is a gradient vector. $\eta \in \mbr$ is a learning rate and is regarded as the discretization step size when GF is discretized with the Euler method \cite{hairer1993solving}: $\dbth(t=k\eta) \fallingdotseq \f{\bth_{k+1} - \bth_k}{\eta}$.
Due to this approximation, discretization error (or ``continuation error'') is introduced, and thus GF cannot fully explain the dynamics of GD. 
For instance, we show that according to GF, the weight norm of a scale-invariant layer collapses to zero when we use weight decay, while GD does not show such behavior (Section \ref{sec: Learning Dynamics of Scale-invariant Layers}).

To fill the critical gap between GF and GD, we propose modifying GF to describe the learning dynamics of GD more precisely; i.e., we add a counter term $\bx\in\mbr^d$ to the gradient $\bg$ of GF that cancels the discretization error (Figure \ref{fig: figure1}). This idea is motivated by backward error analysis in numerical analysis \cite{hairer1993solving}. 
We derive a functional integral equation that determines the counter term and solve it (Section \ref{sec: Equation of Motion of Deep Neural Networks}). 
As a result, we obtain a more reliable differential equation, called \textit{EoM} here, that describes the discrete learning dynamics of GD.
Using the counter term, we derive the leading order of discretization error (Section \ref{sec: Leading Order of Discretization Error Is Given By Counter Term}) to show to what extent GF and EoM are precise in describing GD's dynamics.
This point is often missed in the literature on the continuous approximation of discrete GD algorithms \cite{li2017stochastic_modified_eq._ICML2017, li2019stochastic_modified_equations_math_found_JMLR2019, feng2020uniform, feng2017semigroups, hu2019on_the_diffusion_AMSA2019, an2019stochastic_modified_eq._asynchro., barrett2021implicit_gradient_regularization_ICLR2021, smith2021on_the_origin_of_IR_in_SGD}.
We further derive a sufficient condition for learning rates for the discretization error to be small (Section \ref{sec: Discretization Error Bounds}).
We show that EoM well explains empirical results.

Furthermore, to show the benefits of EoM, we apply it to two specific cases: scale-invariant layers \cite{van2017l2_effectiveLR_orginal1, zhang2018three} and translation-invariant layers \cite{kunin2021neural_mechanics_1, tanaka2021noethers_neural_mechanics_2} (Section \ref{sec: Applications}). 
For scale-invariant layers, we show that a better description of GD's discrete dynamics requires modifications to the decay rate of weight norms that is previously derived in the continuous regime (SDEs) \cite{zhiyuan2020reconciling_NIPS2020}.
In addition, we show that EoM successfully reproduces the limiting dynamics ($t\rightarrow\infty$) of weight norms and angular update \cite{wan2021spherical_motion_dynamics} that are previously derived in the discrete regime, while GF cannot reproduce this result.
For translation-invariant layers, we show that EoM rather than GF dramatically matches empirical results, indicating the importance of the counter term.
To the best of our knowledge, no study analyzes the temporal evolution of translation-invariant layers except for \cite{kunin2021neural_mechanics_1} and \cite{tanaka2021noethers_neural_mechanics_2}, where only the sum of weights is their focus, while we derive the dynamics of the whole weights.


Our contribution is four-fold. Our code\footnote{See Supplementary Materials at \url{https://openreview.net/forum?id=qq84D17BPu} .} and detailed experimental results are given as supplementary materials.
\begin{enumerate}
\item To fill the critical gap between GF and GD, we derive a counter term for GF that cancels the discretization error, and as a result, we obtain EoM, a continuous differential equation that precisely describes the discrete learning dynamics of GD.
\item To show to what extent GF and EoM are precise in describing discrete GD dynamics, we derive the leading order of discretization error, as is often missed in the literature on the continuous approximation of discrete GD algorithms.
We further derive a sufficient condition for learning rates for the discretization error to be small.
\item We apply EoM to two specific cases: scale-invariant layers and translation-invariant layers, indicating the importance of the counter term for a better description of the discrete learning dynamics of GD.
\item Our experimental results support our theoretical findings.
\end{enumerate}
Our work is the first step toward answering this research question: to what extent can we predict the discrete learning dynamics of DNNs by constructing differential equations (EoM for DNNs)?
Also, our work helps researchers import continuous analysis to the discrete analysis of GD algorithms. In this sense, our work bridges discrete and continuous analyses of GD algorithms.

\section{Related Work} \label{sec: Related Work}
The idea of approximating discrete-time stochastic algorithms with continuous equations dates back to stochastic approximation theory \cite{kushner1978rates, kushner1978stochastic, kushner1984invariant, ljung1992stochastic, kushner2003stochastic}.
Their primary focus is convergence analysis for discrete-time algorithms, while our focus is to predict the learning dynamics (temporal evolution) of weight parameters, such as the decay rates of weight norms and effective learning rate of scale-invariant layers.
Our idea of the counter term is inspired by the backward error analysis developed for numerical analysis \cite{hairer2006geometric_numerical_integration_book}.
This idea is now used to analyze discrete optimization \cite{li2017stochastic_modified_eq._ICML2017, li2019stochastic_modified_equations_math_found_JMLR2019, feng2020uniform, feng2017semigroups, hu2019on_the_diffusion_AMSA2019, an2019stochastic_modified_eq._asynchro., barrett2021implicit_gradient_regularization_ICLR2021, smith2021on_the_origin_of_IR_in_SGD}. 
\cite{elkabetz2021continuous_vs_discrete_NeurIPS2021Spotlight} is a pioneering work on discretization error analysis between GF and GD that is based on the numerical analysis of the Euler method \cite{hairer1993solving}. They derive a sufficient condition for learning rates for the discretization error to be small. 
This analysis is based on a bound (inequality), while we derive an explicit relationship between learning rates and discretization error as an equality.

Neural mechanics and Noether's learning dynamics \cite{kunin2021neural_mechanics_1, tanaka2021noethers_neural_mechanics_2} provide a solution to a part of the aforementioned problem: to what extent can we predict the learning dynamics of DNNs by constructing differential equations? They derive (the breaking of) conservation laws of weight parameters using differential equations and provide the temporal evolution of the conserved quantities. The present work is inspired by these studies but has crucial differences: 1) our focus is on the temporal evolution of all of the network parameters, not only the conserved quantities, 2) the gradient's correction for canceling the discretization error is not limited to the first order, but all orders, and 3) the discretization error is explicitly provided in the present paper. See Appendix \ref{app: Supplementary Discussion} for more related studies.

\section{Equation of Motion for Deep Neural Networks} \label{sec: Equation of Motion of Deep Neural Networks}
In the following sections, we define \textit{EoM} by modifying GF (Section \ref{sec: Our Approach and Definitions}). We show that the counter term satisfies a functional integral equation (Section \ref{sec: How to Determine Counter Term}), and then we solve it (Section \ref{sec: Solution to EoLDE}).

\subsection{Our Approach and Definitions} \label{sec: Our Approach and Definitions}
We begin with a simple idea: add a counter term to GF to cancel discretization error, i.e.,
\begin{align} 
    \dbtht = - \bm{g} (\btht) - \eta \bm{\xi} (\btht) \, , 
    \label{eq: gradient flow}
\end{align}
where $\btht \in \mbr^d$ is the vectorized weight parameters of a DNN at time $t \in \mbr$, $d \in \mbn$ is the dimension of the weight, and $\dbtht$ denotes $d\btht / dt$.
Gradient $\bg(\btht)$ is defined as $\bm{g} (\btht) := \nabla f (\btht) + \la \btht$, which consists of a loss function $f(\btht)$ and weight decay term $\la \btht$, where $\la >0$ controls the strength of weight decay. 
$\eta > 0$ is a small learning rate, and $\bm{\xi} (\btht) \in \mbr^d$ is the counter term.
Throughout this paper, we assume all functions are sufficiently smooth.
We call Equation (\ref{eq: gradient flow}) the \textit{Equation of Motion (EoM)} for DNNs, or simply EoM.

Our aim is to find $\bx$ that makes Equation (\ref{eq: gradient flow}) more reliable to precisely approximate GD $\bth_{k+1} = \bth_k - \eta \bm{g}(\bth_k)$,
where $\bth_k \in \mbr^d$ is the weight at step $k\in\mbz_{\geq 0}$. 
To do so, we first define the \textit{discretization error} between GF (\ref{eq: gradient flow}) and GD at step $k$:
\begin{align}
    \berr_k := \bth(k\eta) - \bth_k \,\, \in \mbr^d \, 
    \label{eq: definiton of discretization error}
\end{align}
and find $\bx$ that makes $\berr_k$ small. Throughout this paper, we use the standard Euler method to discretize GF: $\dbtht \fallingdotseq (\bth(t+\eta) - \btht) / \eta$ and  $t = k\eta$; thus, $\eta$ is identified with the discretization step size. 

\subsection{How to Determine Counter Term} \label{sec: How to Determine Counter Term}
We show that the leading order of $\berr_k$ with respect to $\eta$ is controlled by the counter term (Theorem \ref{thm: Leading order of EoDE}), and as a result, the counter term is determined via a functional integral equation (Equation (\ref{eq: EoLDE})).

Our first theorem shows what the counter term should cancel.
\begin{theorem}[Recursive formula for discretization error] \label{thm: EoDE}
Discretization error $\berr_k$ 
satisfies:
\begin{align}
    \berr_{k+1} - \berr_k 
    &= -\eta \big( \bm{g}(\bth(k\eta)) - \bm{g} (\bth(k\eta) - \berr_k) \big) 
        +\eta^2\int_0^1 ds \ddot{\bth} (\eta(k+s)) (1-s) 
        -\eta^2\bm{\xi}(\bth(k\eta))
        \label{eq: EoDE} \\
    &=: -\eta \left( \bm{g}(\bth(k\eta)) - \bm{g} (\bth(k\eta) - \berr_k) \right) + \bm{\La}(\bthketa) \, .
\end{align}
\end{theorem}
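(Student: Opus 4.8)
The plan is to compute $\berr_{k+1} - \berr_k$ directly from its definition by separating the continuous increment of the EoM solution from the discrete increment of GD, and to handle each of the two exactly. Writing $\berr_{k+1} - \berr_k = \big(\bth((k+1)\eta) - \bth(k\eta)\big) - (\bth_{k+1} - \bth_k)$, the discrete term is immediate from the GD update rule, $\bth_{k+1} - \bth_k = -\eta\, \bg(\bth_k)$. The entire content of the statement therefore reduces to expanding the continuous increment $\bth((k+1)\eta) - \bth(k\eta)$ one order beyond the Euler step while keeping the remainder exact.

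For the continuous increment I would apply Taylor's theorem with the integral form of the remainder to each coordinate of the EoM solution, i.e.
\begin{align}
    \bth((k+1)\eta) = \bth(k\eta) + \eta\, \dot{\bth}(k\eta) + \eta^2 \int_0^1 \ddot{\bth}(\eta(k+s))\,(1-s)\, ds \, ,
    \label{eq: taylor remainder plan}
\end{align}
where the remainder is the standard $\int_{k\eta}^{(k+1)\eta} \ddot{\bth}(\tau)\,\big((k+1)\eta - \tau\big)\, d\tau$ rewritten via the change of variables $\tau = \eta(k+s)$. Because the remainder is retained exactly rather than truncated, Equation (\ref{eq: taylor remainder plan}) is an identity, which is precisely what allows the theorem to be an equality instead of an $O(\eta^3)$ approximation. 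I would then substitute the EoM itself, $\dot{\bth}(k\eta) = -\bg(\bth(k\eta)) - \eta\,\bx(\bth(k\eta))$, to replace the first derivative; this produces the first-order drift $-\eta\,\bg(\bth(k\eta))$ together with the counter-term contribution $-\eta^2\,\bx(\bth(k\eta))$.

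Combining the two increments, the gradient evaluations appear as $-\eta\,\bg(\bth(k\eta)) + \eta\,\bg(\bth_k)$, and I would close the argument by invoking the definition $\berr_k = \bth(k\eta) - \bth_k$, equivalently $\bth_k = \bth(k\eta) - \berr_k$, so that these two terms group into $-\eta\big(\bg(\bth(k\eta)) - \bg(\bth(k\eta) - \berr_k)\big)$. The surviving $\eta^2$ terms are exactly the integral remainder and the counter term, reproducing the claimed recursion; the abbreviation $\bm{\La}(\bth(k\eta))$ simply collects the two $\eta^2$ contributions.

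I do not anticipate a genuine obstacle, since the result is an exact bookkeeping identity rather than an estimate. The only point requiring care is keeping the Taylor expansion in integral-remainder form so that nothing is discarded, and performing the substitution $\tau = \eta(k+s)$ correctly: an error in the weight $(1-s)$ versus $s$, or in the limits of integration, would corrupt the $\eta^2$ term. A minor supporting point is that the expansion is applied to a solution of the EoM, whose twice-differentiability (so that $\ddot{\bth}$ exists and is integrable on each interval $[k\eta,(k+1)\eta]$) is guaranteed by the paper's standing assumption that all functions are sufficiently smooth.
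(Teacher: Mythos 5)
Your proposal is correct and follows essentially the same route as the paper's proof: Taylor's theorem with the integral-form remainder applied to $\bth((k+1)\eta)$, substitution of the EoM for $\dot{\bth}(k\eta)$, subtraction of the GD update, and the final regrouping via $\bth_k = \bth(k\eta) - \berr_k$. The change of variables and the weight $(1-s)$ are handled exactly as in the paper, so there is nothing to add.
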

Here, we defined $\bm{\La} (\bth (k\eta)) := \eta^2 \int_0^1 ds \ddot{\bth}(\eta(k+s)) (1-s) -\eta^2\bm{\xi}(\bth(k\eta)) \,\,\, \in \mbr^d$.
The proof is based on Taylor's theorem and is given in Appendix \ref{app: Proof of thm: EoDE}.
The right-hand side of Equation (\ref{eq: EoDE}) tells us that the counter term (third term) should cancel the first and second terms.
However, the following theorem states that the first term gives only subleading contributions with respect to $\eta$.
\begin{theorem}[Leading order of discretization error] \label{thm: Leading order of EoDE}
Suppose that $ \bm{\La} (\bth (k\eta)) = O (\eta^\ga)$ and $\berr_0 = O (\eta^\ga)$ for some $\ga > 0$. Then $\berr_k = O (\eta^\ga)$ and
$- \eta ( \bg(\bth(k\eta)) - \bg(\bth(k\eta) - \berr_k) ) = O(\eta^{\ga+1})$. Therefore, the first term in the right-hand side of Equation (\ref{eq: EoDE}) is negligible compared with $\bm{\La}$: 
\begin{align}
    \berr_{k+1} 
    &= \berr_k + \bm{\La} (\bth (k\eta)) - \eta ( \bg(\bth(k\eta)) - \bg(\bth(k\eta) - \berr_k) ) \nn 
    &= \berr_k + \bm{\La} (\bth (k\eta)) + O(\eta^{\ga+1}) \hspace{20pt} (k = 0,1,2,...) \, . 
    \label{eq: disc. err. in thm: Leading order of EoDE}
\end{align}
\end{theorem}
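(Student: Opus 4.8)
The plan is to work directly from the recursion established in Theorem~\ref{thm: EoDE}, namely
$\berr_{k+1} - \berr_k = -\eta\big(\bg(\bth(k\eta)) - \bg(\bth(k\eta) - \berr_k)\big) + \bm{\La}(\bth(k\eta))$,
and to prove the claim in two stages: first that the error itself stays at order $\eta^\ga$, and then that the $\bg$-difference term is genuinely one order smaller, so that it can be absorbed into the $O(\eta^{\ga+1})$ remainder. The only structural ingredients needed are the smoothness of $\bg = \nabla f + \la\,\mathrm{id}$ (assumed throughout the paper) and the two hypotheses $\bm{\La}(\bth(k\eta)) = O(\eta^\ga)$ and $\berr_0 = O(\eta^\ga)$.

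First I would establish $\berr_k = O(\eta^\ga)$ by induction on $k$. The base case is the hypothesis $\berr_0 = O(\eta^\ga)$. For the inductive step, suppose $\berr_k = O(\eta^\ga)$. Since $\bg$ is smooth, it is Lipschitz with some constant $L$ on the relevant bounded neighborhood of the trajectory, so $\|\bg(\bth(k\eta)) - \bg(\bth(k\eta) - \berr_k)\| \le L\|\berr_k\| = O(\eta^\ga)$, whence the first term of Equation~\ref{eq: EoDE} is $-\eta\cdot O(\eta^\ga) = O(\eta^{\ga+1})$. Feeding this together with $\bm{\La} = O(\eta^\ga)$ into the recursion gives $\berr_{k+1} = \berr_k + O(\eta^\ga) + O(\eta^{\ga+1}) = O(\eta^\ga)$, closing the induction. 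Once $\berr_k = O(\eta^\ga)$ is in hand, the second claim is immediate: writing $\bg(\bth(k\eta)) - \bg(\bth(k\eta) - \berr_k) = \big(\int_0^1 D\bg(\bth(k\eta) - u\,\berr_k)\,du\big)\berr_k$ by the fundamental theorem of calculus and bounding the smooth Jacobian integral by an $O(1)$ factor shows the difference is $O(\eta^\ga)$, so $-\eta\big(\bg(\bth(k\eta)) - \bg(\bth(k\eta) - \berr_k)\big) = O(\eta^{\ga+1})$. Substituting this estimate back into the recursion of Theorem~\ref{thm: EoDE} yields exactly the two displayed lines of Equation~\ref{eq: disc. err. in thm: Leading order of EoDE}.

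The main obstacle is controlling the implied $O$-constants as $k$ grows, and I would flag this explicitly to keep the order bookkeeping unambiguous. For each fixed $k$ the induction above is clean and directly gives $\berr_k = O(\eta^\ga)$, which is the natural reading of the statement. If one instead wanted a bound uniform over a full time horizon $k\eta \le T$ (so $k \sim T/\eta$), the naive induction would accumulate $k$ copies of the per-step $O(\eta^\ga)$ contribution from $\bm{\La}$, and one should switch to a discrete Gr\"onwall argument: from $\|\berr_{k+1}\| \le (1+\eta L)\|\berr_k\| + \|\bm{\La}\|$ one obtains $\|\berr_k\| \lesssim e^{LT}\|\berr_0\| + e^{LT}\eta^{\ga-1}$, i.e.\ the familiar one-order loss between local and global Euler error. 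Thus the per-step induction is the correct tool for the leading-order statement as written, while the Gr\"onwall estimate is what one would invoke only if a horizon-uniform bound were required.
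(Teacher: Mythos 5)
Your proof follows essentially the same route as the paper's: induction on $k$ with the base case given by the hypothesis on $\berr_0$, and the key step showing that smoothness of $\bg$ makes the difference term $-\eta\big(\bg(\bth(k\eta)) - \bg(\bth(k\eta)-\berr_k)\big)$ of order $O(\eta^{\ga+1})$ once $\berr_k = O(\eta^\ga)$ — your Lipschitz/Jacobian-integral bound is the same estimate as the paper's first-order Taylor expansion of $\bg(\bth(k\eta)-\berr_k)$ around $\bth(k\eta)$. Your closing remark distinguishing the fixed-$k$ statement from a horizon-uniform bound (where a discrete Gr\"onwall argument and the attendant one-order loss would be needed) is a sound observation on the $k$-dependence of the implied constants that the paper's proof leaves implicit.
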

The proof is by induction and given in Appendix \ref{app: Proof of thm: Leading order of EoDE}.
Therefore, the leading order of discretization error is $O(\eta^\ga)$ and given by:
\begin{align}
    \bm{\La} (\bth(k\eta)) = O(\eta^\ga) \,
    \Longleftrightarrow \,
    \int_0^1 ds \, \ddot{\bth} (\eta(k+s)) (1-s) - \bm{\xi} (\bth (k\eta))  = O(\eta^{\ga-2})\, .
    \label{eq: EoLDE}
\end{align}
This is a functional equation of $\bx$ because $\ddbtht$ contains $\bx$ via Equation (\ref{eq: gradient flow}).
A solution to Equation (\ref{eq: EoLDE}) for a large $\ga$ gives a small $\bm{\La}$ and thus gives a small $\berr_k$ via Equation (\ref{eq: disc. err. in thm: Leading order of EoDE}). 

\subsection{Solution to Equation \ref{eq: EoLDE}} \label{sec: Solution to EoLDE}
How can we solve Equation (\ref{eq: EoLDE})? It is not easy to find an exact solution because Equation (\ref{eq: EoLDE}) is a functional integral equation \cite{rus1981problem, lungu2009functional, long2013nonlinear, thuyet2014nonlinear, marian2021functional}; therefore, we assume a power series solution with respect to $\eta$: 
\begin{align}
    \bm{\xi} (\bth (k\eta)) 
    = \sum_{\al=0}^\infty \eta^\al \bx_\al 
    = \bm{\xi}_0 (\bth (k\eta)) + \eta \bm{\xi}_1 (\bth (k\eta)) + \eta^2 \bm{\xi}_2 (\bth (k\eta)) + \cdots \, .
    \label{eq: counter term expanded}
\end{align}
In the following theorem, we successfully find a solution for \textit{all} orders of $\eta$. 
\begin{theorem}[Solution of Equation \ref{eq: EoLDE}] \label{thm: Solution of EoLDE}
The solution to Equation (\ref{eq: EoLDE}) of form (\ref{eq: counter term expanded}) is given by
\begin{align}
    \bx_\al(\bth) = \ti{\bx}_\al(\bth) 
    := \sum_{i=2}^{\al+2} \sum_{k_1+\cdots+k_i = \al - i + 2} \f{(-1)^i}{i!} D_{k_1} \cdots D_{k_{i-1}} \Xi_{k_i}
    \label{eq: Solution of EoLDE}
\end{align}
for $\al=0, 1, 2,...$, where we use differential operators (Lie derivatives)
$\mcd_\al := \ti{\bx}_{\al-1}(\bth) \cdot \nabla \,\, (\al = 1, 2,...)$ and $\mcd_0 := \bg(\bth) \cdot \nabla$
and also defined $\Xi_\al(\bth) := \ti{\bx}_{\al-1}(\bth)$ ($\al = 1,2,...\,$) and $\,\Xi_{0}(\bth) := \bg(\bth)$. 
\end{theorem}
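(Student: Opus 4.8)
The plan is to substitute the power-series ansatz (\ref{eq: counter term expanded}) into the functional equation (\ref{eq: EoLDE}) and match coefficients of $\eta$ order by order; I expect the matching condition at order $\eta^\al$ to collapse exactly onto the closed form (\ref{eq: Solution of EoLDE}). The computational engine is the observation that, along any solution of EoM (\ref{eq: gradient flow}), the time derivative acts on smooth functions of $\bth$ as a single first-order operator. Indeed, by the chain rule and $\dbth = -\bg(\bth) - \eta\bx(\bth)$,
\begin{align}
    \frac{d}{dt} = \dbth \cdot \nabla = -\bg(\bth)\cdot\nabla - \eta\,\bx(\bth)\cdot\nabla = -\sum_{\al=0}^\infty \eta^\al \mcd_\al =: -\mcd(\eta) \, ,
\end{align}
where the last step uses (\ref{eq: counter term expanded}) with the definitions $\mcd_0 = \bg\cdot\nabla$ and $\mcd_\al = \bx_{\al-1}\cdot\nabla$ (which, once $\bx_\al = \ti{\bx}_\al$ is established, coincide with those in the statement). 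Since $\mcd_0\bth = \bg = \Xi_0$ and $\mcd_\al\bth = \bx_{\al-1} = \Xi_\al$, iterating this operator identity gives $\frac{d^m\bth}{dt^m} = (-1)^m \mcd(\eta)^m \bth$ for every $m \geq 1$, with no ordering ambiguity because only powers of the single operator $\mcd(\eta)$ appear.

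First I would turn the integral in (\ref{eq: EoLDE}) into a power series in $\eta$ with operator-valued coefficients. Taylor-expanding $\ddot{\bth}(\eta(k+s))$ in $s$ about $s=0$ and integrating term by term against $(1-s)$, using the Beta integral $\int_0^1 s^n(1-s)\,ds = \frac{n!}{(n+2)!}$, yields
\begin{align}
    \int_0^1 ds\, \ddot{\bth}(\eta(k+s))(1-s)
    = \sum_{m=2}^\infty \frac{\eta^{m-2}}{m!}\frac{d^m\bth}{dt^m}(\eta k)
    = \sum_{m=2}^\infty \frac{(-1)^m \eta^{m-2}}{m!}\, \mcd(\eta)^m \bth \, .
\end{align}
Next I would expand each operator power as an ordered word, $\mcd(\eta)^m = \sum_{k_1,\dots,k_m \geq 0}\eta^{k_1+\cdots+k_m}\mcd_{k_1}\cdots\mcd_{k_m}$ (no multinomial coefficients, since the $\mcd_\al$ need not commute), and use $\mcd_{k_m}\bth = \Xi_{k_m}$. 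Collecting the total power $\al = (m-2)+(k_1+\cdots+k_m)$ then forces $k_1+\cdots+k_m = \al-m+2$ together with $2 \leq m \leq \al+2$, so the coefficient of $\eta^\al$ in the integral is precisely $\sum_{m=2}^{\al+2}\frac{(-1)^m}{m!}\sum_{k_1+\cdots+k_m=\al-m+2}\mcd_{k_1}\cdots\mcd_{k_{m-1}}\Xi_{k_m}$, i.e. the right-hand side of (\ref{eq: Solution of EoLDE}) with $i=m$. Matching this against the coefficient $\bx_\al$ of $\eta^\al$ in $\bx$ makes the left-hand side of (\ref{eq: EoLDE}) vanish at order $\eta^\al$ exactly when $\bx_\al = \ti{\bx}_\al$.

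I expect the main obstacle to be organizational rather than a single hard estimate: one must track the Taylor index, the Beta weight, and the non-commuting operator indices simultaneously and check that the summation range collapses exactly to $m \in \{2,\dots,\al+2\}$. The genuinely substantive point is that the recursion (\ref{eq: Solution of EoLDE}) is well-posed, i.e. that $\ti{\bx}_\al$ is built only from $\ti{\bx}_0,\dots,\ti{\bx}_{\al-1}$ and does not refer to itself; this also justifies reading the two displays above as an induction on $\al$, since at order $\al$ only $\ti{\bx}_{\leq \al-1}$ enter, which by the inductive hypothesis equal the true coefficients $\bx_{\leq \al-1}$. Well-posedness holds because every $\mcd_{k_j}$ and every $\Xi_{k_i}$ in the order-$\al$ coefficient carries $\ti{\bx}_{k_j-1}$ or $\ti{\bx}_{k_i-1}$, while the constraints $k_j \leq \al-i+2$ with $i \geq 2$ force each index to satisfy $k_j-1 \leq \al-1 < \al$; hence only strictly lower orders appear and the recursion solves unambiguously from $\ti{\bx}_0$ upward. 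Finally I would note that, since the matching annihilates every order of $\eta$, the residual of (\ref{eq: EoLDE}) is formally $O(\eta^N)$ for all $N$, which is the claimed solution for all orders of $\eta$.
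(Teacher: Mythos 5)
Your proposal is correct and takes essentially the same route as the paper: your Beta-integral expansion of the $(1-s)$-weighted integral, the operator identity $\tfrac{d}{dt}=-\mcd(\eta)$ along EoM trajectories, and the reindexing of ordered (non-commuting) operator words are precisely the paper's Lemmas \ref{lem: tmplem1}, \ref{lem: tmplem2}, and \ref{lem: tmplem3}, followed by the same order-by-order matching against the ansatz (\ref{eq: counter term expanded}). Your explicit check that the recursion is well-posed (each index satisfies $k_j \leq \al-i+2 \leq \al$, so only $\ti{\bx}_{\beta}$ with $\beta < \al$ enter at order $\al$, justifying the induction) is left implicit in the paper, but this is a refinement of the same argument rather than a different approach.
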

The proof follows from the definition of the Lie derivative and is given in Appendix \ref{app: Proof of thm: Solution of EoLDE}. The first two orders of the solution are given by:
\begin{align}
    &\tbx_0(\bth) 
        = \f{1}{2} \, (\bg (\bth) \cdot \nabla) \bg (\bth)
        = \f{1}{4}\nabla || \bg (\bth)  ||^2 
        \label{eq: hat xi0} \\
    &\tbx_1(\bth) 
        = \f{1}{2} (\tbx_0(\bth) \cdot \nabla) \bg (\bth) 
        + \f{1}{6} (\bg (\bth) \cdot \nabla) \tbx_0
        \label{eq: hat xi1} 
        \, .
\end{align}

\paragraph{Discussions.}
As can be inferred from Equations (\ref{eq: Solution of EoLDE}--\ref{eq: hat xi1}), $\tbx_\al$ contains the $\al+2_{\rm nd}$-order derivative of the loss function. Therefore, the higher-order counter terms cancel the higher-order smoothness of the discretization error.

Here, we note that Equation (\ref{eq: Solution of EoLDE}) can be found, e.g., in \cite{hairer2006geometric_numerical_integration_book}, as a higher-order backward error analysis. However, our derivation above has independent contributions: 1) we clarify that the counter term cancels the leading order of discretization error (Theorem \ref{thm: Leading order of EoDE}), and 2) we find that the discretization error itself is also given by the counter term (Corollary \ref{cor: Discretization error of bx} in the next section).

Equation (\ref{eq: hat xi0}) often appears in the literature on backward error analysis \cite{hairer1993solving, hairer2006geometric_numerical_integration_book} and its related topics in machine learning, e.g., \cite{li2017stochastic_modified_eq_ICML2017, li2019stochastic_modified_equations_math_found_JMLR2019, feng2020uniform, barrett2021implicit_gradient_regularization_ICLR2021, smith2021on_the_origin_of_IR_in_SGD, kunin2021neural_mechanics_1}. 
Typically, $\tbx_0$ is added to gradients of continuous equations (e.g., SDE) to close the gap between continuous equations and discrete algorithms (e.g., SGD) by canceling (at least first-order) discretization error. 
However, higher-order discretization error is neglected in these studies.
In contrast, our solution (\ref{eq: Solution of EoLDE}) cancels \textit{all} orders of discretization error.

\section{Discretization Error} \label{sec: Discretization Error}
The question here is to what extent the continuous approximation (\ref{eq: gradient flow}, \ref{eq: Solution of EoLDE}) is precise; this point is often missed in the literature on continuous approximation \cite{li2017stochastic_modified_eq._ICML2017, li2019stochastic_modified_equations_math_found_JMLR2019, feng2020uniform, feng2017semigroups, hu2019on_the_diffusion_AMSA2019, an2019stochastic_modified_eq._asynchro., barrett2021implicit_gradient_regularization_ICLR2021, smith2021on_the_origin_of_IR_in_SGD}.
In this section, we use the counter term (\ref{eq: Solution of EoLDE}) and quantify discretization error as a function of the loss function and its derivatives (Section \ref{sec: Leading Order of Discretization Error Is Given By Counter Term}).
We find that our result well explains empirical results.
We further derive a sufficient condition for learning rates for the discretization error to be small (Section \ref{sec: Discretization Error Bounds}).

\subsection{Counter Term Gives Leading Order of Discretization Error} \label{sec: Leading Order of Discretization Error Is Given By Counter Term}
We show that the counter term gives the leading order of discretization error between GD vs. GF and EoM. The proof follows from Theorem \ref{thm: Leading order of EoDE} and \ref{thm: Solution of EoLDE} and is given in Appendix \ref{app: Proof of cor: Discretization error of bx}.
\begin{corollary}[Leading order of discretization error is given by $\tbx_\al$] \label{cor: Discretization error of bx}
Suppose that we use $\bx$ up to $O(\eta^{\ga-1})$, i.e., $\bx = \tbx_0 + \eta \tbx_1 + \dotsm + \eta^{\ga-1} \tbx_{\ga-1}$ for $\ga \in \mbz_{>0}$ ($\bx := \bm{0}$ for $\ga=0$). Then, 
\begin{align}
    \berr_{k+1}  
    = \berr_k + \bm{\La}(\bthketa) + O(\eta^{\ga+3})
    = \berr_k + \eta^{\ga+2} \tbx_\ga + O(\eta^{\ga+3}) \, .
    \label{eq: discretization error}
\end{align}
\end{corollary}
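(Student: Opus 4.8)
The plan is to reduce everything to a statement about $\bm{\La}(\bthketa)$ for the \emph{truncated} counter term and then invoke Theorem \ref{thm: Leading order of EoDE}. By definition $\bm{\La}(\bthketa) = \eta^2 R_k$ with $R_k := \int_0^1 ds\, \ddbth(\eta(k+s))(1-s) - \bx(\bthketa)$, which is exactly the left-hand side of Equation (\ref{eq: EoLDE}). So it suffices to show that, for $\bx = \sum_{\al=0}^{\ga-1}\eta^\al\tbx_\al$, one has $R_k = \eta^\ga\tbx_\ga + O(\eta^{\ga+1})$; then $\bm{\La}(\bthketa) = \eta^{\ga+2}\tbx_\ga + O(\eta^{\ga+3})$, and feeding $\bm{\La} = O(\eta^{\ga+2})$ together with $\berr_0 = O(\eta^{\ga+2})$ (matched initialization gives $\berr_0 = 0$) into Theorem \ref{thm: Leading order of EoDE}, with its exponent taken to be $\ga+2$, upgrades the first term of Equation (\ref{eq: EoDE}) to $O(\eta^{\ga+3})$ and yields Equation (\ref{eq: discretization error}).

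First I would Taylor-expand the integral remainder. By Taylor's theorem applied to $s\mapsto\ddbth(\eta(k+s))$ about $s=0$ and the smoothness assumption, $\int_0^1 ds\, \ddbth(\eta(k+s))(1-s) = \sum_{m\geq 0}\frac{1}{m!(m+1)(m+2)}\,\eta^m\,\bth^{(m+2)}(k\eta)$ as an asymptotic series with controlled remainder (the coefficient is $\tfrac{1}{m!}\int_0^1 s^m(1-s)\,ds$). Each time-derivative $\bth^{(n)}(k\eta)$ is computed from the EoM (\ref{eq: gradient flow}) exactly as in the proof of Theorem \ref{thm: Solution of EoLDE}: writing $\dbth = -\sum_\al\eta^\al\Xi_\al$ and $\tfrac{d}{dt} = -\sum_\al\eta^\al\mcd_\al$, it becomes a Lie-series in $\eta$ whose terms are built from the operators $\mcd_\al$ and fields $\Xi_\al$. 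Collecting powers of $\eta$ then turns the integral term into a power series $\sum_{N\geq 0}\eta^N r_N$ whose coefficients $r_N$ are precisely the quantities that Theorem \ref{thm: Solution of EoLDE} matches against $\tbx_N$.

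The crux of the argument is that this expansion is \emph{lower-triangular} in the counter-term order: the coefficient $r_N$ depends only on $\bg$ and $\tbx_0,\dots,\tbx_{N-1}$, never on $\tbx_N$ or higher. Indeed, every counter term enters through some $\mcd_\al = \tbx_{\al-1}\cdot\nabla$ or $\Xi_\al = \tbx_{\al-1}$ carrying an explicit factor $\eta^\al$, so to reach total order $N$ the largest admissible index is $\al=N$ (all remaining factors taken at order $0$), which involves only $\tbx_{N-1}$. This is exactly the mechanism that makes Equation (\ref{eq: Solution of EoLDE}) a genuine recursion for $\tbx_N$ from lower orders, so I would import it directly from the proof of Theorem \ref{thm: Solution of EoLDE}. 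Consequently, for every $N\leq\ga$ the coefficient $r_N$ depends only on $\tbx_0,\dots,\tbx_{N-1}$, all of which are retained in the truncation; hence $r_N$ is unchanged by truncating at order $\ga-1$, and by the order-$N$ matching of Theorem \ref{thm: Solution of EoLDE} it equals $\tbx_N$.

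Finally I would read off the residual. For $0\leq N\leq\ga-1$ the integral term contributes $\tbx_N$ while the truncated $\bx$ subtracts $\tbx_N$, so these orders cancel exactly; at order $\ga$ the integral term still contributes $r_\ga = \tbx_\ga$ (unaffected by truncation) while $\bx$ contributes nothing, leaving $\eta^\ga\tbx_\ga$; all orders $N>\ga$ are absorbed into $O(\eta^{\ga+1})$. This gives $R_k = \eta^\ga\tbx_\ga + O(\eta^{\ga+1})$, hence $\bm{\La}(\bthketa) = \eta^{\ga+2}\tbx_\ga + O(\eta^{\ga+3})$, and Theorem \ref{thm: Leading order of EoDE} closes the argument. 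I expect the main obstacle to be making the lower-triangular dependency and the uniform control of the $O$-remainders fully rigorous: one must verify that the Lie-series bookkeeping genuinely forbids $\tbx_N$ from entering $r_N$, and that the smoothness and boundedness of the relevant fields along the trajectory make the tail $\sum_{N>\ga}\eta^N r_N$ truly $O(\eta^{\ga+1})$ uniformly over the steps $k$ under consideration, so that the per-step estimate can be promoted to the stated recursion.
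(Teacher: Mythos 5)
Your proposal is correct and takes essentially the same route as the paper: expand the integral remainder as the series $\sum_{\al}\eta^\al\tbx_\al$ via Theorem \ref{thm: Solution of EoLDE}, cancel the truncated counter term order by order so that $\bm{\La}(\bthketa) = \eta^{\ga+2}\tbx_\ga + O(\eta^{\ga+3})$, and then close the argument with Theorem \ref{thm: Leading order of EoDE}. If anything, your explicit verification of the lower-triangular dependency (the order-$N$ coefficient of the integral term involves only $\bg,\tbx_0,\dots,\tbx_{N-1}$, so truncating $\bx$ at order $\ga-1$ leaves all coefficients up to and including order $\ga$ unchanged) makes rigorous a step the paper's proof silently assumes when it writes the integral term as $\sum_{\al=0}^{\infty}\eta^\al\tbx_\al$ even though the trajectory is generated by the truncated dynamics.
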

First, Corollary \ref{cor: Discretization error of bx} implies that the higher the orders of the counter term we use (large $\gamma$), the more precise EoM (\ref{eq: gradient flow}) is (small $\berr_k$). Thus, GF ($\bx=\bm{0}$) gives larger discretization error than EoM ($\bx\neq \bm{0}$).
Second, Corollary \ref{cor: Discretization error of bx} gives the \textit{equality} of the leading order of discretization error at \textit{arbitrary} steps. This is not a \textit{bound} \cite{elkabetz2021continuous_vs_discrete_NeurIPS2021Spotlight} nor an \textit{asymptotic} analysis ($k\rightarrow\infty$).
Third, let us give an intuition by considering $\bx = \bm{0}$ (GF). Then, Corollary \ref{cor: Discretization error of bx} gives: 
\begin{align}
    \berr_{k+1} 
    &= \berr_0 + \sum_{s=0}^k \f{\eta^2}{2}(H(\bth (s\eta)) + \la I)(\nabla f(\bth(s\eta)) + \la \bth(s\eta)) + O(\eta^3) \, , \label{eq: disc. err. with no xi} 
\end{align}
where $H(\bth) \in \mbr^{d \times d}$ is the Hessian of the loss function $f$ with respect to $\bth$ and $I \in \mbr^{d \times d}$ is the identity matrix. Equation (\ref{eq: disc. err. with no xi}) suggests that 1) large learning rates lead to a large discretization error and 2) steep loss functions (along the trajectory) lead to a large discretization error. 

\begin{wrapfigure}{r}{0.5\textwidth}
  \centering
	\includegraphics[width=0.95\linewidth]
        {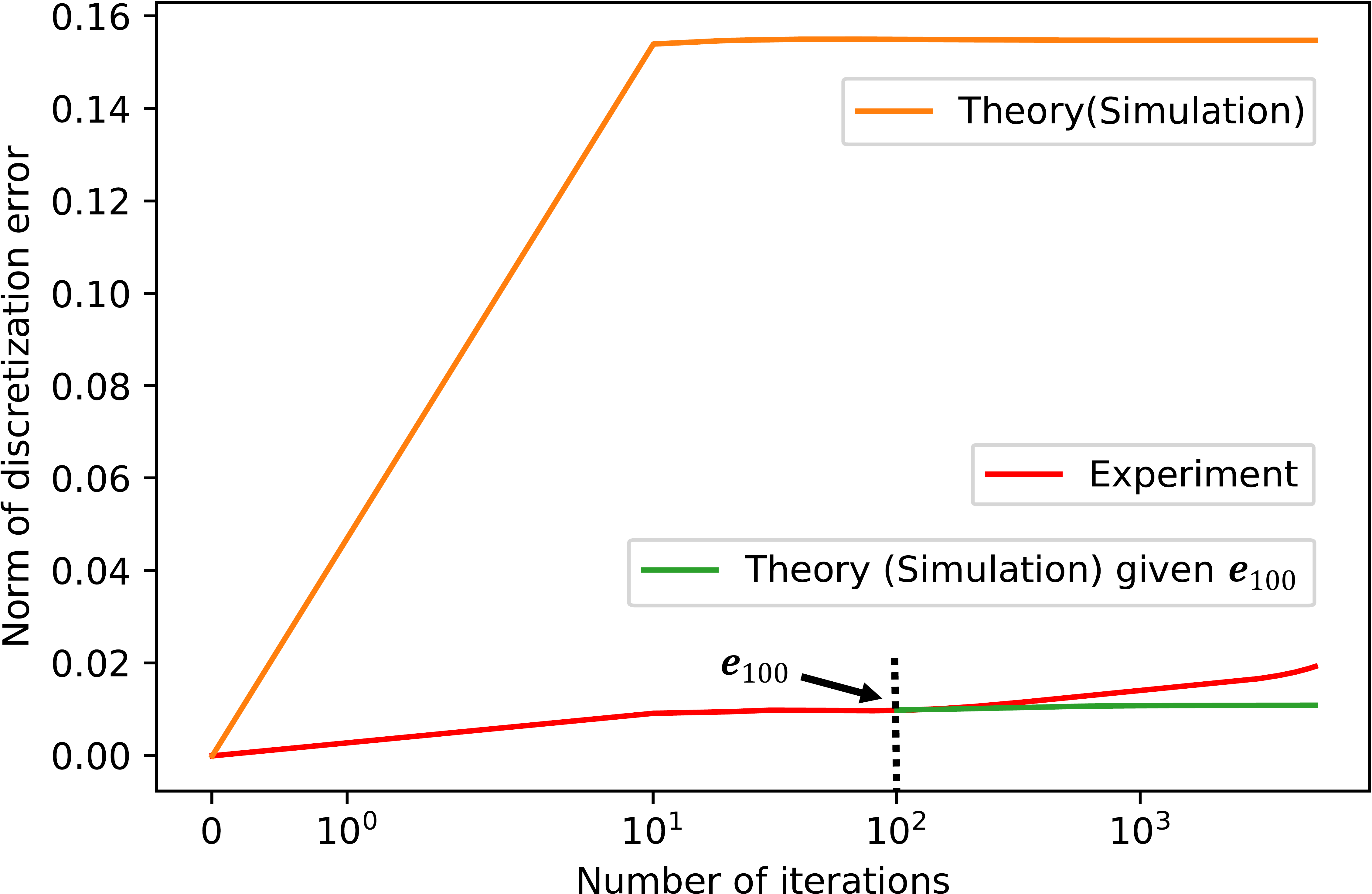}
	\caption{\textbf{Theoretical prediction of discretization error of GF and GD (Equation (\ref{eq: disc. err. with no xi})) vs. actual discretization error of GF and GD.} 
	The larning rate and weight decay are $10^{-2}$ and $10^{-2}$.
	See Appendix \ref{app: Theoretical prediction vs. experimental result of discretization error} for more results and details.
	See Section \ref{sec: Experiment} for experimental settings.}
	\label{fig: th vs ex discerr}
\end{wrapfigure}


\paragraph{Empirical result.}
We find Equation (\ref{eq: disc. err. with no xi}) well explains our empirical result.
We compare Equation (\ref{eq: disc. err. with no xi}) (up to $O(\eta^2)$) with the actual discretization error of GD and GF in Figure \ref{fig: th vs ex discerr}.
First, the gap between our theoretical prediction of discretization error (orange curve) and the actual discretization error (red curve) is small because the range of \textit{relative} error ($||\berr_k||/||\bth_k||$) in this plot is only 0--0.01 (see also Figure \ref{fig: relaive disc. err.} in Appendix \ref{app: Supplementary Experiment}).
Second, most of the discretization error for Theory (orange curve) and Experiment (red curve) is produced within the first 100 steps.
We can understand this phenomenon with the help of Equation (\ref{eq: disc. err. with no xi}). It suggests that discretization error can be enhanced when the loss function is non-smooth along the learning trajectory, which is likely to occur at the beginning of training due to random initialization.
Therefore, a large part of discretization error is produced in the early stage of training.
Third, we see that most of the gap between Theory (orange curve) and Experiment (red curve) also comes from the first 100 steps; in fact, the green curve shows that there is a much smaller enhancement of the gap after the 100th step. 
The source of the gap is the higher-order term $O(\eta^3)$ in Equation (\ref{eq: disc. err. with no xi}). 
It consists of higher-order derivatives of the loss function (Theorem \ref{eq: Solution of EoLDE} and Corollary \ref{cor: Discretization error of bx}) and thus can be large when the loss function is non-smooth along the learning trajectory. 
Therefore, by the same logic as above, the early stage of training tends to produce a gap between Theory (orange curve) and Experiment (red curve). 


\subsection{Discretization Error Bounds} \label{sec: Discretization Error Bounds}
We provide a sufficient condition (an upper bound for $\eta$) for GF and EoM to follow GD up to a given step $k$, which helps us infer desired learning rates (step sizes) for the discretization error to be small.
We first consider $\bx=\bm{0}$ (GF).  
\begin{corollary}[Learning rate bound for $\bx=\bm{0}$] \label{cor: Learning rate bound when bx=0}
Let $\bx=\bm{0}$ and assume that $\berr_0=O(\eta^3)$. Let $\ep$ and $t$ be arbitrary positive numbers. If the step size satisfies
\begin{align}
    \eta < \sqrt{\f{\ep}{k}} \sqrt{
        \f{2}{
             \underset{0\leq t^\prime \leq t}{\max} \{ || ( H(\bth(t^\prime)) + \la I ) \bg (\bth(t^\prime)) || \}
        }
    } \, ,
    \label{eq: LR bound1}
\end{align}
for some $k \in \{1,2,..., \lfloor\f{t}{\eta}\rfloor \}$, then the discretization error can be arbitrarily small:
\begin{align}
    ||\berr_k|| < \ep + O(\ep^{\f{3}{2}}) \, .
\end{align}
\end{corollary}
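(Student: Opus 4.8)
The plan is to start directly from the explicit summed form of the discretization error for $\bx=\bm{0}$, namely Equation (\ref{eq: disc. err. with no xi}), which Corollary \ref{cor: Discretization error of bx} already supplies, and to convert it into a scalar norm bound. Rewriting that identity for index $k$ (rather than $k+1$) and using $\bg=\nabla f+\la\bth$ gives
\begin{align}
    \berr_k = \berr_0 + \sum_{s=0}^{k-1} \f{\eta^2}{2}\big( H(\bth(s\eta)) + \la I \big)\bg(\bth(s\eta)) + O(\eta^3) \, .
\end{align}
Taking norms and applying the triangle inequality, the first step is
\begin{align}
    ||\berr_k|| \leq ||\berr_0|| + \f{\eta^2}{2}\sum_{s=0}^{k-1} \big|\big| \big( H(\bth(s\eta)) + \la I \big)\bg(\bth(s\eta)) \big|\big| + O(\eta^3) \, .
\end{align}

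The second step is to bound every summand uniformly. Since $k \leq \lfloor t/\eta \rfloor$, each index obeys $s\eta \leq (k-1)\eta < t$, so every argument $s\eta$ lies in $[0,t]$ and each norm is at most $M := \max_{0\leq t'\leq t} ||(H(\bth(t'))+\la I)\bg(\bth(t'))||$. This collapses the sum into $k$ identical bounds, giving $||\berr_k|| \leq ||\berr_0|| + \f{\eta^2}{2}\,kM + O(\eta^3)$. I then substitute the hypothesis on $\eta$: squaring the stated inequality yields exactly $\f{\eta^2}{2}kM < \ep$, which controls the middle term by $\ep$.

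The remaining work, and the place I expect to be most careful, is the order bookkeeping of the leftover terms: I must show $||\berr_0|| + O(\eta^3) = O(\ep^{3/2})$. Since $\berr_0 = O(\eta^3)$ by assumption, it suffices to control the $O(\eta^3)$ remainder. The learning-rate bound forces $\eta = O(\sqrt{\ep})$ (with $k$ and $M$ held fixed), so $\eta^3 = O(\ep^{3/2})$; and even if the remainder is more honestly a per-step accumulation $O(k\eta^3)$, the factor $\eta^2 \sim \ep/(kM)$ cancels the $k$, leaving it $O(\ep^{3/2})$. The delicate point is verifying that the hidden constants do not secretly depend on $k$: the $O(\eta^3)$ term originates from the Taylor remainder of Theorem \ref{thm: Leading order of EoDE} together with the next-order counter term $\tbx_1$ evaluated along the trajectory, and these are bounded by constants depending only on $t$ (through the smoothness of $f$ and $\bth(\cdot)$ on the compact interval $[0,t]$), not on $k$. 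Assembling the three pieces gives $||\berr_k|| < \ep + O(\ep^{3/2})$, as claimed.
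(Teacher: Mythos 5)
Your proposal is correct and follows essentially the same route as the paper's proof: start from Equation (\ref{eq: disc. err. with no xi}), apply the triangle inequality, bound each summand by the maximum of $\|(H(\bth(t'))+\la I)\bg(\bth(t'))\|$ over $[0,t]$, and substitute the learning-rate hypothesis to obtain $\f{\eta^2}{2}kM<\ep$ with the remainder absorbed into $O(\ep^{3/2})$. Your explicit check that a per-step accumulation $O(k\eta^3)$ is still $O(\ep^{3/2})$ after using $\eta^2 \lesssim \ep/(kM)$ is a slightly more careful piece of bookkeeping than the paper records, but it does not change the argument.
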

The proof follows from Equation (\ref{eq: disc. err. with no xi}) and is given in Appendix \ref{app: Proof of cor: Learning rate bound when bx=0}.
We see that
1) there is no guarantee that the discretization error is small unless the learning rate is sufficiently small,
2) we need small learning rates to keep the discretization error small for a long period, and 
3) we need small learning rates to keep the discretization error small for non-smooth loss landscapes.
This is consistent with our empirical results in Figure \ref{fig: discretization error} and \ref{fig: discretization error magnf}; in fact,
1) the discretization error blows up for a large learning rate ($\eta=10^{-1}$ in Figure \ref{fig: discretization error}),
2) it increases as the number of steps increases (Figure \ref{fig: discretization error magnf}), and 
3) most of it is produced in the early phase of training, where the objective function tends to be non-smooth, and the gradients tend to be large.

We compare our bound (\ref{eq: LR bound1}) with a bound given in \cite{elkabetz2021continuous_vs_discrete_NeurIPS2021Spotlight} because, to our knowledge, only \cite{elkabetz2021continuous_vs_discrete_NeurIPS2021Spotlight} provides a bound for the step size with respect to discretization error in the context of deep learning.
In \cite{elkabetz2021continuous_vs_discrete_NeurIPS2021Spotlight}, it is proved that in essence, $\eta \lesssim \ep/\be_{t \ep}\ga_{t \ep} c_t$, where $\be_{t \ep}$ and $\ga_{t \ep}$ measure the non-smoothness of the loss function, and $c_t$ depends on the spectrum of the Hessian. These factors are hard to compute analytically unless the loss function and network are simple, but the qualitative behavior of this bound is the same as ours (\ref{eq: LR bound1}); i.e., both bounds become tight when the loss function is non-smooth. 


We also derive a learning rate bound for $\bx = \tbx_0$ (EoM) and the full statement is given in Corollary \ref{cor: Learning rate bound when bx=tbx0} in Appendix \ref{app: Proof of cor: Learning rate bound when bx=tbx0}, which states that if $\eta < O (\sqrt[3]{\f{\ep}{k}})$, then $||\berr_k|| < \ep + O(\ep^{\f{4}{3}})$. Therefore, larger step sizes are now allowed compared with Corollary \ref{cor: Learning rate bound when bx=0} (GF) because of the non-zero counter term. Furthermore, we can show larger bounds for higher-order counter terms in a similar way. 

\begin{figure}[htbp]
    \begin{minipage}[t]{0.5\linewidth}
        \centering
    	\includegraphics[width=0.9\linewidth]
          {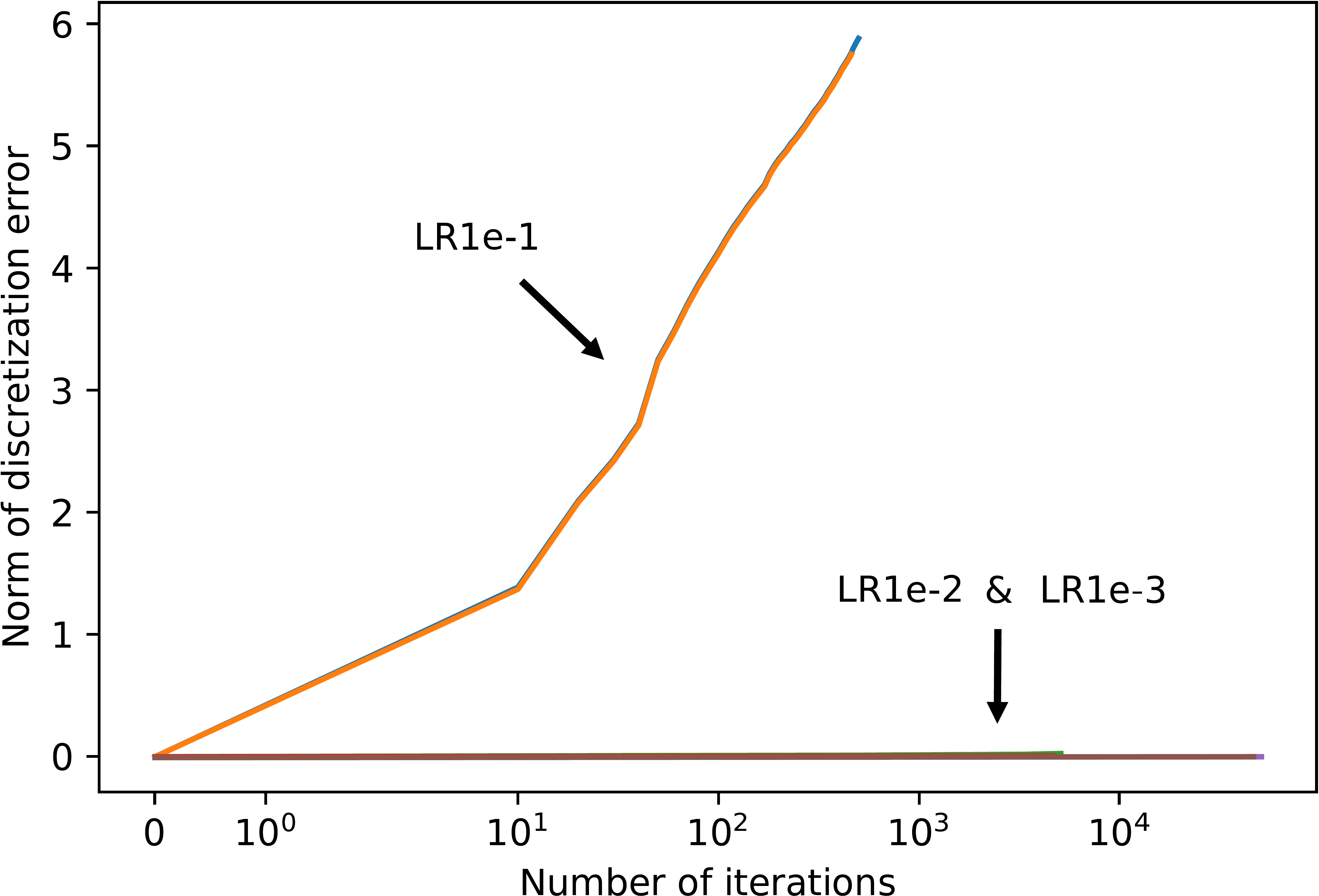}
    	\caption{\textbf{Discretization error explodes for large learning rate ($10^{-1}$).} LR means learning rate. Weight decay is $10^{-3}$. Curves include both GF and EoM. Relative discretization error is also shown in Appendix \ref{app: Supplementary Experiment}. See Section \ref{sec: Experiment} for experimental settings.}
    	\label{fig: discretization error}
    \end{minipage} \,
    \begin{minipage}[t]{0.5\linewidth}
        \centering
    	\includegraphics[width=0.9\linewidth]
          {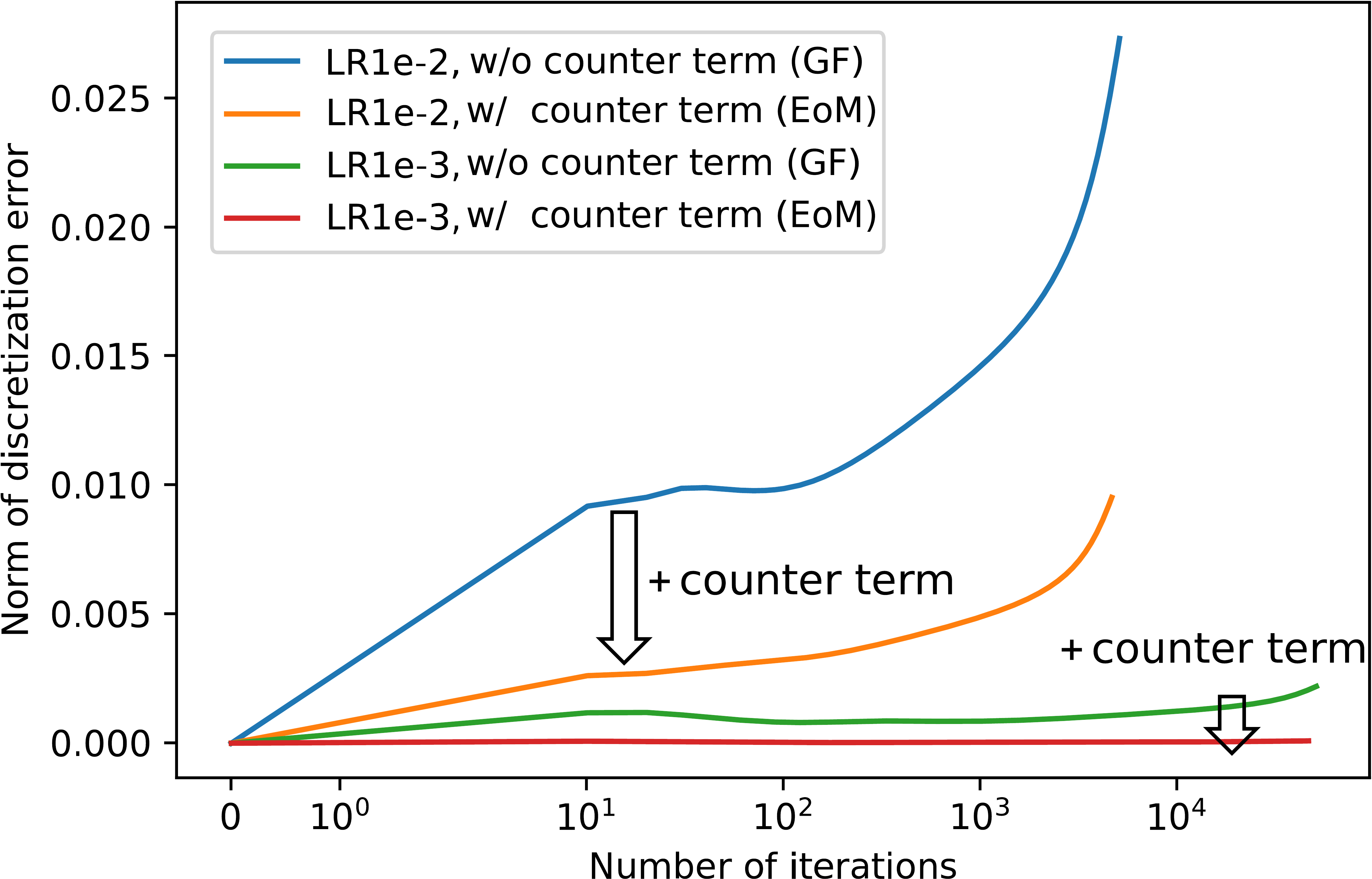}
    	\caption{\textbf{Discretization error of GF and EoM.} Figure \ref{fig: discretization error} is magnified. The counter term reduces discretization error as expected, and smaller learning rates give smaller discretization errors.}
    	\label{fig: discretization error magnf}
    \end{minipage}
\end{figure}

\section{Application: Scale- and Translation-invariant Layers} \label{sec: Applications}
To show the benefits of EoM, we finally apply our theory to two specific cases: scale-invariant layers \cite{van2017l2_effectiveLR_orginal1, zhang2018three} and translation-invariant layers \cite{kunin2021neural_mechanics_1, tanaka2021noethers_neural_mechanics_2}. 
Additionally, Appendix \ref{app: Broken Conservation Laws} provides an application to broken conservation laws \cite{kunin2021neural_mechanics_1}.
In the following, we simply focus on $\bx = \bm{0}$ and $\bx=\tbx_0$ to analyze the differences between $\bx = \bm{0}$ and $\bx \neq \bm{0}$.

\paragraph{Definitions}
Let us first introduce our notation. 
A transformation $\bpsi$ of $\bth \in \mbr^d$ with parameter $\al \in \mbr$ is said to be a \textit{symmetry transformation} of loss function $f$ if $f(\bpsi (\bth, \al)) = f(\bth)$. 
$\mbone_\mca \in \{0,1\}^d$ denotes the indicator vector of subspace $\mca \subset \mbr^d$ (e.g., $\mca$ is a linear layer in the DNN). For a scalar $\al \in \mbr$, we define $\al_\mca := \al \mbone_\mca + \mbone_{\mca^\mathsf{c}} \in \mbr$, where $\mca^\mathsf{c}$ is the complement of $\mca$. 
For a vector $\bth \in \mbr^d$, we define $\bth_\mca := \bth \odot \mbone_\mca \in \mbr^d$, where $\odot$ is the Hadamard element-wise product. 
For the gradient operator $\nabla = (\partial/\partial\th_1, ..., \partial/\partial\th_d)^\top$, we define $\nabla_\mca := \mbone_\mca \odot\nabla$.
We also define $r_\mca := || \btha ||$ and $\hbth_\mca := \btha / r_\mca$. 


\subsection{Learning Dynamics of Scale-invariant Layers} \label{sec: Learning Dynamics of Scale-invariant Layers}
In this section, we focus on scale-invariant layers. 
A scale-invariant layer $\mca$ is defined as a subspace that is invariant under the scale transformation $\bpsi (\bth, \al) := \al_\mca \bth = \al \btha + \bth_{\mca^\mathsf{c}}$ ($\al > 0$). 
For example, a linear layer immediately before a batch normalization layer is scale-invariant.
We see that for a better description of GD's discrete dynamics, we need modifications to the decay rate of $r_\mca$ that is previously derived in the continuous regime \cite{zhiyuan2020reconciling_NIPS2020}.
In addition, we show that EoM successfully reproduces the limiting dynamics of $r_\mca$ and \textit{angular update} \cite{wan2021spherical_motion_dynamics} at $t\rightarrow\infty$ that are previously derived in the discrete regime, while GF cannot.
In Appendix \ref{app: Equation of Motion for hbth}, we additionally show that there are crucial differences between GD and GF via the \textit{effective learning rate} of scale-invariant layers \cite{van2017l2_effectiveLR_orginal1, hoffer2018norm, zhang2018three, arora2018theoretical, chiley2019online, zhiyuan2020reconciling_NIPS2020, li2020exponential_learning_rate_ICLR2020, wan2021spherical_motion_dynamics, li2022robust, roburin2022spherical}.


\paragraph{EoM for $r$} 
We construct the EoM for $r_\mca$ (the EoM for $\hbtha$ is given in Appendix \ref{app: Equation of Motion for hbth} for completeness).
\begin{theorem}[EoM for $r_\mca$ and solution] \label{thm: EoM of r}
EoM (\ref{eq: gradient flow}) gives $\dot{r_\mca^2}(t) = -2 \la r_\mca^2(t) -2 \, \eta \, \bth_\mca(t) \cdot \bx(\btht)$.
Specifically, this is equivalent to:
\begin{align}
    \dot{r_\mca^2}(t) = -2 \la r_\mca^2(t) 
    \Longleftrightarrow \,\, r_\mca^2(t) = r_\mca^2(0) e^{-2\la t} 
    \label{eq: Solution of EoM for r w/ xi=0}
\end{align}
for $\bx = \bm{0}$ (GF) and
\begin{align}
    & \,\, \dot{r_\mca^2}(t) 
        = -2 (\la + \f{\eta\la^2}{2}) r_\mca^2(t) + \f{\eta}{r_\mca^2(t)} || \nabla_\mca f(\hbth_\mca(t) + \bthac (t)) ||^2  \\
    \Longleftrightarrow & \,\, r_\mca^2(t) = r_\mca^2(0) e^{-2\la (1 + \f{\eta \la}{2}) t} + \eta \int_0^t d\tau e^{- 2 \la (1 + \f{\eta \la}{2})(t-\tau)} \f{|| \nabla_\mca f (\hbth_\mca(\tau) + \bthac(\tau)) ||^2}{r_\mca^2(\tau)} 
    \label{eq: Solution of EoM for r w/ xi=xi0}
\end{align}
for $\bx = \tbx_0$ (EoM).
\end{theorem}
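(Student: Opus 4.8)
The plan is to differentiate $r_\mca^2 = \btha\cdot\btha$ along the EoM trajectory, use the scale-invariance of $f$ to eliminate the loss-gradient contribution, and then handle the counter-term contribution separately for $\bx=\bm{0}$ and $\bx=\tbx_0$. First I would write $\dot{r_\mca^2} = 2\,\btha\cdot\dot{\btha}$ and substitute the $\mca$-components of EoM (\ref{eq: gradient flow}), namely $\dot{\btha} = -\nabla_\mca f(\btht) - \la\btha - \eta\,\bx_\mca(\btht)$. This produces three terms: $-2\,\btha\cdot\nabla_\mca f$, $-2\la r_\mca^2$, and $-2\eta\,\btha\cdot\bx$ (using $\btha\cdot\bx_\mca=\btha\cdot\bx$ since $\btha$ is supported on $\mca$). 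The key observation is that scale-invariance $f(\bpsi(\bth,\al))=f(\bth)$ with $\bpsi(\bth,\al)=\al_\mca\bth$, differentiated in $\al$ at $\al=1$, gives the first-order identity $\btha\cdot\nabla f(\bth)=\btha\cdot\nabla_\mca f(\bth)=0$. This kills the first term and yields the general formula $\dot{r_\mca^2}=-2\la r_\mca^2-2\eta\,\btha\cdot\bx(\btht)$. Setting $\bx=\bm{0}$ leaves the linear ODE $\dot{r_\mca^2}=-2\la r_\mca^2$, whose integrating-factor solution is $r_\mca^2(t)=r_\mca^2(0)e^{-2\la t}$, i.e. Equation (\ref{eq: Solution of EoM for r w/ xi=0}).

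The substantive part is evaluating $\btha\cdot\tbx_0$ for $\bx=\tbx_0=\frac{1}{2}(\bg\cdot\nabla)\bg$ (Equation (\ref{eq: hat xi0})). Writing $g_i=\partial_i f+\la\th_i$, so that $\partial_j g_i = H_{ij}+\la\delta_{ij}$ with $H$ the Hessian of $f$, a direct expansion gives $(\bg\cdot\nabla)\bg = H\bg+\la\bg$, hence $\btha\cdot(\bg\cdot\nabla)\bg = \btha\cdot(H\bg)+\la\,\btha\cdot\bg$. I would then extract two further consequences of the symmetry. First, differentiating the identity $\sum_{i\in\mca}\th_i\,\partial_i f(\bth)=0$ once more in $\th_j$ gives $(\nabla_\mca f)_j+(H\btha)_j=0$, i.e. the Hessian relation $H(\bth)\,\btha=-\nabla_\mca f(\bth)$; combined with symmetry of $H$ this yields $\btha\cdot(H\bg)=(H\btha)\cdot\bg=-\nabla_\mca f\cdot\bg=-\|\nabla_\mca f\|^2$, where the weight-decay cross term $\la\,\nabla_\mca f\cdot\bth=\la\,\btha\cdot\nabla f$ vanishes again by the first-order identity. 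Second, $\btha\cdot\bg=\btha\cdot\nabla f+\la\,\btha\cdot\bth=\la r_\mca^2$. Assembling, $\btha\cdot\tbx_0=-\frac{1}{2}\|\nabla_\mca f\|^2+\frac{\la^2}{2}r_\mca^2$, so that $\dot{r_\mca^2}=-2(\la+\frac{\eta\la^2}{2})r_\mca^2+\eta\|\nabla_\mca f(\bth)\|^2$.

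To match the stated form with the normalized argument $\hbth_\mca+\bthac$, I would use the degree $-1$ homogeneity of $\nabla_\mca f$: differentiating $f(\al\btha+\bthac)=f(\bth)$ in $\th_i$ for $i\in\mca$ gives $\nabla_\mca f(\al\btha+\bthac)=\al^{-1}\nabla_\mca f(\bth)$, so taking $\al=1/r_\mca$ yields $\|\nabla_\mca f(\bth)\|^2=r_\mca^{-2}\,\|\nabla_\mca f(\hbth_\mca+\bthac)\|^2$. Substituting gives the EoM for $r_\mca^2$ exactly as claimed. Finally, treating the right-hand-side inhomogeneity as a known function of time along the trajectory, the linear ODE $\dot u=-c\,u+G(t)$ with $u=r_\mca^2$ and $c=2\la(1+\frac{\eta\la}{2})$ solves via the integrating factor $e^{ct}$ to the Duhamel/variation-of-parameters integral in Equation (\ref{eq: Solution of EoM for r w/ xi=xi0}).

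The main obstacle is the reduction of $\btha\cdot\tbx_0$: it hinges on correctly differentiating the invariance constraint to first and second order to obtain $\btha\cdot\nabla f=0$ and $H\btha=-\nabla_\mca f$, and on the homogeneity rescaling of $\nabla_\mca f$, all carried through the indicator/Hadamard notation. Every cancellation that collapses the counter-term contribution to the clean form $-\frac{1}{2}\|\nabla_\mca f\|^2+\frac{\la^2}{2}r_\mca^2$ depends on this bookkeeping, so the care is in tracking the support on $\mca$ rather than in any single hard estimate.
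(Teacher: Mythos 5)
Your proof is correct and follows essentially the same route as the paper's: differentiate $r_\mca^2$, kill the loss-gradient term via $\btha\cdot\nabla_\mca f(\bth)=0$, reduce $\btha\cdot\tbx_0$ to $\f{1}{2}\lt(-\|\nabla_\mca f(\bth)\|^2+\la^2 r_\mca^2\rt)$ using the Hessian identity $H(\bth)\btha=-\nabla_\mca f(\bth)$, rescale the gradient by homogeneity to get the $\hbth_\mca+\bthac$ argument, and finish with variation of parameters. The only cosmetic difference is that you obtain $H(\bth)\btha=-\nabla_\mca f(\bth)$ in one differentiation of the first-order invariance identity, whereas the paper splits this into separate lemmas ($H_\mca\btha+\nabla_\mca f=0$, $\nabla_\mcac\nabla_\mca^\top f\,\btha=0$, and $H\btha=H_\mca\btha$); the content is identical.
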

The proof is based on Equations (\ref{eq: gradient flow}, \ref{eq: hat xi0}) and given in Appendix \ref{app: Proof of thm: EoM of r}.
Equation (\ref{eq: Solution of EoM for r w/ xi=0}) gives $r_\mca^2(k\eta) = r_\mca^2(0)e^{-2\eta\la k}$ ($k \in \mbz_{\geq0}$) at discretization; therefore, $\eta\la$ is regarded as the decay rate of $r_\mca$ (\textit{intrinsic learning rate} \cite{zhiyuan2020reconciling_NIPS2020}).
This is originally discussed in the continuous regime (SDE) \cite{zhiyuan2020reconciling_NIPS2020}; however, we find that for a better description of the discrete dynamics of GD, the decay rate needs to be modified from $\eta\la$ to $\eta\la ( 1 + \f{\eta\la}{2})$  (see the exponent of Equation (\ref{eq: Solution of EoM for r w/ xi=xi0})).
This means that $r_\mca$ in GD decays faster than expected from a naive continuous dynamics (GF (\ref{eq: Solution of EoM for r w/ xi=0}) and SDE \cite{zhiyuan2020reconciling_NIPS2020}). See Appendix \ref{app: Supplementary Discussion} for higher-order corrections.



\paragraph{Limiting dynamics.}
We next derive the limiting dynamics ($t\rightarrow \infty$) of $r_\mca$.
\begin{corollary}[$r_\mca$ at equilibrium] \label{cor: r at equil.}
When $\bx=\bm{0}$ (GF), $r_\mca$ collapses to zero as $t \rightarrow \infty$. 
When $\bx = \tbx_0$ (EoM), assume that there exist two constants $r_{\mca *} \geq 0$ and $c_* \geq 0$ such that $r_\mca(t) \xrightarrow{t\rightarrow\infty} r_{\mca *}$ and $|| \nabla_\mca f(\hbth_\mca(t) + \bthac (t)) || \xrightarrow{t\rightarrow\infty} c_*$. Then $r_{\mca *}^2 = \sqrt{\f{\eta}{2\la+\eta\la^2}} c_*$.
\end{corollary}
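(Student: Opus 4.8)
The plan is to read both regimes directly off the solved dynamics in Theorem \ref{thm: EoM of r}, so that the corollary reduces to taking $t\to\infty$ limits in the two closed-form ODEs; no machinery beyond elementary analysis is needed.

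For the $\bx=\bm{0}$ (GF) case I would simply invoke the explicit solution $r_\mca^2(t)=r_\mca^2(0)e^{-2\la t}$ from Equation (\ref{eq: Solution of EoM for r w/ xi=0}). Since $\la>0$, the exponential factor tends to $0$, whence $r_\mca^2(t)\to 0$ and therefore $r_\mca(t)\to 0$; this is the claimed collapse of the scale-invariant weight norm, and it is a one-line limit.

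For the $\bx=\tbx_0$ (EoM) case I would pass to the limit in the ODE underlying Equation (\ref{eq: Solution of EoM for r w/ xi=xi0}), namely $\dot{r_\mca^2}(t) = -(2\la+\eta\la^2)\,r_\mca^2(t) + \frac{\eta}{r_\mca^2(t)}\,\|\nabla_\mca f(\hbth_\mca(t)+\bthac(t))\|^2$. Under the hypotheses $r_\mca(t)\to r_{\mca *}$ and $\|\nabla_\mca f(\hbth_\mca(t)+\bthac(t))\|\to c_*$ (with $r_{\mca *}>0$), the right-hand side is a continuous function of two convergent quantities, so it converges to the constant $L:=-(2\la+\eta\la^2)\,r_{\mca *}^2 + \frac{\eta}{r_{\mca *}^2}\,c_*^2$. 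I would then argue $L=0$: a differentiable function that converges to a finite limit while its derivative converges must have derivative-limit zero, since otherwise $\int_0^t \dot{r_\mca^2}(s)\,ds$ would diverge, contradicting $r_\mca^2(t)\to r_{\mca *}^2$. Setting $L=0$ and solving $(2\la+\eta\la^2)\,r_{\mca *}^2=\eta c_*^2/r_{\mca *}^2$ gives $r_{\mca *}^4=\eta c_*^2/(2\la+\eta\la^2)$, i.e. $r_{\mca *}^2=c_*\sqrt{\eta/(2\la+\eta\la^2)}$ using $c_*\geq 0$, as claimed. As an independent cross-check I would take the limit in the integral form of Equation (\ref{eq: Solution of EoM for r w/ xi=xi0}): the homogeneous term decays to $0$, and the exponential-kernel convolution of the convergent integrand $\|\nabla_\mca f\|^2/r_\mca^2\to c_*^2/r_{\mca *}^2$ evaluates in the limit to $\eta\,(c_*^2/r_{\mca *}^2)/(2\la+\eta\la^2)$, reproducing the same fixed-point equation.

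The main obstacle is the justification that $\dot{r_\mca^2}(t)\to 0$: the hypotheses only assert convergence of $r_\mca$ and of the gradient norm, not of the time derivative, so one cannot differentiate the limit naively. The clean resolution is the observation above, that the derivative \emph{equals} a continuous function of quantities that do converge, hence converges, and a convergent derivative of a convergent function is necessarily zero. I would also note the degenerate boundary case $r_{\mca *}=0$: it is consistent with the formula only when $c_*=0$ (both sides vanish), whereas $r_{\mca *}=0$ with $c_*>0$ is incompatible with a finite limit of $r_\mca$, so the generic and well-posed regime is $r_{\mca *}>0$, in which the division by $r_\mca^2$ is harmless.
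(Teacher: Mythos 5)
Your proof is correct and takes essentially the same approach as the paper: both read the two regimes off Theorem \ref{thm: EoM of r}, settling the GF case by the explicit exponential solution and the EoM case by setting the limiting value of $\dot{r_\mca^2}$ to zero in the ODE and solving for $r_{\mca *}^2$. If anything, you are more careful than the paper's own proof, which simply asserts $\dot{r}_\mca = 0$ at equilibrium ``by assumption,'' whereas you derive it from the stated hypotheses via the standard fact that a function converging to a finite limit whose derivative also converges must have derivative limit zero.
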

The proof follows from Theorem \ref{thm: EoM of r} and is given in Appendix \ref{app: Proof of cor: r at equil.}.
The non-zero counter term successfully reproduces $r_{\mca *}^2 \sim \sqrt{\eta/2\la}\,c_*$ \cite{van2017l2_effectiveLR_orginal1, wan2021spherical_motion_dynamics}, which is originally derived in the discrete regime (SGD), although our approach is continuous (EoM (\ref{eq: gradient flow})).
Without the counter term, we cannot explain this behavior because GF gives $r_\mca(t) \xrightarrow{t\rightarrow\infty} 0 (\neq \sqrt{\eta/2\la}\,c_*)$. 


We next derive the limiting dynamics of \textit{angular update} \cite{wan2021spherical_motion_dynamics}, which is designed to measure the temporal evolution of scale-invariant networks.
It is originally defined in the discrete regime: $\cos \De_k := \hat{\bth}_{\mca k} \cdot \hat{\bth}_{\mca k+1}$, where $\hbth_{\mca k} := \f{\mbone_\mca \odot \bth_{k}}{|| \mbone_\mca \odot \bth_{k} ||}$.
That is, $\De_k$ represents a single-step angular change in the weight parameters of the scale-invariant layers $\mca$. 
In the continuous regime, we can define $\cos \De(t) := \hbtha(t) \cdot \hbtha(t+\eta)$.
\begin{corollary}[$\De(t)$ at equilibrium] \label{cor: angular update at equil.}
Let us use $\bx = \tbx_0$. Suppose that the assumptions in Corollary \ref{cor: r at equil.} are satisfied. The angular update at equilibrium, denoted by $\De_*$, is given by $\cos \De_* = \f{1-\eta\la}{1 - \eta^2\la^2 /2} + O(\eta^3)$,
and thus, $\De_* = \sqrt{2\eta\la} + O((\eta\la)^{3/2})$.
\end{corollary}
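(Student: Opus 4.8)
The plan is to evaluate $\cos\De_* = \lim_{t\to\infty}\hbtha(t)\cdot\hbtha(t+\eta)$ directly from the EoM trajectory with $\bx=\tbx_0$, exploiting that at the norm-equilibrium of Corollary \ref{cor: r at equil.} the radius $r_\mca$ is stationary while the direction $\hbtha$ keeps rotating. First I would record the structural facts that make scale-invariant layers tractable: differentiating the invariance $f(\al\btha+\bthac)=f(\bth)$ in $\al$ at $\al=1$ gives $\btha\cdot\nabla_\mca f=0$, and homogeneity of degree $-1$ gives $\nabla_\mca f(\btha+\bthac)=r_\mca^{-1}\nabla_\mca f(\hbth_\mca+\bthac)$. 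Writing $\cos\De(t)=\frac{\btha(t)\cdot\btha(t+\eta)}{r_\mca(t)\,r_\mca(t+\eta)}$ and using $r_\mca(t)=r_\mca(t+\eta)=r_{\mca*}$ at equilibrium reduces the problem to the single inner product $\btha(t)\cdot\btha(t+\eta)$.

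Next I would expand $\btha(t)\cdot\btha(t+\eta)=r_{\mca*}^2+\sum_{n\ge1}\frac{\eta^n}{n!}\,\btha(t)\cdot\btha^{(n)}(t)$ and use that $r_\mca^2$ is constant at equilibrium, so $\frac{d^n}{dt^n}r_\mca^2=0$ for all $n\ge1$; this yields $\btha\cdot\dot{\btha}=0$, $\btha\cdot\ddot{\btha}=-\|\dot{\btha}\|^2$, and so on, expressing each term through the lower derivatives of $\btha$ supplied by $\dot{\btha}=-\nabla_\mca f-\la\btha-\eta\bx$. Evaluating $\dot{\btha}$ at equilibrium, the parallel component cancels (precisely the statement $\dot{r_\mca^2}=0$ of Theorem \ref{thm: EoM of r}, equivalently $\eta\,\btha\cdot\bx=-\la r_{\mca*}^2$), leaving a purely tangential $\dot{\btha}$ whose squared norm I would reduce, via the equilibrium relation $\eta\,\|\nabla_\mca f(\hbth_\mca+\bthac)\|^2/r_{\mca*}^2=(2\la+\eta\la^2)r_{\mca*}^2$ read off from Theorem \ref{thm: EoM of r}, to a quantity depending only on $\eta$ and $\la$. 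This is where the loss-dependent constants $c_*$ and $r_{\mca*}$ drop out and a universal answer becomes possible. An equivalent and cleaner bookkeeping is to use that the counter term is built so that the time-$\eta$ flow reproduces one GD step, giving the effective map $\btha(t+\eta)=(1-\eta\la)\btha(t)-\eta\nabla_\mca f+O(\eta^2)$ with $\nabla_\mca f\perp\btha(t)$, so that $\btha(t)\cdot\btha(t+\eta)=(1-\eta\la)r_{\mca*}^2$ and $r_\mca(t+\eta)^2=(1-\eta\la)^2 r_{\mca*}^2+\eta^2\|\nabla_\mca f\|^2$; substituting the equilibrium relation then collapses the whole expression to an explicit function of $\eta\la$.

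The hard part is the $O(\eta^2)$ coefficient, and it is genuinely delicate: at equilibrium the angular speed $\|\dot{\hbtha}\|=\|\nabla_\mca f(\btha+\bthac)\|/r_\mca$ scales like $\eta^{-1/2}$ (the equilibrium relation forces $\|\nabla_\mca f(\btha+\bthac)\|\sim r_{\mca*}\sqrt{2\la/\eta}$), so the angle swept in one step is $\De_*\sim\sqrt{\eta\la}$ rather than $O(\eta)$. Consequently the naive Taylor expansion in $\eta$ is not ordered term-by-term: the $n$-th derivative of $\btha$ contributes at order $\eta^{n}\cdot\eta^{-n/2}=\eta^{n/2}$, so terms up to $n=4$ (together with the tangential part of the counter term and the slow rotation of the gradient direction) all feed the $O(\eta^2)$ coefficient of $\cos\De_*$. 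The main technical task is therefore to collect these consistently — or to resum them by recognizing the near-rotational structure of the equilibrium motion — and to verify that after substituting the equilibrium relation the result is exactly $\frac{1-\eta\la}{1-\eta^2\la^2/2}+O(\eta^3)$; checking that the half-integer powers of $\eta$ cancel is the delicate checkpoint.

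Finally, once the cosine is established, $\De_*$ follows by inverting: writing $\cos\De_*=1-\eta\la+\tfrac12\eta^2\la^2+O(\eta^3)$ and matching $\cos\De_*=1-\tfrac12\De_*^2+\tfrac1{24}\De_*^4+\cdots$ gives $\De_*^2=2\eta\la+O(\eta^2\la^2)$, hence $\De_*=\sqrt{2\eta\la}+O((\eta\la)^{3/2})$. I expect the leading relation $\De_*=\sqrt{2\eta\la}$ to be robust no matter how carefully the $O(\eta^2)$ cosine coefficient is pinned down, since it depends only on the $-\eta\la$ term, which already emerges at the first nontrivial order and is unaffected by the subtleties above.
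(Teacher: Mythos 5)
Your proposal is correct in substance and its skeleton is the same as the paper's proof in Appendix \ref{app: Proof of cor: angular update at equil.}: expand the one-step overlap, kill the first-order gradient contribution with the scale-invariance identity $\btha \cdot \nabla_\mca f(\bth) = 0$, trade $c_*$ and $r_{\mca *}$ for functions of $\eta\la$ via Corollary \ref{cor: r at equil.}, and invert the cosine. The differences are in bookkeeping and in candor about rigor. You impose stationarity of $r_\mca^2$ from the outset (yielding $\btha \cdot \dbth_\mca = 0$, $\btha \cdot \ddbth_\mca = -||\dbth_\mca||^2$, \dots), whereas the paper expands $\btha(t+\eta)$ and $r_\mca(t+\eta)$ along a generic trajectory and substitutes equilibrium only at the end; there the decisive algebraic fact is that $-\eta^2 \tbx_{0\mca}$ exactly cancels the $\f{\eta^2}{2}\ddbth_\mca$ term in the numerator (Equation (\ref{tmpeq: 146})), leaving $(1-\eta\la)r_\mca^2$ --- which is precisely your ``effective one-step map'' bookkeeping. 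More importantly, the order-counting subtlety you flag is genuine and is glossed over by the paper: the paper truncates at ``$O(\eta^3)$'' treating gradients as $O(1)$, yet at equilibrium $\eta ||\nabla_\mca f(\btht)||^2 = (2\la + \eta\la^2) r_{\mca *}^2$, so the formally second-order denominator term is really $O(\eta\la) r_{\mca *}^2$ and the discarded remainder is not uniformly $O(\eta^3)$. The exact $\eta^2\la^2$ coefficient is therefore delicate in both treatments; symptomatically, the paper's own penultimate display evaluates to $\f{1-\eta\la}{1+\eta^2\la^2/2}$, which disagrees in sign at $O(\eta^2)$ with the stated $\f{1-\eta\la}{1-\eta^2\la^2/2}$. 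Neither your plan nor the paper pins that coefficient down rigorously, but both secure $\cos\De_* = 1 - \eta\la + O(\eta^2\la^2)$, and, as your final inversion correctly shows, that is all the claim $\De_* = \sqrt{2\eta\la} + O((\eta\la)^{3/2})$ requires.
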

The proof is based on Corollary \ref{cor: r at equil.} and is given in Appendix \ref{app: Proof of cor: angular update at equil.}.
EoM successfully reproduces $\De_* \sim \sqrt{2\eta\la}$ \cite{wan2021spherical_motion_dynamics}, which is originally derived in the discrete regime (SGD), although EoM is continuous itself.
On the other hand, GF cannot explain the limiting dynamics of $\De(t)$ because when $\bx = \bm{0}$, $r(t)$ goes to zero as $t\rightarrow\infty$ (Equation (\ref{eq: Solution of EoM for r w/ xi=0})), and thus, $\cos \De(t) = \f{\bthat}{r_\mca (t)}\cdot\f{\btha(t+\eta)}{r_\mca(t+\eta)}$ is ill-defined.
In summary, there are gaps between GF and GD, and our discussion above indicates that the counter term is inevitable to describe the actual dynamics of GD.

\subsection{Learning Dynamics of Translation-invariant Layers}
Next, we apply EoM to translation-invariant layers.
To the best of our knowledge, no study analyzes the temporal evolution of translation-invariant layers except for \cite{kunin2021neural_mechanics_1} and \cite{tanaka2021noethers_neural_mechanics_2}, 
where only the sum of weights is their focus, while we derive the dynamics of the whole weights.
A translation-invariant layer $\mca$ is defined as a layer that is invariant under the translation transformation $\bpsi(\bth, \al) := \bth + \al \mbone_\mca$ ($\al \in \mbr$). 
For example, a linear layer immediately before the softmax layer is translation-invariant.
In the following, we derive EoM and show that its theoretical prediction of decay rates dramatically matches empirical results, indicating the importance of the counter term.
In Appendix \ref{app: Equation of Motion for bthapara}, we additionally discuss the differences between GF and GD in translation-invariant layers. 

For convenience, we first decompose $\bth_\mca$ to two vectors (Figure \ref{fig: dynamics of trln. inv. layers}); $\bthperp$ is orthogonal to $\nabla f(\bth)$, and $\bthpara$ is orthogonal to $\bthperp$. Here, note that $\nabla f(\bth)$ is orthogonal to $\mbone_\mca$ because of translation invariance; in fact, differentiating both sides of $f (\bth + \al \mbone_\mca) = f(\bth)$ with respect to $\al$ and setting $\al = 0$, we have $ \mbone_\mca \cdot \nabla f (\bth) = 0$ (see also Lemma \ref{lem: gradient constraint and relation of trln-inv. layers} in Appendix \ref{app: Proof of thm: EoM of bthperp (translation)}). 
Formally, we define $\bthaperp$, $\bthapara$, and the projection matrix $P$ as
$\bthperp := P \btha = \f{\mbone_\mca \cdot \btha}{d_\mca} \mbone_\mca$, 
$\bthpara := (I - P) \btha = \btha - \bthperp$, and
$P := \f{1}{d_\mca} \mbone_\mca \mbone_\mca^\top$,
where $d_\mca$ is the dimension of $\mca$.

We construct the EoM for $\bthaperp$ (the EoM for $\bthapara$ is given in Appendix \ref{app: Equation of Motion for bthapara} for completeness). 
\begin{theorem}[EoM for $\bthperp$] \label{thm: EoM of bthperp (translation)}
EoM (\ref{eq: gradient flow}) gives $\dot{\bth}_{\mca\perp} (t) = -\la \bthperpt - \eta P\bx(\btht)$.
Specifically, this is equivalent to 
$\dot{\bth}_{\mca\perp} (t) = -\la \bthperpt 
 \Longleftrightarrow 
\bthperpt = \bthperp(0) e^{-\la t}$
for $\bx = \bm{0}$ (GF) and
$\dot{\bth}_{\mca\perp} (t) = - (\la + \f{\eta\la^2}{2}) \bthperpt
\Longleftrightarrow  
\bthperpt = \bthperp(0) e^{- (\la + \f{\eta\la^2}{2}) t}$
for $\bx = \tbx_0$ (EoM).
\end{theorem}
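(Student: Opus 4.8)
The plan is to project the EoM onto the symmetry direction $\mbone_\mca$ and let translation invariance collapse the right-hand side. Since $P=\f{1}{d_\mca}\mbone_\mca\mbone_\mca^\top$ is a constant, idempotent matrix and one checks $P\btht=\bthperp$ (both equal $\f{1}{d_\mca}(\mbone_\mca\cdot\btht)\mbone_\mca$), the time derivative commutes with the projection, so $\dbthperpt=P\dbtht$. Substituting EoM (\ref{eq: gradient flow}) in the form $\dbtht=-\nabla f(\btht)-\la\btht-\eta\bx(\btht)$ and invoking the first-order invariance identity $\mbone_\mca\cdot\nabla f(\btht)=0$ (Lemma \ref{lem: gradient constraint and relation of trln-inv. layers}), the loss-gradient term is annihilated, $P\nabla f(\btht)=\bm{0}$, while $P(\la\btht)=\la\bthperp$. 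This already yields the general formula $\dbthperpt=-\la\bthperpt-\eta P\bx(\btht)$ asserted in the theorem.

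For $\bx=\bm{0}$ (GF) the equation reduces to the linear autonomous ODE $\dbthperpt=-\la\bthperpt$, which integrates immediately to $\bthperpt=\bthperp(0)e^{-\la t}$.

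The substantive step is evaluating $P\tbx_0$ for $\bx=\tbx_0$ (EoM). Using Equation (\ref{eq: hat xi0}) I would write $\tbx_0=\f14\nabla\|\bg\|^2=\f12(H(\btht)+\la I)\bg(\btht)$, where $H$ is the Hessian of $f$. The crux is the second-order invariance relation $H(\btht)\mbone_\mca=\bm{0}$, obtained by differentiating $\nabla f(\btht+\al\mbone_\mca)=\nabla f(\btht)$ once in $\al$ at $\al=0$ (equivalently, differentiating $f(\btht+\al\mbone_\mca)=f(\btht)$ twice). Left-multiplying by $\mbone_\mca^\top$ kills the Hessian contribution, leaving $\mbone_\mca^\top\tbx_0=\f{\la}{2}\,\mbone_\mca\cdot\bg(\btht)=\f{\la}{2}(\mbone_\mca\cdot\nabla f+\la\,\mbone_\mca\cdot\btht)=\f{\la^2}{2}\,\mbone_\mca\cdot\btht$, again using $\mbone_\mca\cdot\nabla f=0$. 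Hence $P\tbx_0=\f{\la^2}{2}P\btht=\f{\la^2}{2}\bthperp$, so the EoM becomes $\dbthperpt=-(\la+\f{\eta\la^2}{2})\bthperpt$ and integrates to $\bthperpt=\bthperp(0)e^{-(\la+\f{\eta\la^2}{2})t}$.

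I expect the only nontrivial point to be this $P\tbx_0$ computation, and within it the recognition that translation invariance annihilates not merely $\nabla f$ along $\mbone_\mca$ but the entire Hessian action $H\mbone_\mca$; this is what converts the apparently complicated counter term into a pure multiple of $\bthperp$ and produces the corrected decay rate $\la+\f{\eta\la^2}{2}$. The remaining bookkeeping — that $P$ commutes with $d/dt$ and that $P\btht=\bthperp$ — is routine and follows from $P$ being a constant projection ($P^2=P$), so both cases are autonomous linear equations with a constant scalar coefficient and integrate by inspection.
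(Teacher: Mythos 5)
Your proof is correct and follows essentially the same route as the paper's: project the EoM with $P$, kill the gradient term via $\mbone_\mca\cdot\nabla f=0$, and use $H(\bth)\mbone_\mca=\bm{0}$ to reduce $P\tbx_0$ to $\f{\la^2}{2}\bthperp$ before integrating the resulting linear ODE. The only cosmetic difference is that you derive $H\mbone_\mca=\bm{0}$ by differentiating $\nabla f(\bth+\al\mbone_\mca)=\nabla f(\bth)$ in $\al$, whereas the paper's Lemma \ref{lem: gradient constraint and relation of trln-inv. layers} differentiates $\mbone_\mca\cdot\nabla f(\bth)=0$ in $\bth$; these are equivalent.
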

The proof is based on Equations (\ref{eq: gradient flow}, \ref{eq: hat xi0}) and is given in Appendix \ref{app: Proof of thm: EoM of bthperp (translation)}.
$\bthaperp$ monotonically collapses to zero as $t\rightarrow \infty$ in either case of $\bx = \bm{0}$ or $\bx \neq \bm{0}$; thus,
as $t$ increases, the dynamics is restricted onto the subspace orthogonal to $\bthaperp$ (Figure \ref{fig: dynamics of trln. inv. layers}).
The decay rate is corrected by the counter term from $\eta \la$ to $\eta \la + \f{\eta^2\la^2}{2}$, as is also done for $r_\mca$ in Section \ref{sec: Learning Dynamics of Scale-invariant Layers}.
Therefore, the $\bthperp$ of GD decays faster than that of GF.
Figure \ref{fig: decays of bthaperp} and Table \ref{tab: decay rates of bthaperp} support our findings. In particular, Table \ref{tab: decay rates of bthaperp} shows that the decay rates predicted by EoM dramatically match those of GD, indicating the importance of the counter term.

\begin{figure}[htbp]
    \begin{minipage}[t]{0.4\linewidth}
        \centering
        \includegraphics[width=0.85\linewidth]
        {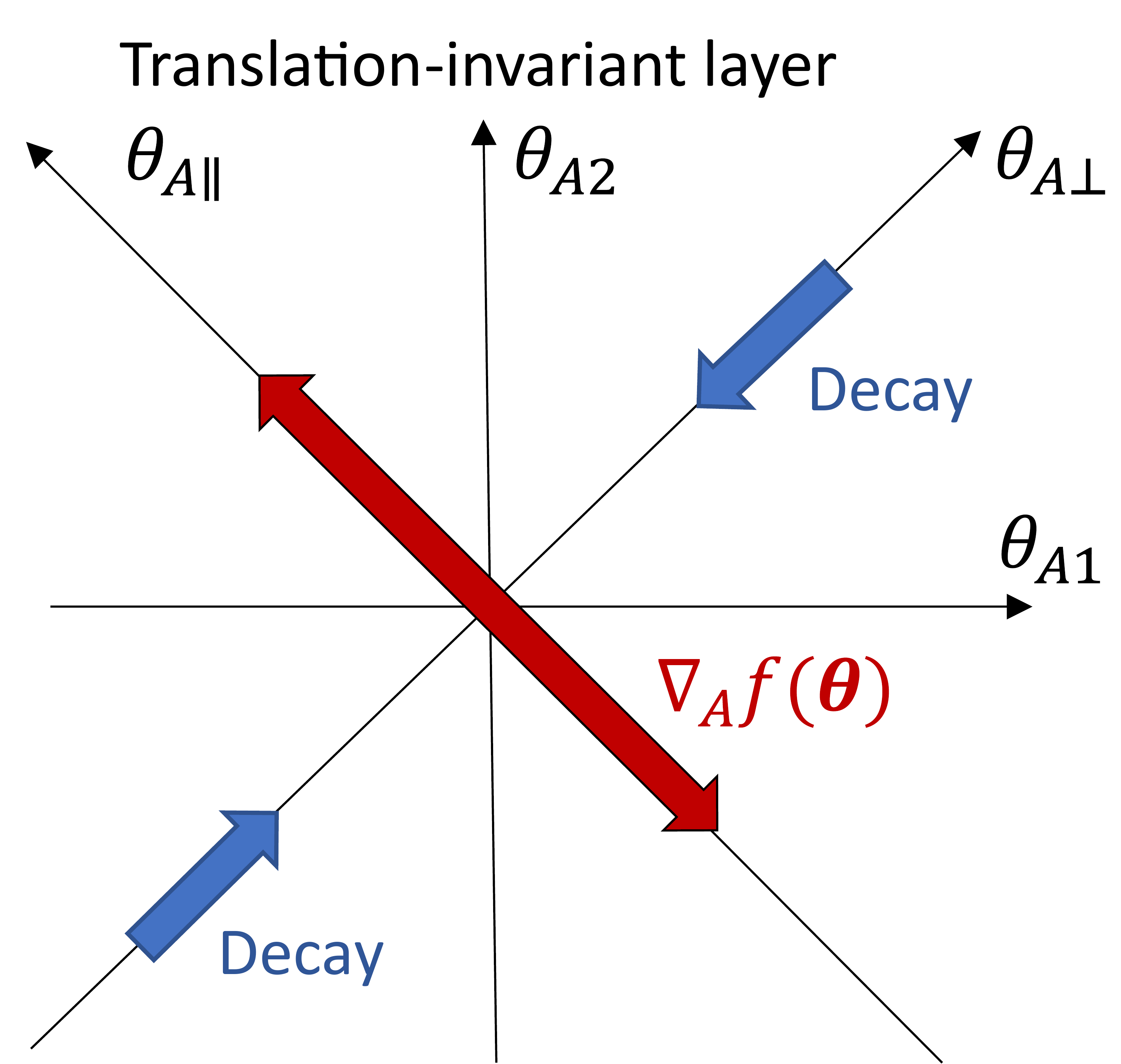}
        \caption{\textbf{Learning dynamics of translation-invariant layer.} Here, $\btha = (\th_{\mca 1}, \th_{\mca 2})^\top$. 
        $\bthaperp$ decays to $\bm{0}$ (also shown in Figure \ref{fig: decays of bthaperp}). The decay of GD is faster than that of GF (Theorem \ref{thm: EoM of bthperp (translation)}). 
        As $t$ increases, the dynamics is restricted onto the subspace orthogonal to $\bthaperp$. 
        }
        \label{fig: dynamics of trln. inv. layers}
    \end{minipage} \,
    \begin{minipage}[t]{0.6\linewidth}
        \centering
        \includegraphics[width=0.9\linewidth]
        {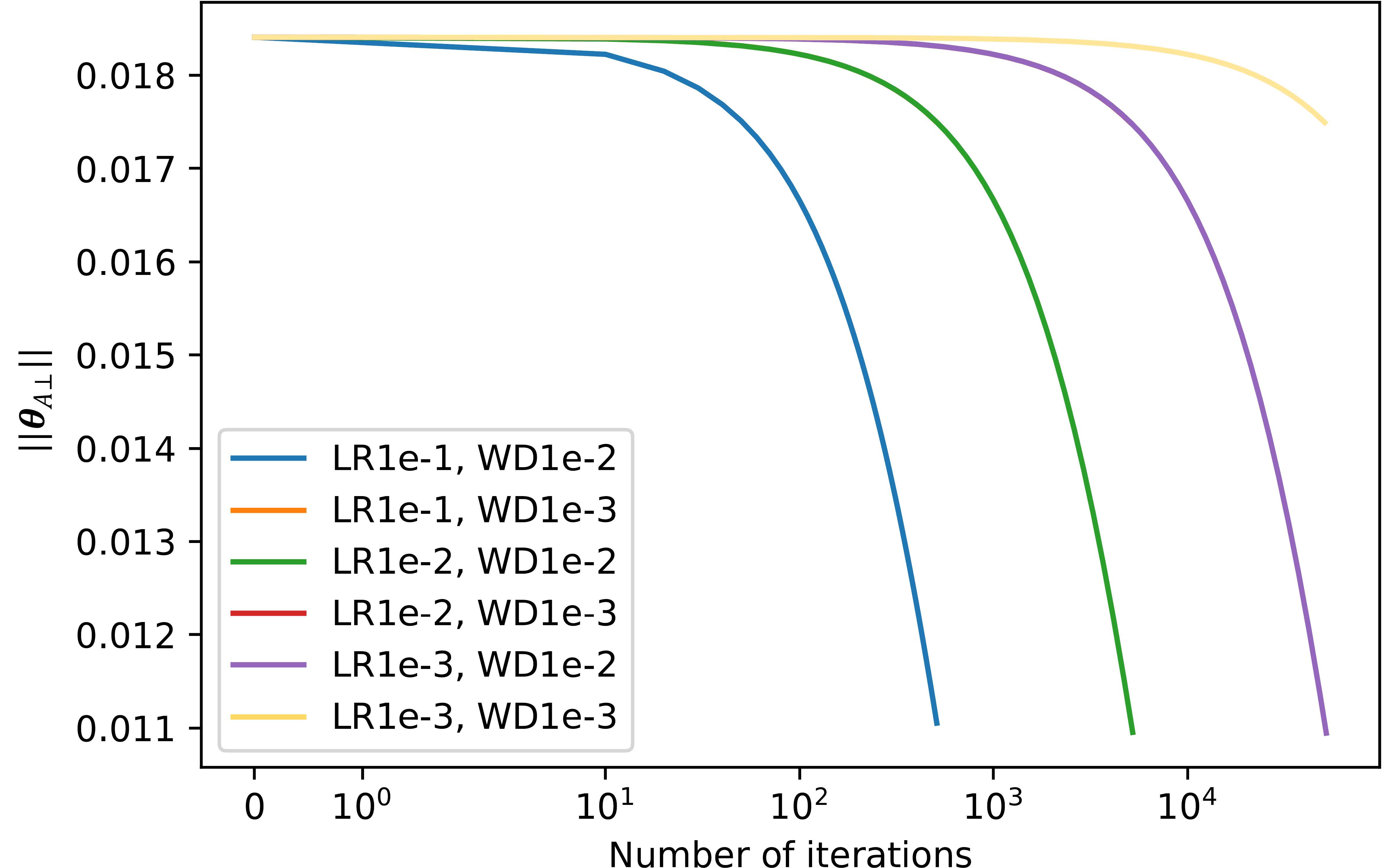}
        \caption{\textbf{Decay of $||\bthaperp||$ (GD).} $||\bthaperp||$ monotonically decays to zero, as suggested by Theorem \ref{thm: EoM of bthperp (translation)}. $\mca$ is translation-invariant layer. LR and WD mean learning rate and weight decay, respectively. Note that the orange and green curves (LR1e-1, WD1e-3 and LR1e-2, WD1e-2) and the red and purple curves (LR1e-2, WD1e-3 and LR1e-3, WD1e-2) totally overlap. The decay rates of all curves are given in Table \ref{tab: decay rates of bthaperp}. See Section \ref{sec: Experiment} for experimental settings.}
        \label{fig: decays of bthaperp}
    \end{minipage}
\end{figure}

\begin{table}[htbp]
\caption{\textbf{Decay rates of $||\bthaperp||$.} The theoretical predictions by EoM (third column) dramatically match experimental results of GD (fourth column) much better than GF (second column), indicating the importance of the counter term. LR and WD mean learning rate and weight decay, respectively. The colors correspond to those in Figure \ref{fig: decays of bthaperp}. See Section \ref{sec: Experiment} for experimental settings.}
\label{tab: decay rates of bthaperp}
\centering
\begin{tabular}{llll}
    \hline
    (LR, WD) & Theory (GF) & Theory (EoM: \textbf{Ours}) & Experiment (GD)\\
    \hline \hline 
    ($10^{-1}, 10^{-2}$) ({\color{blue} blue}) & $10^{-3}$ & $1.0005 \times 10^{-3}$ & $1.0005003484995967 \times 10^{-3}$ \\
    ($10^{-1}, 10^{-3}$) ({\color{orange} orange}) & $10^{-4}$ & $1.00005 \times 10^{-4}$ & $1.0000500182363355 \times 10^{-4}$ \\
    ($10^{-2}, 10^{-2}$) ({\color{teal} green}) & $10^{-4}$ & $1.00005 \times 10^{-4}$ & $1.0000499809795858 \times 10^{-4}$ \\
    ($10^{-2}, 10^{-3}$) ({\color{red} red}) & $10^{-5}$ & $1.000005 \times 10^{-5}$ & $1.0000049776814671 \times 10^{-5}$ \\
    ($10^{-3}, 10^{-2}$) ({\color{violet} purple}) & $10^{-5}$ & $1.000005 \times 10^{-5}$ & $1.0000050475312426 \times 10^{-5}$ \\
    ($10^{-3}, 10^{-3}$) ({\color{yellow} yellow}) & $10^{-6}$ & $1.0000005 \times 10^{-6}$ & $1.0000005475009833 \times 10^{-6}$ \\
    \hline
\end{tabular}
\end{table}

\section{Experiment} \label{sec: Experiment}
We explain our experimental settings for Figures \ref{fig: th vs ex discerr}--\ref{fig: decays of bthaperp} and Table \ref{tab: decay rates of bthaperp}.
Our network consists of a first linear layer, swish activation \cite{ramachandran2017swish_orginal_paper}, second linear layer, batch normalization \cite{ioffe2015batch_norm_original_paper}, third linear layer, and last softmax layer. Cross-entropy is used for the loss function. We note that the second linear layer is scale-invariant, and the last linear layer is translation invariant. The batch normalization uses fixed statistics to keep the scale invariance of the second linear layer. Swish is chosen to ensure differentiability. None of the linear layers have a bias term. 
The dataset is the training set of MNIST \cite{MNIST}, and thus, the batch size is 60,000. 
Gradient descent is used for the optimizer.
We use 64-bits of precision for all computations.
To simulate GF and EoM, we use a sufficiently small learning rate ($10^{-5}$).
The results are produced from only one random seed to save on computational costs, but we confirm that different random seeds lead to similar results.
More detailed information is given in Appendix \ref{app: Details of Experiment} and our code.
In all experiments, we use $\bx = \tbx_0$ for EoM. We do not include higher-order counter terms, such as $\tbx_1$, because they require third and higher order derivatives of the loss function and are thus extremely memory-consuming. We could circumvent this issue, e.g., by applying Hessian-free optimization \cite{martens2010hessian-free_optimization}, but this is out of our current scope.

\section{Conclusion and Limitations} \label{sec: Conclusion and Limitations}
In this work, to fill the critical gap between GF and GD, we add a counter term to GF and obtain EoM, a continuous differential equation that precisely describes the discrete learning dynamics of GD.
To show to what extent GF and EoM are precise in describing GD's discrete dynamics, we derive the leading order of discretization error, as is often missed in the literature on the continuous approximation of discrete GD algorithms.
We further derive a sufficient condition for learning rates for the discretization error to be small.
We apply our theory to two specific cases, scale- and translation-invariant layers, indicating the importance of the counter term for a better description of the discrete learning dynamics of GD.
Our experimental results support our theoretical findings.
Throughout this paper, we focus only on GD and GF to expose the ideas simply, and our study does not include stochasticity (e.g., SGD and SDE), acceleration methods (e.g., momentum and Nesterov \cite{Nesterov}), or adaptive optimizers (e.g., Adam \cite{Adam}). 
Nonetheless, they could be combined with our analysis, for example, using error analysis of SDEs \cite{li2019stochastic_modified_equations_math_found_JMLR2019, feng2020uniform}, continuous-time accelerated methods \cite{krichene2015accelerated, su2016differential, krichene2017acceleration, xu2018accelerated,xu2018continuous_SDE_is_good_for_SGD, aujol2019optimal, kovachki2021continuous}, and continuous-time Adam \cite{barakat2021continuous_Adam}. 
See Appendix \ref{app: Supplementary Discussion} for more discussions.
Therefore, our study could be extended to import continuous analysis to the discrete analysis of various GD algorithms.
In this sense, our work bridges discrete and continuous analyses of GD algorithms.




\section*{Acknowledgment}
We thank Shuhei M. Yoshida for his insightful comments on the dynamics of scale-invariant layers and the experimental settings.
We also thank Hidenori Tanaka for his discussion that inspired us to start this study.

\bibliographystyle{unsrt}
\bibliography{References} 

\section*{Checklist}
\begin{enumerate}

\item For all authors...
\begin{enumerate}
  \item Do the main claims made in the abstract and introduction accurately reflect the paper's contributions and scope?
    \answerYes{}
  \item Did you describe the limitations of your work?
    \answerYes{} See Sections \ref{sec: Experiment} and \ref{sec: Conclusion and Limitations} and Appendix \ref{app: Supplementary Discussion}.
  \item Did you discuss any potential negative societal impacts of your work?
    \answerNA{}
  \item Have you read the ethics review guidelines and ensured that your paper conforms to them?
    \answerYes{}
\end{enumerate}

\item If you are including theoretical results...
\begin{enumerate}
  \item Did you state the full set of assumptions of all theoretical results?
    \answerYes{}
  \item Did you include complete proofs of all theoretical results?
    \answerYes{} See Appendix \ref{app: Proofs}.
\end{enumerate}

\item If you ran experiments...
\begin{enumerate}
  \item Did you include the code, data, and instructions needed to reproduce the main experimental results (either in the supplemental material or as a URL)?
    \answerYes{} See the code in the supplemental material.
  \item Did you specify all the training details (e.g., data splits, hyperparameters, how they were chosen)?
    \answerYes{} See Section \ref{sec: Experiment} and Appendix \ref{app: Details of Experiment}.
  \item Did you report error bars (e.g., with respect to the random seed after running experiments multiple times)?
    \answerNo{} To save computational costs, we do not run experiments with multiple random seeds, but we confirm that different random seeds give similar results, as stated in Section \ref{sec: Experiment}.
  \item Did you include the total amount of compute and the type of resources used (e.g., type of GPUs, internal cluster, or cloud provider)?
    \answerYes{} See Appendix \ref{app: Details of Experiment}.
\end{enumerate}

\item If you are using existing assets (e.g., code, data, models) or curating/releasing new assets...
\begin{enumerate}
  \item If your work uses existing assets, did you cite the creators?
    \answerYes{} See our code.
  \item Did you mention the license of the assets?
    \answerYes{} See our code.
  \item Did you include any new assets either in the supplemental material or as a URL?
    \answerYes{} See our code.
  \item Did you discuss whether and how consent was obtained from people whose data you're using/curating?
    \answerNA{} We do not use such data.
  \item Did you discuss whether the data you are using/curating contains personally identifiable information or offensive content?
    \answerNA{} The data we are using do not include such information.
\end{enumerate}

\item If you used crowdsourcing or conducted research with human subjects...
\begin{enumerate}
  \item Did you include the full text of instructions given to participants and screenshots, if applicable?
    \answerNA{}
  \item Did you describe any potential participant risks, with links to Institutional Review Board (IRB) approvals, if applicable?
    \answerNA{}
  \item Did you include the estimated hourly wage paid to participants and the total amount spent on participant compensation?
    \answerNA{}
\end{enumerate}

\end{enumerate}

\appendix
\clearpage
\section*{Appendices}
\section{Proofs} \label{app: Proofs}
\subsection{Proof of Theorem \ref{thm: EoDE}} 
\label{app: Proof of thm: EoDE}
\begin{proof}
The integral form of Taylor's theorem gives
\begin{align}
    \bth (k\eta+\eta) - \bth(k\eta)
    =& \eta \dot{\bth} (k\eta) + \eta^2 \int_0^1 ds \ddot{\bth} (\eta(k+s)) (1-s) \nn
    =& - \eta \bm{g}(\bth(k\eta)) -\eta^2\bm{\xi}(\bth(k\eta)) +\eta^2\int_0^1 ds \ddot{\bth}(\eta(k+s)) (1-s) \, .
    \label{tmpeq: continuous}
\end{align}
Remember the definition of the discrete gradient descent:
\begin{align}
    \bth_{k+1} - \bth_k = - \eta \bm{g}(\bth_k) \, .
    \label{tmpeq: discrete}    
\end{align}
Subtracting Equation (\ref{tmpeq: discrete}) from Equation (\ref{tmpeq: continuous}), we have
\begin{align}
    \berr_{k+1} - \berr_k 
    &= -\eta ( \bm{g}(\bth(k\eta)) - \bm{g} (\bth_k))
        -\eta^2\bm{\xi}(\bth(k\eta)) +\eta^2\int_0^1ds\ddot{\bth}(\eta(k+s)) (1-s) \\
    &= -\eta ( \bm{g}(\bth(k\eta)) - \bm{g} (\bth(k\eta) - \berr_k))
        -\eta^2\bm{\xi}(\bth(k\eta)) +\eta^2\int_0^1ds\ddot{\bth}(\eta(k+s)) (1-s) \, . 
\end{align}
\end{proof}

\subsection{Proof of Theorem \ref{thm: Leading order of EoDE}} 
\label{app: Proof of thm: Leading order of EoDE}
\begin{proof}
The proof is by induction. For $k=0$, $\berr_0 = O(\eta^\ga)$ by assumption. If $\berr_k = O(\eta^\ga)$ for $k \geq 1$, Theorem \ref{thm: EoDE} gives
\begin{align}
    &\berr_{k+1} 
    = \berr_k - \eta ( \bg(\bth(k\eta)) - \bg(\bth(k\eta) - \berr_k) ) + \bm{\La}(\bth(k\eta)) 
    = O(\eta^\ga) + O(\eta^{\ga+1}) + O(\eta^\ga) = O(\eta^\ga) \, . 
\end{align}
$\eta ( \bg(\bth(k\eta)) - \bg(\bth(k\eta) - \berr_k) ) = O(\eta^{\ga+1})$ follows from Taylor's expansion of $\bg(\bth(k\eta) - \berr_k)$ around $\bth(k\eta)$ and from assumption $\berr_k = O(\eta^\ga)$:
\begin{align}
    - \eta ( \bg(\bth(k\eta)) - \bg(\bth(k\eta) - \berr_k) ) = \eta ( \berr_k \cdot \nabla \bg (\bthketa) + O(|| \berr_k ||^2)) 
    = O(\eta^{\ga+1}) \, .
\end{align}
\end{proof}

\subsection{Proof of Theorem \ref{thm: Solution of EoLDE}} \label{app: Proof of thm: Solution of EoLDE}
\begin{proof}
The proof of Theorem \ref{thm: Solution of EoLDE} consists of the following three Lemmas, all of which are proved in the following sections.

\begin{lemma} \label{lem: tmplem1}
\begin{align}
    \int_0^1 ds \ddot{\bth} (\eta(k+s)) (1-s) = \sum_{n=0}^\infty \f{\eta^n}{(n+2)!} \f{d^{n+2}}{dt^{n+2}} \bth (k\eta)
    \label{tmpeq: eq in tmplem1}
\end{align}
\end{lemma}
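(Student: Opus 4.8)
The plan is to expand the integrand in a Taylor series about the base point $t = k\eta$, integrate term by term, and recognize the resulting coefficients as those of the claimed sum. The one conceptual choice is to expand the \emph{acceleration} $\ddot{\bth}$ about $k\eta$ (rather than $\bth$ itself), since that is what appears under the integral and since the whole analysis is organized as a formal power series in $\eta$.

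First I would write the argument of the second derivative as $\eta(k+s) = \eta k + \eta s$ and Taylor-expand in the increment $\eta s$ about $s=0$:
\begin{align}
    \ddot{\bth}(\eta(k+s)) = \sum_{n=0}^\infty \f{(\eta s)^n}{n!} \f{d^{n+2}}{dt^{n+2}} \bth (k\eta) \, ,
\end{align}
where I have used $\f{d^n}{dt^n}\ddot{\bth} = \f{d^{n+2}}{dt^{n+2}}\bth$. This is legitimate under the paper's blanket smoothness assumption. Substituting into the integral, interchanging summation and integration, and pulling out the $s$-independent derivatives gives
\begin{align}
    \int_0^1 ds\, \ddot{\bth}(\eta(k+s))(1-s) = \sum_{n=0}^\infty \f{\eta^n}{n!}\, \f{d^{n+2}}{dt^{n+2}}\bth (k\eta) \int_0^1 ds\, s^n (1-s) \, .
\end{align}

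The remaining integral is elementary: $\int_0^1 s^n(1-s)\,ds = \f{1}{n+1} - \f{1}{n+2} = \f{1}{(n+1)(n+2)}$. I would then simplify the prefactor using $(n+2)! = (n+2)(n+1)\,n!$, so that $\f{1}{n!\,(n+1)(n+2)} = \f{1}{(n+2)!}$, which reproduces Equation (\ref{tmpeq: eq in tmplem1}) exactly.

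The computation is essentially routine; the only step requiring care is the interchange of the infinite sum with the integral, which is justified by the smoothness assumption (equivalently, by reading the identity as an equality of formal power series in $\eta$, as is done throughout the paper). I do not expect any genuine obstacle beyond making that exchange and correctly evaluating the Beta-type integral.
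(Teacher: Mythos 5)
Your proof is correct and follows essentially the same route as the paper's: both Taylor-expand $\ddot{\bth}$ about the base point $k\eta$ under the blanket smoothness assumption, exchange sum and integral, and evaluate the Beta-type integral $\int_0^1 s^n(1-s)\,ds = \tfrac{1}{(n+1)(n+2)}$ to recover the $\tfrac{1}{(n+2)!}$ coefficient. The only cosmetic difference is that the paper first changes variables to the interval $[k\eta, k\eta+\eta]$ before expanding, whereas you expand directly on $[0,1]$; the computations are identical.
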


\begin{lemma} \label{lem: tmplem2}
For $n \geq 1$,
\begin{align}
    \f{d^{n}}{dt^{n}} \bth (t) 
    = (-1)^n \sum_{k_1, \cdots, k_n = 0}^{\infty} \eta^{k_1 + \cdots k_n} \mcd_{k_1} \cdots \mcd_{k_{n-1}} \Xi_{k_n}
    \, , \label{tmpeq: eq in tmplem2}
\end{align}
where $\mcd_{k_1} \dotsm \mcd_{k_{n-1}} := 1$ for $n = 1$.
\end{lemma}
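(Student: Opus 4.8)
The plan is to prove Lemma \ref{lem: tmplem2} by induction on $n$, exploiting the fact that the counter-term expansion turns the chain rule into a clean "prepend one Lie derivative" operation.

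First I would rewrite the EoM (\ref{eq: gradient flow}) in terms of the objects $\Xi_\al$. Substituting the power series (\ref{eq: counter term expanded}) and using $\Xi_0 = \bg$ together with $\Xi_\al = \tbx_{\al-1}$ for $\al \geq 1$, one obtains
\begin{align}
    \dbtht = -\bg(\btht) - \eta \sum_{\al=0}^\infty \eta^\al \tbx_\al(\btht) = -\sum_{\al=0}^\infty \eta^\al \Xi_\al(\btht) \, .
\end{align}
This is exactly the claimed formula for $n=1$ (there the empty product $\mcd_{k_1}\cdots\mcd_{k_{n-1}}$ is set to $1$), so the base case holds.

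The engine of the induction is the observation that $\mcd_\al = \Xi_\al \cdot \nabla$ for \emph{every} $\al \geq 0$, which is immediate from the definitions of $\mcd_\al$ and $\Xi_\al$. Consequently, for any smooth $\btht$-dependent vector field $\bm{F}(\btht)$ the chain rule reads
\begin{align}
    \f{d}{dt}\bm{F}(\btht) = (\dbtht \cdot \nabla)\bm{F} = -\sum_{\al=0}^\infty \eta^\al (\Xi_\al \cdot \nabla)\bm{F} = -\sum_{\al=0}^\infty \eta^\al \mcd_\al \bm{F} \, .
\end{align}
In words: restricted to functions of $\btht$, the time-derivative operator equals $-\sum_{\al}\eta^\al \mcd_\al$. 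For the inductive step I would then assume (\ref{tmpeq: eq in tmplem2}) for some $n \geq 1$. Its right-hand side $\mcd_{k_1}\cdots\mcd_{k_{n-1}}\Xi_{k_n}$ is a genuine function of $\btht$ (the operators have already acted), so I may apply the display above with $\bm{F} = \mcd_{k_1}\cdots\mcd_{k_{n-1}}\Xi_{k_n}$. Differentiating term by term prepends a factor $-\sum_{k_0}\eta^{k_0}\mcd_{k_0}$, giving
\begin{align}
    \f{d^{n+1}}{dt^{n+1}}\btht = (-1)^{n+1} \sum_{k_0,\dots,k_n=0}^\infty \eta^{k_0+\cdots+k_n}\, \mcd_{k_0}\mcd_{k_1}\cdots\mcd_{k_{n-1}}\Xi_{k_n} \, .
\end{align}
Relabeling $(k_0,\dots,k_n) \to (k_1,\dots,k_{n+1})$ yields precisely (\ref{tmpeq: eq in tmplem2}) with $n$ replaced by $n+1$, closing the induction.

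I expect the only real subtlety to be bookkeeping rather than substance. One must treat the $\eta$-sums as formal power series (justified by the paper's global smoothness assumption) so that term-by-term differentiation and the interchange of $d/dt$ with the infinite sum are legitimate; and one must keep the operator-ordering convention straight, namely that each $\mcd_{k_i}$ acts on everything to its right and that applying $d/dt$ inserts the new operator $\mcd_{k_0}$ at the \emph{front} of the string. The sign flip $(-1)^n \to (-1)^{n+1}$ is automatic from the single minus sign carried by $\dbtht$, so no separate argument is needed there.
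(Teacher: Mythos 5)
Your proposal is correct and is essentially the paper's own argument: both rest on the chain-rule identity that $\tfrac{d}{dt}$ acting on a function of $\btht$ equals $-(\dbtht\cdot\nabla)$, iterated $n$ times, followed by the $\eta$-power-series expansion of the velocity field into the $\mcd_\al$ and $\Xi_\al$. The only difference is bookkeeping order — the paper iterates the unexpanded operator $\mcd$ and expands in $\eta$ at the end, while you expand first and thread the expansion through an explicit induction — which changes nothing of substance.
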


\begin{lemma} \label{lem: tmplem3}
\begin{align}
    \int_0^1 ds \ddbth (\eta(k+s)) (1-s)
    = \sum_{j=0}^{\infty} \sum_{i=2}^{j+2} \sum_{k_1+\cdots+k_i = j - i + 2} \f{(-1)^i}{i!} \eta^j D_{k_1} \cdots D_{k_{i-1}} \Xi_{k_i}
    \label{tmpeq: eq in tmplem3}
\end{align}
\end{lemma}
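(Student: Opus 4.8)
The plan is to obtain Equation (\ref{tmpeq: eq in tmplem3}) by substituting Lemma \ref{lem: tmplem2} into Lemma \ref{lem: tmplem1} and then regrouping the resulting double expansion by total powers of $\eta$. Since everything is understood as a formal power series in the small parameter $\eta$ (all functions being sufficiently smooth), the rearrangements will be justified order by order, so no convergence questions arise beyond bookkeeping.

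First I would start from Lemma \ref{lem: tmplem1}, which writes the integral as $\sum_{n=0}^\infty \f{\eta^n}{(n+2)!} \f{d^{n+2}}{dt^{n+2}} \bth(k\eta)$. Every derivative here has order $n+2 \geq 2$, so Lemma \ref{lem: tmplem2} applies directly and the empty-product edge case (order $1$) is never triggered. Substituting Lemma \ref{lem: tmplem2} at order $n+2$ gives $\f{d^{n+2}}{dt^{n+2}} \bth = (-1)^{n+2} \sum_{k_1, \dots, k_{n+2} = 0}^\infty \eta^{k_1 + \cdots + k_{n+2}} D_{k_1} \cdots D_{k_{n+1}} \Xi_{k_{n+2}}$, and using $(-1)^{n+2} = (-1)^n$ produces a double sum in $\eta$.

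Next I would reindex by setting $i := n+2$, so that $i$ runs over $2, 3, \dots$; under this substitution $\f{(-1)^n}{(n+2)!} = \f{(-1)^i}{i!}$, while the explicit prefactor $\eta^n$ becomes $\eta^{i-2}$ and combines with the $\eta^{k_1 + \cdots + k_i}$ coming from Lemma \ref{lem: tmplem2} into the single power $\eta^{\,i-2 + k_1 + \cdots + k_i}$. At this stage the right-hand side reads $\sum_{i=2}^\infty \f{(-1)^i}{i!} \sum_{k_1, \dots, k_i = 0}^\infty \eta^{\,i-2+k_1+\cdots+k_i} D_{k_1} \cdots D_{k_{i-1}} \Xi_{k_i}$, whose operator string $D_{k_1}\cdots D_{k_{i-1}}\Xi_{k_i}$ already matches the summand claimed in the lemma.

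The remaining and main step is to collect this by the total degree $j := i - 2 + (k_1 + \cdots + k_i)$ of $\eta$. For fixed $j$, nonnegativity of the $k_\ell$ forces $i - 2 \leq j$, which together with $i \geq 2$ gives the range $2 \leq i \leq j+2$; and for each admissible pair $(i,j)$ the inner tuples are exactly those with $k_1 + \cdots + k_i = j - i + 2$. This is precisely the main obstacle: one must verify that the assignment $(i; k_1, \dots, k_i) \mapsto (j, i; k_1, \dots, k_i)$ is a bijection onto the index set of the claimed triple sum, checking in particular the boundary cases (namely that $i = j+2$ forces all $k_\ell = 0$, and that $j = 0$ contributes only $i = 2$) so that no term is lost or double-counted. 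Because the regrouping happens separately within each fixed power $\eta^j$, only finitely many terms occur at each order, and the manipulation is valid as an identity of formal power series, yielding exactly Equation (\ref{tmpeq: eq in tmplem3}).
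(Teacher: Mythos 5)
Your proposal is correct and follows essentially the same route as the paper's proof: substitute Lemma \ref{lem: tmplem2} (at order $n+2$, so the empty-product edge case never arises) into Lemma \ref{lem: tmplem1}, use $(-1)^{n+2}=(-1)^n$, and reindex via $i=n+2$ and $j=n+k_1+\cdots+k_{n+2}$, collecting terms of equal total power of $\eta$. Your explicit verification of the index ranges and boundary cases in the regrouping is precisely the bookkeeping the paper compresses into its final line.
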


Theorem \ref{thm: Solution of EoLDE} follows by comparing both sides of Equation (\ref{eq: EoLDE}) order-by-order with using Equation (\ref{tmpeq: eq in tmplem3}) and the expansion of $\bx$ (\ref{eq: counter term expanded}).
\end{proof}

\subsubsection{Proof of Lemma \ref{lem: tmplem1}}
\begin{proof}
\begin{align}
    & \int_{0}^1 ds \ddbth(\eta(k+s)) (1-s) \nn
    =& \f{1}{\eta^2} \int_{k\eta}^{k\eta+\eta} ds \ddbth(s) (k\eta+\eta-s) \label{tmpl: tmplem1 1} \\
    =& \f{1}{\eta^2} \int_0^\eta ds^\prime [
        \ddbth(k\eta) (\eta - s^\prime) + \dddbth(k\eta)(\eta-s^\prime)s^\prime + \f{1}{2!} \ddddot{\bth}(k\eta)(\eta-s^\prime){s^\prime}^2 + \dotsm] \label{tmpl: tmplem1 2} \\
    =& \sum_{n=0}^\infty \f{\eta^n}{(n+2)!} \f{d^{n+2}}{dt^{n+2}} \bth (k\eta) \label{tmpl: tmplem1 3}
\end{align}
From Line (\ref{tmpl: tmplem1 1}) to (\ref{tmpl: tmplem1 2}), we used $s^\prime := s - k\eta$ and the Taylor expansion of $\ddbth(k\eta + s^\prime)$ around $k\eta$. From Line (\ref{tmpl: tmplem1 2}) to (\ref{tmpl: tmplem1 3}), we used $\int_0^\eta ds^\prime (\eta - s^\prime) {s^\prime}^n = \f{\eta^{n+2}}{(n+1)(n+2)}$ for $n \geq 0$.
\end{proof}

\subsubsection{Proof of Lemma \ref{lem: tmplem2}}
\begin{proof}
Note that given $\dbtht = - \bg(\btht) - \eta \bx(\btht)$, we have
\begin{align}
    \f{d}{dt}\left(\f{d^{n-1}}{dt^{n-1}} \btht\right) = - \mcd\left( \f{d^{n-1}}{dt^{n-1}} \btht \right) 
    \,\,\,\,\, (n \geq 1) \, ,
\end{align}
where $d^0 \bth/ dt^0 := \bth$.
Therefore,
\begin{align}
    \f{d^n}{dt^n}\btht 
    = (-1)^{n-1} \mcd^{n-1} (-\bg - \eta \bx) 
    = (-1)^n \mcd^{n-1} \Xi
    \,\,\,\,\, (n \geq 1) \, .
\end{align}
Thus, by definition of $\mcd$, $\mcd_\al$, and $\Xi_\al$ (Theorem \ref{thm: Solution of EoLDE} in Section \ref{sec: Solution to EoLDE}), we have
\begin{align}
    \f{d^n}{dt^n}\btht 
    =& (-1)^n (\sum_{k_1=0}^\infty \eta^{k_1}\mcd_{k_1}) \cdots (\sum_{k_{n-1}=0}^{\infty} \eta^{k_{n-1}} \mcd_{k_{n-1}}) \Xi \\
    =& (-1)^n \sum_{k_1, \cdots, k_n = 0}^{\infty} \eta^{k_1 + \cdots k_n} \mcd_{k_1} \cdots \mcd_{k_{n-1}} \Xi_{k_n} \, .
\end{align}
\end{proof}

\subsubsection{Proof of Lemma \ref{lem: tmplem3}}
\begin{proof}
From Lemma \ref{lem: tmplem1} and \ref{lem: tmplem2}, we have
\begin{align}
    & \int_{0}^1 ds \ddbth(\eta(k+s)) (1-s) \nn
    =& \sum_{n=0}^\infty \f{\eta^n}{(n+2)!} \f{d^{n+2}}{dt^{n+2}} \bthketa \\
    =& \sum_{n=0}^{\infty}\f{\eta^n}{(n+2)!} (-1)^{n+2} \sum_{k_1,\cdots,k_{n+2}=0}^{\infty} \eta^{k_1+\cdots+k_{n+2}} \mcd_{k_1}\cdots \mcd_{k_{n+1}}\Xi_{k_{n+2}} \\
    =& \sum_{n=0}^{\infty} \sum_{k_1,\cdots,k_{n+2}=0}^{\infty} \f{(-1)^{n}}{(n+2)!} \eta^{n + k_1 + \cdots + k_{n+2}} \mcd_{k_1} \cdots \mcd_{k_{n+1}} \Xi_{k_{n+2}} \\
    =& \sum_{j=0}^\infty \sum_{i=2}^{j+2} \sum_{k_1 + \cdots + k_i = j - i + 2} \f{(-1)^{i-2}}{i!} \eta^j \mcd_{k_1} \cdots \mcd_{k_{i-1}} \Xi_{k_i} \, .
\end{align}
On the last line, we replaced $n+2$ and $n+k_1 + \cdots + k_{n+2}$ with $i$ and $j$, respectively.
\end{proof}

\subsection{Proof of Corollary \ref{cor: Discretization error of bx}} \label{app: Proof of cor: Discretization error of bx}
\begin{proof}
By assumption, we use
\begin{align}
    \bx(\bth) = \eta^2 \sum_{\al=0}^{\ga-1} \eta^\al \tbx_\al \, .
\end{align}
From Theorem \ref{thm: Solution of EoLDE}, we have
\begin{align}
    \bm{\La}(\bth) 
    &= \eta^2 \int_0^1 ds \ddot{\bth}(\eta(k+s)) (1-s) 
        -\eta^2\bm{\xi}(\bth(k\eta)) \\
    &= \eta^2 \sum_{\al=0}^{\infty} \eta^\al \tbx_\al
        - \eta^2 \sum_{\al=0}^{\ga-1} \eta^\al \tbx_\al \\
    &= \eta^2 \sum_{\al=\ga}^{\infty} \eta^\al \tbx_\al \\
    &= \eta^{\ga+2} \tbx_\ga + O(\eta^{\ga+3}) \, .
\end{align}
Therefore, Theorem \ref{thm: Leading order of EoDE} gives
\begin{align}
    \berr_{k+1} 
    =& \berr_k + \bm{\La}(\bthketa) + O (\eta^{\ga+3}) \\ 
    =& \berr_k + \eta^{\ga+2} \tbx_\ga + O(\eta^{\ga+3}) + O(\eta^{\ga+3}) \\
    =& \berr_k + \eta^{\ga+2} \tbx_\ga + O(\eta^{\ga+3}) \, .
\end{align}
\end{proof}

\subsection{Proof of Corollary \ref{cor: Learning rate bound when bx=0}} \label{app: Proof of cor: Learning rate bound when bx=0}
\begin{proof}
From Equation (\ref{eq: disc. err. with no xi}), we have
\begin{align}
    \berr_k = \berr_0 + \sum_{s=0}^{k-1} \f{\eta^2}{2} (H(\bth(s\eta) + \la I ) \bg(\bth(s\eta)) + O(\eta^3) \, . 
\end{align}
Because $\berr_0 = O(\eta^3)$ by assumption, we have
\begin{align}
    &\berr_k 
    = \sum_{s=0}^{k-1} \f{\eta^2}{2} (H(\bth(s\eta) + \la I ) \bg(\bth(s\eta)) + O(\eta^3) \\
    \therefore\,\,\,& ||\berr_k|| \leq \f{\eta^2}{2} \sum_{s=0}^{k-1} || (H(\bth(s\eta) + \la I ) \bg(\bth(s\eta)) || + O(\eta^3) \\
    & \,\,\,\,\,\,\,\,\,\,\,\,\, \leq \f{\eta^2 k}{2}  \underset{0\leq s \leq k-1}{\mr{\max}} \{ 
            || (H(\bth(s\eta) + \la I ) \bg(\bth(s\eta)) ||
        \} + O(\eta^3) \, .
\end{align}
Let $t >0$ be a given arbitrary number. Then, for $k \in \{1,2,...,\lfloor \f{t}{\eta} \rfloor\}$,
\begin{align}
    ||\berr_k|| \leq \f{\eta^2 k}{2} \underset{0\leq t^\prime \leq t}{\mr{\max}} \{ 
            || (H(\bth(t^\prime) + \la I ) \bg(\bth(t^\prime)) ||
        \} + O(\eta^3) \, .
\end{align}
Therefore, if $\eta < \sqrt{\ep / k} \sqrt{2 / {\mr{\max}}_{0\leq t^\prime \leq t} \{|| (H(\bth(t^\prime) + \la I ) \bg(\bth(t^\prime)) ||\}}$, then
\begin{align}
    || \berr_k || < \ep + O(\ep^{3/2})\, .
\end{align}
\end{proof}

\subsection{Proof of Corollary \ref{cor: Learning rate bound when bx=tbx0}} \label{app: Proof of cor: Learning rate bound when bx=tbx0}
\begin{corollary}[Learning rate bound when $\bx=\tbx_0$] \label{cor: Learning rate bound when bx=tbx0}
Let $\bx=\tbx_0$ and assume that $\berr_0=O(\eta^4)$. Let $\ep$ and $t$ be arbitrary positive numbers. If the step size satisfies
\begin{align}
        \eta < \sqrt[3]{\f{\ep}{k}} \sqrt[3]{
            \f{12}{
                \underset{0\leq t^\prime \leq t}{\max} \{  || 4 (H(\bth(t^\prime)) + \la I)^2 \bg(\bth(t^\prime)) 
                +  \bg(\bth(t^\prime))^\top \nabla H(\bth(t^\prime)) \bg(t^\prime) || \}
            }}
    \, ,
\end{align}
for some $k \in \{1,2,..., \lfloor\f{t}{\eta}\rfloor \}$, then the discretization error can be arbitrarily small:
\begin{align}
    ||\berr_k|| < \ep + O(\ep^{\f{4}{3}}) \, .
\end{align}
\end{corollary}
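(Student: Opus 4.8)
The plan is to follow the same template as the proof of Corollary~\ref{cor: Learning rate bound when bx=0}, but starting from the discretization-error formula specialized to $\bx = \tbx_0$. Since keeping the counter term only up to $O(\eta^0)$ corresponds to $\ga = 1$ in Corollary~\ref{cor: Discretization error of bx}, that corollary furnishes the one-step recursion $\berr_{k+1} = \berr_k + \eta^3 \tbx_1(\bth(k\eta)) + O(\eta^4)$. Telescoping from $s=0$ to $k-1$ and using the hypothesis $\berr_0 = O(\eta^4)$ to absorb the initial term, I would obtain
\begin{align}
    \berr_k = \eta^3 \sum_{s=0}^{k-1} \tbx_1(\bth(s\eta)) + O(\eta^4) \, .
\end{align}

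The key step — and the main obstacle — is to evaluate $\tbx_1$ explicitly and recognize the combination appearing inside the maximum in the statement. Using $\tbx_0 = \f{1}{2}(\bg \cdot \nabla)\bg = \f{1}{2}(\nabla \bg)\bg$ with $\nabla\bg = H + \la I$ (the symmetric Jacobian of $\bg = \nabla f + \la\bth$), the first summand of Equation~(\ref{eq: hat xi1}) becomes $\f{1}{2}(\tbx_0\cdot\nabla)\bg = \f{1}{2}(\nabla\bg)\tbx_0 = \f{1}{4}(H+\la I)^2 \bg$. For the second summand I would differentiate $\tbx_0$: writing $(\tbx_0)_i = \f{1}{2}\sum_l (\partial_l g_i) g_l$ and applying $\bg\cdot\nabla$, the third-derivative terms assemble into $\f{1}{2}\bg^\top \nabla H\,\bg$, while the remaining terms, after using the symmetry $\partial_j g_l = \partial_l g_j$, collapse to $\f{1}{2}(H+\la I)^2\bg$; hence $\f{1}{6}(\bg\cdot\nabla)\tbx_0 = \f{1}{12}\bg^\top\nabla H\,\bg + \f{1}{12}(H+\la I)^2\bg$. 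Adding the two contributions yields
\begin{align}
    \tbx_1 = \f{1}{12}\big( 4(H+\la I)^2 \bg + \bg^\top \nabla H\, \bg \big) \, ,
\end{align}
which is exactly the quantity whose norm controls the bound and explains the constant $12$.

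Finally I would take norms, apply the triangle inequality, and bound each summand by its maximum along the trajectory, giving
\begin{align}
    \|\berr_k\| \leq \f{\eta^3 k}{12} \max_{0\leq s \leq k-1} \big\{ \| 4(H+\la I)^2\bg + \bg^\top\nabla H\,\bg \|(\bth(s\eta)) \big\} + O(\eta^4) \, .
\end{align}
For $k \in \{1,\dots,\lfloor t/\eta\rfloor\}$ every $s\eta$ lies in $[0,t]$, so the discrete maximum is dominated by the continuous one over $0\leq t'\leq t$. Imposing the stated bound on $\eta$ forces the leading term strictly below $\ep$, and since $\eta \sim (\ep/k)^{1/3}$ implies $\eta^4 = O(\ep^{4/3})$, the residual $O(\eta^4)$ becomes $O(\ep^{4/3})$, yielding $\|\berr_k\| < \ep + O(\ep^{4/3})$. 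The only genuinely new work relative to Corollary~\ref{cor: Learning rate bound when bx=0} is the $\tbx_1$ computation; everything after it is the identical norm-and-maximum argument, with the exponent shifting from $2$ to $3$ because the leading error is now $O(\eta^3)$ rather than $O(\eta^2)$.
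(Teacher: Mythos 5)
Your proposal is correct and follows essentially the same route as the paper's own proof: telescoping the recursion from Corollary \ref{cor: Discretization error of bx} with $\ga=1$, evaluating $\tbx_1 = \f{1}{2}(\tbx_0\cdot\nabla)\bg + \f{1}{6}(\bg\cdot\nabla)\tbx_0 = \f{1}{12}\bigl(4(H+\la I)^2\bg + \bg^\top\nabla H\,\bg\bigr)$, and then applying the same triangle-inequality-and-maximum argument as in Corollary \ref{cor: Learning rate bound when bx=0}. Your explicit computation of $\tbx_1$, including the constant $\f{1}{12}$ and the split into the $(H+\la I)^2\bg$ and third-derivative pieces, matches the paper's expression $\f{1}{3}(H+\la I)^2\bg + \f{1}{12}\bg^\top\nabla H\,\bg$ exactly.
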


\begin{proof}
From Equation (\ref{eq: hat xi1}) and Corollary \ref{cor: Discretization error of bx} and by assumption, we have
\begin{align}
    \berr_k = \berr_0 + \eta^3 \sum_{s=0}^{k-1}\{ 
        \f{1}{2}(\tbx_0(\bth(s\eta)\cdot\nabla))\bg(\bth(s\eta)) + \f{1}{6}(\bg(\bth(s\eta))\cdot\nabla)\tbx_0(\bth(s\eta))
    \} + O(\eta^4) \,.
\end{align}
Because $\berr_0 = O(\eta^4)$ by assumption, we have
\begin{align}
    \berr_k 
    &= \eta^3 \sum_{s=0}^{k-1}\{ 
        \f{1}{2}(\tbx_0(\bth(s\eta)\cdot\nabla))\bg(\bth(s\eta)) + \f{1}{6}(\bg(\bth(s\eta))\cdot\nabla)\tbx_0(\bth(s\eta))
    \} + O(\eta^4) \\
    &= \eta^3 \sum_{s=0}^{k-1}\{ 
        \f{1}{3}(H(\bth(s\eta))+\la I)^2\bg(\bth(s\eta)) + \f{1}{12} \bg^\top(\bth(s\eta)) \nabla H(\bth(s\eta)) \bg(\bth(s\eta))
    \} + O(\eta^4) \, .
\end{align}
Therefore,
\begin{align}
    ||\berr_k|| 
    &\leq \eta^3 \sum_{s=0}^{k-1} || 
        \f{1}{3}(H(\bth(s\eta))+\la I)^2\bg(\bth(s\eta)) + \f{1}{12} \bg^\top(\bth(s\eta)) \nabla H(\bth(s\eta)) \bg(\bth(s\eta))
    || + O(\eta^4) \\
    &\leq \f{\eta^3 k}{12} \underset{0\leq s\leq k-1}{\mr{max}} \{ || 
        4(H(\bth(s\eta))+\la I)^2\bg(\bth(s\eta)) + \bg^\top(\bth(s\eta)) \nabla H(\bth(s\eta)) \bg(\bth(s\eta))
    ||\} \nn 
    &\,\,\,\, + O(\eta^4) \, .
\end{align}
Let $t >0$ be a given arbitrary number. Then, for $k \in \{1,2,...,\lfloor \f{t}{\eta} \rfloor\}$,
\begin{align}
    ||\berr_k|| 
    &\leq  \f{\eta^3 k}{12} \underset{0\leq t^\prime \leq t}{\mr{max}} \{ || 
        4(H(\bth(t^\prime ))+\la I)^2\bg(\bth(t^\prime)) + \bg^\top(\bth(t^\prime)) \nabla H(\bth(t^\prime)) \bg(\bth(t^\prime))
    ||\} + O(\eta^4) \, .
\end{align}
Therefore, if
\begin{align}
    \eta < \sqrt[3]{\f{\ep}{k}} \sqrt[3]{
        \f{12}{
            \underset{0\leq t^\prime \leq t}{\max} \{  || 4 (H(\bth(t^\prime)) + \la I)^2 \bg(\bth(t^\prime)) 
            +  \bg(\bth(t^\prime))^\top \nabla H(\bth(t^\prime)) \bg(t^\prime) || \}
        }} \, ,
\end{align}
then $||\berr_k|| < \ep + O(\ep^{4/3})$.
\end{proof}

\subsection{Proof of Theorem \ref{thm: EoM of r}} \label{app: Proof of thm: EoM of r}
We use the following Lemmas.
\begin{lemma} \label{lem: gradient constraints for scale-invariant layers}
For scale-invariant layers $\mca$, the following equations hold:
\begin{align}
    & \btha \cdot\nabla f(\bth) = \btha \cdot\nabla_\mca f(\bth) = 0 \\
    & H_\mca(\bth) \btha + \nabla_\mca f(\bth) = 0 \\
    & \nabla_\mcac \nabla_\mca^\top f(\bth) \btha = 0 \, ,
\end{align}
where $H_\mca (\bth) := (\mbone_\mca \odot \nabla) (\mbone_\mca \odot \nabla)^\top f(\bth)$.
\end{lemma}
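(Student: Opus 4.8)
The plan is to exploit scale invariance as a one-parameter family and differentiate it successively, first in the scale parameter $\al$ and then in the weight coordinates. The starting point is the defining identity $f(\bpsi(\bth,\al)) = f(\al\btha + \bthac) = f(\bth)$, which holds for all $\al > 0$ and all $\bth$. First I would fix $\bth$ and set $g(\al) := f(\al\btha + \bthac)$; since $g$ is constant in $\al$, the chain rule gives $g'(\al) = \nabla f(\al\btha + \bthac)\cdot\btha = 0$, and evaluating at $\al = 1$ yields $\nabla f(\bth)\cdot\btha = 0$. Because $\btha = \bth\odot\mbone_\mca$ is supported on $\mca$, the contraction $\btha\cdot\nabla f = \sum_{i\in\mca}\th_i\,\partial_i f$ is unchanged if $\nabla f$ is replaced by $\nabla_\mca f = \mbone_\mca\odot\nabla f$; hence $\btha\cdot\nabla f(\bth) = \btha\cdot\nabla_\mca f(\bth) = 0$. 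As $\bth$ is arbitrary, this is a functional identity in $\bth$ (Euler's identity for a function that is homogeneous of degree $0$ in the $\mca$-coordinates), which establishes the first equation.

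Next, to obtain the two Hessian constraints, I would differentiate the functional identity $\sum_{i\in\mca}\th_i\,\partial_i f(\bth) = 0$ with respect to a single coordinate $\th_j$. The product rule produces $(\mbone_\mca)_j\,\partial_j f(\bth) + \sum_{i\in\mca}\th_i\,\partial_j\partial_i f(\bth) = 0$ for every index $j$. Reading off the components with $j\in\mca$ gives $(\nabla_\mca f)_j + (H_\mca\btha)_j = 0$, i.e.\ the second equation $H_\mca(\bth)\btha + \nabla_\mca f(\bth) = 0$. Reading off the components with $j\in\mcac$ annihilates the first term (since $(\mbone_\mca)_j = 0$) and leaves $\sum_{i\in\mca}\th_i\,\partial_j\partial_i f = 0$, which is precisely the $j$-th entry of $\nabla_\mcac\nabla_\mca^\top f(\bth)\,\btha$, yielding the third equation.

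The only real obstacle is bookkeeping: matching each term produced by the two differentiations to the indicator-masked operators in the statement. Concretely, I would verify that $\sum_{i\in\mca}\th_i\,\partial_j\partial_i f$ is the $j$-th component of $H_\mca\btha$ when $j\in\mca$ and of $\nabla_\mcac\nabla_\mca^\top f\,\btha$ when $j\in\mcac$, using $(H_\mca)_{ji} = (\mbone_\mca)_j(\mbone_\mca)_i\partial_j\partial_i f$ together with the fact that $\btha$ already carries the $\mbone_\mca$ mask, so that the sums over $i$ are automatically restricted to $\mca$. No analytic difficulty arises beyond the smoothness of $f$ assumed throughout, so the argument is a two-step differentiation followed by a component-wise split into the $\mca$ and $\mcac$ blocks.
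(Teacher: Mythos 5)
Your proposal is correct and follows essentially the same route as the paper's proof: differentiate the invariance $f(\al\btha + \bthac) = f(\bth)$ in $\al$ at $\al = 1$ to get $\btha\cdot\nabla f(\bth) = \btha\cdot\nabla_\mca f(\bth) = 0$, then differentiate this identity in $\bth$ and split into the $\mca$ and $\mcac$ blocks to obtain the two Hessian constraints. The only difference is notational — you carry out the second differentiation componentwise, while the paper applies $\nabla = \nabla_\mca + \nabla_\mcac$ in vector form and projects with $\mbone_\mcac\odot$ — so the arguments are the same in substance.
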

\begin{proof}
Differentiating both sides of $f(\al_\mca \odot \bth) = f(\bth)$ with respect to $\al$, we have
\begin{align}
    \btha \cdot \nabla f(\bth) = \btha \cdot \nabla_\mca f (\al_\mca \odot \bth) = 0 \, ,
\end{align}
where $\nabla_\mca f(\al_\mca \odot \bth)$ means $(\nabla_\mca f(\bth))|_{\bth=\al_\mca \odot \bth}$. For $\al=1$, we have
\begin{align}
    \btha \cdot \nabla f (\bth) = \btha \cdot \nabla_\mca f (\bth) = 0 \, .
\end{align}
Applying $\nabla$, we have 
\begin{align}
    &(\btha \cdot \nabla_\mca) \nabla f (\bth) + \nabla_\mca f (\bth) = 0 \\
    \Longleftrightarrow & (\btha\cdot \nabla_\mca) (\nabla_\mca + \nabla_\mcac) f(\bth) + \nabla_\mca f (\bth) = 0 \\
    \Longleftrightarrow & H_\mca (\bth) \btha + \nabla_\mcac \nabla_\mca^\top f(\bth) \btha + \nabla_\mca f (\bth) = 0 \, .
\end{align}
Multiplying by $\mbone_\mcac \odot$, we have
\begin{align}
    \nabla_\mcac \nabla_\mca^\top f (\bth) \btha = 0 \, .
\end{align}
Therefore,
\begin{align}
    H_\mca (\bth) \btha + \nabla_\mca f (\bth) = 0 \, .
\end{align}
\end{proof}

\begin{lemma} \label{lem: gradient relation of scale-invariant layers}
For scale-invariant layers $\mca$, the following equations hold:
\begin{align}
    \nabla_\mca f (\bth) = \f{1}{r_\mca} \nabla_\mca f(\hbtha + \bthac) \, ,
\end{align}
where $\nabla_\mca f (\hbtha + \bthac) := (\nabla_\mca f (\bth))|_{\bth= \hbtha + \bthac}$.
\end{lemma}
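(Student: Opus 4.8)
The plan is to recognize that the lemma is nothing more than the statement that $\nabla_\mca f$ is homogeneous of degree $-1$ in the $\mca$-block of $\bth$, together with the particular choice of scaling factor $1/r_\mca$. I would therefore start directly from the defining property of a scale-invariant layer, $f(\al_\mca \bth) = f(\bth)$, and write it out block-wise as $f(\al \btha + \bthac) = f(\bth)$ for every $\al > 0$, emphasizing that the $\mca^\mathsf{c}$-components are left untouched by the transformation.

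The key step is to differentiate this identity with respect to the $\mca$-components, i.e.\ to apply $\nabla_\mca = \mbone_\mca \odot \nabla$ to both sides. On the left-hand side the inner map sends the block $\btha$ to $\al \btha$ while holding $\bthac$ fixed, so the chain rule produces an overall scalar factor $\al$, giving $\nabla_\mca [ f(\al \btha + \bthac) ] = \al \, (\nabla_\mca f)(\al \btha + \bthac)$. On the right-hand side we simply have $\nabla_\mca[f(\bth)] = \nabla_\mca f(\bth)$. Equating the two yields the homogeneity relation $\al \, (\nabla_\mca f)(\al \btha + \bthac) = \nabla_\mca f(\bth)$, valid for all $\al > 0$.

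Finally I would specialize to $\al = 1/r_\mca$. Since $r_\mca = \| \btha \|$ and $\hbth_\mca = \btha / r_\mca$, this choice makes $\al \btha = \hbth_\mca$, so the evaluation point collapses to $\hbth_\mca + \bthac$ and the relation becomes $\f{1}{r_\mca}\,(\nabla_\mca f)(\hbth_\mca + \bthac) = \nabla_\mca f(\bth)$, which is exactly the claimed identity after rearranging.

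The only point requiring genuine care — rather than any real difficulty — is the bookkeeping in the chain-rule step: one must confirm that the scaling factor $\al$ attaches only to the $\mca$-components while the $\mca^\mathsf{c}$-components stay fixed, so that $\nabla_\mca$ pulls out a clean scalar $\al$ and the derivative is evaluated precisely at $\al \btha + \bthac$. This is the same differentiation manipulation already used to establish Lemma \ref{lem: gradient constraints for scale-invariant layers}, so it should slot naturally alongside the surrounding arguments.
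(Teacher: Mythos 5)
Your proof is correct and follows essentially the same route as the paper's: differentiate the invariance identity $f(\al \btha + \bthac) = f(\bth)$ with respect to the weights, use the chain rule to pull out the factor $\al$, and specialize to $\al = 1/r_\mca$. The only immaterial difference is that the paper takes the full gradient $\nabla$ first and projects onto $\mca$ with $\mbone_\mca \odot$ at the end, whereas you apply $\nabla_\mca$ from the start.
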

\begin{proof}
Note that $f(\bth) = f(\al_\mca \odot \bth) = f(\al \bth_\mca + \bth_{\mca^\mathsf{c}})$.
Differentiating both sides with respect to $\bth$, we have
\begin{align}
    &\nabla f(\bth)\\
    =& \nabla(f(\al_\mca \odot \bth)) \\
    =& (\nabla_\mca + \nabla_\mcac) (f(\al_\mca \odot \bth)) \\
    =& \al \nabla_\mca f (\al_\mca \odot \bth) 
    + \nabla_{\mca^\mathsf{c}} f (\al_\mca \odot \bth) \, . \label{eq: gradient relation of scale-inv. layer}
\end{align}
For $\al = 1 / r_\mca$, we have
\begin{align}
    \nabla f(\bth) 
    =& \f{1}{r_\mca} \nabla_\mca f (\hbtha + \bthac) 
    + \nabla_{\mca^\mathsf{c}} f (\hbtha + \bthac) \, .
\end{align}
Therefore,
\begin{align}
    \nabla_\mca f(\bth) = \mbone_\mca \odot \nabla f (\bth)
    =& \mbone_\mca \odot ( \f{1}{r_\mca} \nabla_\mca f (\hbth_\mca + \bthac) 
        + \nabla_{\mca^\mathsf{c}} f (\hbtha + \bth_{\mca^\mathsf{c}}) ) \\
    =& \f{1}{r_\mca} \nabla_\mca f (\hbth_\mca + \bthac) \, .
\end{align}
\end{proof}
\begin{lemma} \label{lem: hessian relation of scale-invariant layers}
For scale-invariant layers $\mca$, the following equations hold for all $\al > 0$:
\begin{align}
    &H(\bth) = \al^2 H_\mca (\al_\mca \odot \bth) + \al (\nabla_\mcac \nabla_\mca^\top f (\al_\mca \odot \bth) + \nabla_\mca \nabla_\mcac^\top f (\al_\mca \odot \bth) ) + H_\mcac (\al_\mca \odot \bth) \\
    &H(\bth) \btha = \al^2 H_\mca (\al_\mca \odot \bth) \bth_\mca \\
    &H(\bth) \btha = H_\mca (\bth) \btha
    \, ,
\end{align}
where $H_\mca (\al_\mca \odot \bth) := ((\mbone_\mca \odot \nabla) (\mbone_\mca \odot \nabla)^\top f (\bth)) |_{\bth = \al_\mca \odot \bth}$.
\end{lemma}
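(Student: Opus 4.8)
The plan is to prove the first (block-decomposition) identity by differentiating the scale-invariance relation $f(\bth) = f(\al_\mca \odot \bth)$ twice, and then to read off the second and third identities as algebraic consequences. Writing $\bm{u} := \al_\mca \odot \bth$, so that $u_i = (\al_\mca)_i \th_i$ with $(\al_\mca)_i = \al$ for $i \in \mca$ and $(\al_\mca)_i = 1$ for $i \in \mcac$, the chain rule gives $\pl_{\th_j} f(\bm{u}) = (\al_\mca)_j (\pl_{u_j} f)(\bm{u})$ and hence $\pl_{\th_k}\pl_{\th_j} f(\bm{u}) = (\al_\mca)_j (\al_\mca)_k (\pl_{u_k}\pl_{u_j} f)(\bm{u})$. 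Because the left-hand side equals $H_{jk}(\bth)$ by scale invariance, I obtain the entrywise relation $H_{jk}(\bth) = (\al_\mca)_j (\al_\mca)_k H_{jk}(\al_\mca \odot \bth)$. Splitting the index pair $(j,k)$ into the four blocks $\mca\times\mca$, $\mca\times\mcac$, $\mcac\times\mca$, $\mcac\times\mcac$ — on which the prefactor $(\al_\mca)_j(\al_\mca)_k$ equals $\al^2$, $\al$, $\al$, $1$ respectively — collects exactly into the first claimed identity.

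For the second identity I would right-multiply the first identity by $\btha$. Since $\btha$ is supported on $\mca$, the two terms whose columns are indexed by $\mcac$, namely $H_\mcac(\al_\mca\odot\bth)$ and $\nabla_\mca\nabla_\mcac^\top f(\al_\mca\odot\bth)$, annihilate $\btha$ and drop out. The only cross term that could survive is $\al\,\nabla_\mcac\nabla_\mca^\top f(\al_\mca\odot\bth)\btha$, and this is where Lemma \ref{lem: gradient constraints for scale-invariant layers} enters. The crucial observation is that scale invariance, and therefore every constraint in Lemma \ref{lem: gradient constraints for scale-invariant layers}, holds \emph{pointwise}, so I may apply it at the scaled point $\al_\mca\odot\bth$ rather than at $\bth$; doing so gives $\nabla_\mcac\nabla_\mca^\top f(\al_\mca\odot\bth)\,(\al_\mca\odot\bth)_\mca = \al\,\nabla_\mcac\nabla_\mca^\top f(\al_\mca\odot\bth)\btha = 0$, and dividing by $\al>0$ kills the cross term. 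What remains is $\al^2 H_\mca(\al_\mca\odot\bth)\btha = \al^2 H_\mca(\al_\mca\odot\bth)\bth_\mca$, which is the second identity. The third identity is then immediate by setting $\al=1$, since $\al_\mca\odot\bth = \bth$ and $\bth_\mca = \btha$ in that case.

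The routine part is the double chain-rule computation and the block bookkeeping; I expect the only genuine subtlety — and the step most easily gotten wrong — to be the elimination of the $\mcac\times\mca$ cross term, which requires invoking Lemma \ref{lem: gradient constraints for scale-invariant layers} at the translated argument $\al_\mca\odot\bth$ and carefully tracking the extra factor of $\al$ produced by $(\al_\mca\odot\bth)_\mca = \al\btha$ before dividing it out.
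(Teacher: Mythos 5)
Your proof is correct and follows essentially the same route as the paper's: differentiate the scale-invariance relation twice to obtain the block decomposition of the Hessian (the paper does this by applying $\nabla$ to its previously derived gradient relation, you by a direct entrywise chain rule — the same computation), then right-multiply by $\btha$ and eliminate the cross term via Lemma \ref{lem: gradient constraints for scale-invariant layers} applied at the scaled point $\al_\mca \odot \bth$, and finally set $\al = 1$. In fact, you make explicit the one step the paper leaves implicit, namely that $\nabla_\mcac \nabla_\mca^\top f(\al_\mca \odot \bth)\,\btha = 0$ requires invoking the constraint at the scaled argument, where $(\al_\mca \odot \bth)_\mca = \al \btha$ produces a factor of $\al$ that is then divided out.
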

\begin{proof}
Because $\nabla f (\bth) = \al \nabla_\mca f(\al \bth_\mca) + \nabla_{\mca^\mathsf{c}} f(\bth_{\mca^\mathsf{c}})$ (Equation \ref{eq: gradient relation of scale-inv. layer}), 
\begin{align}
    H(\bth) 
    &= \nabla \nabla^\top f (\bth) \\
    &= \nabla (\al \nabla_\mca^\top f (\al_\mca \odot \bth) + \nabla_\mcac^\top f (\al_\mca \odot \bth) ) \\
    &= (\nabla_\mca + \nabla_\mcac) (\al \nabla_\mca^\top f (\al_\mca \odot \bth) + \nabla_\mcac^\top f (\al_\mca \odot \bth)) \\
    &= \al^2 H_\mca (\al_\mca \odot \bth) + \al (\nabla_\mcac \nabla_\mca^\top f (\al_\mca \odot \bth) + \nabla_\mca \nabla_\mcac^\top f (\al_\mca \odot \bth)  ) + H_\mcac (\al_\mca \odot \bth) \, .
\end{align}
Therefore,
\begin{align}
    H(\bth) \btha 
    &= \al^2 H_\mca(\al_\mca \odot \bth) \btha + \al \nabla_\mcac \nabla_\mca^\top f (\al_\mca \odot \bth) \btha \\
    &= \al^2 H_\mca (\al_\mca \odot \bth) \btha \, .
\end{align}
For $\al=1$, we have
\begin{align}
    H(\bth) \btha = H_\mca (\bth) \btha \, .
\end{align}
\end{proof}

We now prove Theorem \ref{thm: EoM of r}.
\begin{proof}
We use Lemmas \ref{lem: gradient constraints for scale-invariant layers}, \ref{lem: gradient relation of scale-invariant layers}, and \ref{lem: hessian relation of scale-invariant layers}.
\begin{align}
    \dot{r_\mca^2}(t) 
    =& 2 \bthat \cdot \dbth_\mca(t) \\
    =& 2 \bth_\mca(t) \cdot (-\nabla_\mca f(\btht) - \la \btha(t) - \eta \bx(\btht)  ) \\
    =& -2\la r_\mca^2 (t) - 2 \eta \btha(t) \cdot \bx(\btht) \, . \label{tmpeq: reom}
\end{align}
For $\bx = \bm{0}$, 
\begin{align}
    \dot{r_\mca^2}(t)     
    =& -2\la r_\mca^2 (t) \, .
\end{align}
For $\bx = \tbx_0$, 
\begin{align}
    \dot{r_\mca^2}(t)     
    =& -2\la r_\mca^2 (t) - 2 \eta \btha(t) \cdot \tbx_0(\btht) \\
    =& -2\la r_\mca^2 (t) - \eta (\la^2 r_\mca^2 (t) - || \nabla_\mca f(\btht) ||^2  ) \\
    =& -2 \la ( 1 + \f{\et\la}{2} ) r_\mca^2 (t) + \f{\eta}{r_\mca^2(t)} || \nabla_\mca f(\hbth_\mca(t)) ||^2 \, .
\end{align}
We used 
\begin{align}
    \btha \cdot \tbx_{0\mca} 
    &= \f{1}{2} \btha \cdot ( H(\bth) + \la I ) (\nabla f (\bth) + \la \bth) \\
    &= \f{1}{2} \btha \cdot ( H(\bth) \nabla f (\bth) + \la H(\bth) \bth + \la \nabla f (\bth) + \la^2 \bth ) \\
    &= \f{1}{2} (\btha^\top H_\mca (\bth) \nabla_\mca f (\bth) + \la \btha^\top H_\mca (\bth) \btha + \la^2 r^2_\mca ) \\
    &= \f{1}{2} (- || \nabla_\mca f (\bth) ||^2 + \la^2 r^2_\mca ) \, . \label{eq: theta dot tbx0}
\end{align}
Using $\dot{\bm{x}}(t) = - a \bm{x} + \bm{y}(t) \Leftrightarrow \bm{x}(t) = \bm{x}(0)e^{-at} + \int_0^t d\tau e^{-a(t-\tau)} \bm{y}(\tau)$, we can show the remaining equations.

\end{proof}

\subsection{Proof of Corollary \ref{cor: r at equil.}} \label{app: Proof of cor: r at equil.}

\begin{proof}
When $\bx = \bm{0}$, $r_\mca \xrightarrow{t \rightarrow \infty} 0$ is obvious from the EoM for $r_\mca$ (Theorem \ref{thm: EoM of r}).
When $\bx = \tbx_0$, EoM is given by
\begin{align}
    \dot{r^2}_\mca(t) = - 2 \la (1+\f{\eta\la}{2})r_\mca^2(t) + \f{\eta}{r^2(t)}|| \nabla_\mca f (\hbthat + \bthac(t)) ||^2 \, .
\end{align}
At equilibrium, $\dot{r}_\mca = 0$ and $|| \nabla_\mca f (\hbtha + \bthac) || = c_*$ by assumption; thus, we have
\begin{align}
    &0 = - 2 \la (1 + \f{\eta\la}{2}) r_{\mca *}^2 + \f{\eta}{r_{\mca *}^2} c^2_* \\
    \Longleftrightarrow &r_{\mca *}^2 = \sqrt{\f{\eta}{2\la + \eta\la^2}} c_* \, .
\end{align}
\end{proof}

\subsection{Proof of Theorem \ref{thm: EoM of hattheta}} \label{app: Proof of thm: EoM of hattheta}
\begin{proof}
We use Lemmas \ref{lem: gradient constraints for scale-invariant layers}, \ref{lem: gradient relation of scale-invariant layers}, and \ref{lem: hessian relation of scale-invariant layers}:
\begin{align}
    \dhbth_\mca 
    &= \f{d}{dt}\f{\bth_\mca}{r_\mca} \\
    &= -\f{\dot{r}_\mca}{r_\mca^2} \btha + \f{1}{r_\mca} \dbth_\mca \\
    &= \f{\btha}{r_\mca^2}(\la r_\mca + \eta \hbth_\mca \cdot \bx(\bth))
        + \f{1}{r_\mca} (-\nabla_\mca f(\bth) -\la \btha - \eta \bx_\mca(\bth)) \\
    &= \f{\eta}{r_\mca} \hbth_\mca (\hbth_\mca \cdot \bx_\mca(\bth)) 
        - \f{1}{r_\mca} \nabla_\mca f(\bth) - \f{\eta}{r_\mca}\bx_\mca(\bth) \\
    &= - \f{1}{r^2_\mca} \nabla_\mca f (\hbth_\mca + \bth_\mcac) 
        + \f{\eta}{r_\mca} ((\hbth_\mca \cdot\bx_\mca(\bth))\hbth_\mca - \bx_\mca(\bth)) \, ,
    \label{tmpeq: last}
\end{align}
where $\bx_\mca := \mbone_\mca \odot \bx$. We used $\dot{r}_\mca = -\la r_\mca - \eta \hbth_\mca \cdot \bx(\bth)$ (Theorem \ref{thm: EoM of r}).
Note that $\f{\eta}{r_\mca} ((\hbtha \cdot \bx_\mca(\bth))\hbtha - \bx_\mca(\bth))$ has no $\hbtha$ component; i.e., it is orthogonal to $\hbtha$.
When $\bx = \bm{0}$, Equation (\ref{tmpeq: last}) is equivalent to $\dhbth_\mca = - \f{1}{r^2_\mca} \nabla_\mca f (\hbth_\mca)$.
When $\bx = \tbx_0$, note that from Equation (\ref{eq: theta dot tbx0}),
\begin{align}
    \btha \cdot \tbx_{0\mca} = \f{1}{2} (- || \nabla_\mca f (\bth) ||^2 + \la^2 r_\mca^2) \, . \label{tmpeq: 141}
\end{align}
Therefore, 
\begin{align}
    (\hbtha \cdot \tbx_{0\mca}) \hbtha = - \f{1}{2} \f{1}{r_\mca^3}|| \nabla_\mca f (\hbtha + \bth_\mcac)||^2 \hbtha + \f{\la^2}{2}\btha \, .
\end{align}
Also, 
\begin{align}
    \tbx_{0\mca} 
    &= \f{1}{2} \mbone_\mca \odot (H(\bth) \nabla f (\bth) + \la H (\bth)\bth + \la \nabla f (\bth) + \la^2 \bth) \\
    &= \f{1}{2} (\mbone_\mca \odot H(\bth) \nabla f (\bth) + \la \mbone_\mca \odot H (\bth) (\btha + \bthac) + \la \nabla_\mca f (\bth) + \la^2 \btha  ) \\
    &= \f{1}{2} (\mbone_\mca \odot H(\bth) \nabla f (\bth) + 
        \la H_\mca (\bth) \btha + \la \nabla_\mca \nabla_\mcac^\top f (\bth) \bthac 
        + \la \nabla_\mca f (\bth) + \la^2 \btha  ) \\
    &= \f{1}{2} ( \mbone_\mca \odot H(\bth) \nabla f (\bth) + \la \nabla_\mca \nabla_\mcac^\top f (\bth) \bthac + \la^2\btha  ) \, . \label{tmpeq: 146}
\end{align}
Therefore,
\begin{align}
    &(\hbtha \cdot \tbx_{0\mca}(\bth)) \hbtha - \tbx_{0\mca} \\
    =& - \f{1}{2} \f{1}{r_\mca^3}|| \nabla_\mca f (\hbtha + \bthac) ||^2 \hbtha 
        + \f{\la^2}{2} \btha 
        - \f{1}{2} ( \mbone_\mca \odot H(\bth) \nabla f (\bth) + \la \nabla_\mca \nabla_\mcac^\top f (\bth) \bthac + \la^2 \btha )\\
    =& - \f{1}{2} \f{1}{r_\mca^3} ||\nabla_\mca f (\hbtha + \bthac) ||^2 \hbtha - \f{1}{2} \nabla_\mca \nabla^\top f (\bth) \nabla f (\bth) - \f{\la}{2} \nabla_\mca \nabla_\mcac^\top f (\bth) \bthac \\
    =& - \f{1}{2} \f{1}{r_\mca^3} ||\nabla_\mca f (\hbtha + \bthac) ||^2 \hbtha 
        - \f{1}{2} H_\mca (\bth) \nabla_\mca f (\bth) 
        - \f{1}{2} \nabla_\mca \nabla_\mcac^\top f (\bth) \nabla_\mcac f (\bth) \nn
    &-\f{1}{2} \nabla_\mca \nabla_\mcac^\top f (\bth) \la \bthac \\
    =& - \f{1}{2} \f{1}{r_\mca^3} ||\nabla_\mca f (\hbtha + \bthac) ||^2 \hbtha
        -\f{1}{2}\f{1}{r_\mca} H_\mca \nabla_\mca f (\hbtha + \bthac) \nn
    &-\f{1}{2} \f{1}{r_\mca} \nabla_\mca \nabla_\mcac^\top f (\hbtha + \bthac) (\nabla_\mcac f (\bth) + \la \bthac) \, .
\end{align}
Hence, 
\begin{align}
    \dhbth_\mca 
    =& - \f{1}{r_\mca^2} \nabla_\mca f (\hbtha + \bthac) - \f{\eta}{2r_\mca^2}( 
            H_\mca (\bth) \nabla_\mca f (\hbth_\mca + \bthac) \nn
            &\,\,\,\,\,\,\, + \nabla_\mca \nabla_\mcac^\top f (\hbtha + \bthac)(\nabla_\mcac f (\bth) \la \bthac) 
            + \f{1}{r_\mca^2}||\nabla_\mca f (\hbtha + \bthac)||^2 \hbtha
        ) \\
    =& - \f{1}{r_\mca^2} (
            I + \f{\eta}{2} H_\mca (\bth) \nn
    &+ \f{\eta}{2} (\nabla_\mcac f (\bth) + \la \bthac) \cdot \nabla_\mcac + \f{\eta}{2} \f{1}{r_\mca^2}\hbtha \nabla_\mca^\top f(\hbtha + \bthac)
        ) \nabla_\mca f (\hbtha + \bthac) \\ 
    =& - \f{1}{r_\mca^2} (
            I 
            + \f{\eta}{2} H_\mca (\bth) 
            + \f{\eta}{2} (\nabla_\mcac f (\bth) + \la \bthac) \cdot \nabla_\mcac \nn 
    &+ \f{\eta}{2} \hbtha \nabla_\mca^\top f(\bth)
        ) \nabla_\mca f (\hbtha + \bthac)
        \, .
\end{align}
\end{proof}


\subsection{Proof of Corollary \ref{cor: angular update at equil.}} \label{app: Proof of cor: angular update at equil.}
\begin{proof}
We use Lemmas \ref{lem: gradient constraints for scale-invariant layers} and \ref{lem: gradient relation of scale-invariant layers}. 
The angular update is defined as
\begin{align}
    \cos \De(t) = \f{\bthat}{r_\mca (t)}\cdot\f{\btha(t+\eta)}{r_\mca (t+\eta)} \, .
\end{align}
We evaluate the higher order terms in $\btha(t+\eta)$ and $r_\mca (t+\eta)$. First,
\begin{align}
    \btha(t+\eta) =& \bthat + \eta \dbth_\mca (t) + \f{\eta^2}{2}\ddbth_\mca (t) + O(\eta^3) \nn
    =& \bthat - \eta \nabla f(\bthat) -\eta \la \bthat - \eta^2 \bx_\mca (\btht) + \f{\eta^2}{2} \ddbth_\mca (t)  + O(\eta^3) \, .
\end{align}
The second derivative $\ddbtht$ is given by
\begin{align}
    \ddbth_\mca 
    &= \f{d}{dt}\dbth_\mca \\
    &= \f{d}{dt} (- \nabla_\mca f (\bth) -\la \btha) + O(\eta) \\
    &= - (\dbth \cdot \nabla) \nabla_\mca f(\bth) - \la \dbth_\mca + O (\eta) \\ 
    &= \nabla_\mca \nabla^\top f (\bth) (\nabla f (\bth) + \la \bth) + \la (\nabla_\mca f (\bth) + \la \btha) + O(\eta) \\
    &= \mbone_\mca \odot H (\bth) \nabla f(\bth) + \la H_\mca (\bth) \btha + \la \nabla_\mca \nabla_\mca^\top f (\bth) \bthac + \la \nabla_\mca f (\bth) + \la^2 \btha + O(\eta) \\
    &= \mbone_\mca \odot H(\bth) \nabla f (\bth) + \la \nabla_\mca \nabla_\mcac^\top f (\bth) \bthac + \la^2 \btha + O (\eta)
    \, .
\end{align}
Therefore,
\begin{align}
    \btha (t+\eta) =& \,\, \bthat - \eta \nabla_\mca f(\btht) - \eta \la \bthat - \eta^2 \bx_\mca(\btht) \nn
    &+ \f{\eta^2}{2}(  \mbone_\mca \odot H(\btht) \nabla f (\btht) + \la \nabla_\mca \nabla_\mcac^\top f (\btht) \bthac(t) + \la^2 \bthat ) + O(\eta^3) \, .
\end{align}
Next,
\begin{align}
    r_\mca(t+\eta) = r_\mca(t) + \dot{r}_\mca(t) \eta + \f{\eta^2}{2} \ddot{r}_\mca(t) + O(\eta^3) \, .
\end{align}
Because $\dot{r}_\mca = - \la r_\mca - \eta \hbtha \cdot \bx$ (use Equation (\ref{tmpeq: reom}) and $\dot{r^2}_\mca = 2 r_\mca \dot{r}_\mca$),
\begin{align}
    r_\mca(t+\eta) = r_\mca(t) - \eta \la r_\mca(t) - \eta^2 \hbthat \cdot \bx_\mca (\btht) + \f{\eta^2}{2} \ddot{r}_\mca(t) + O(\eta^3) \, .
\end{align}
In addition, because $\ddot{r}_\mca = -\la \dot{r}_\mca + O(\eta) = \la^2 r_\mca + O(\eta)$,
\begin{align}
    r_\mca(t+\eta) = r_\mca(t) - \eta \la r_\mca(t) -\eta^2 \hbthat \cdot \bx(\btht) + \f{\eta^2}{2} \la^2 r_\mca(t) + O(\eta^3) \, . 
\end{align}
Therefore, 
\begin{align}
    &\cos \De(t) = \f{\bthat}{r_\mca(t)}\cdot\f{\btha(t+\eta)}{r_\mca(t+\eta)} \\
    =& \f{\bthat}{r_\mca(t)} \cdot 
            \Big(\bthat - \eta \nabla_\mca f(\btht) - \eta \la \bthat - \eta^2 \bx_\mca(\btht) \nn 
            &+ \f{\eta^2}{2}( \mbone_\mca \odot H(\btht) \nabla f (\btht) + \la \nabla_\mca \nabla_\mcac^\top f (\btht) \bthac(t) + \la^2 \bthat ) \Big)        
            / \Big(r_\mca(t) - \eta \la r_\mca(t) \nn
            &-\eta^2 \hbthat \cdot \bx(\btht) + \f{\eta^2}{2} \la^2 r_\mca(t)\Big)
            \nn
    &+ O(\eta^3) \, .
\end{align}
Substituting $\bx_\mca=\tbx_{0 \mca}$, and using 
\begin{align}
    & \tbx_{0 \mca} = \f{1}{2} (\mbone_\mca \odot H (\bth) \nabla f (\bth) + \la \nabla_\mca \nabla_\mcac^\top f (\bth) \bthac + \la^2 \btha ) \,\,\,\,\, (\text{Equation \ref{tmpeq: 146}}) \\
    & \btha \cdot \tbx_{0 \mca} = \f{1}{2} (- ||\nabla_\mca f (\bth) ||^2 + \la^2 r^2_\mca) \,\,\,\,\, (\text{Equation \ref{tmpeq: 141}}) \, ,
\end{align}
we have
\begin{align}
    \cos \De(t) 
    =& \f{\bthat}{r_\mca (t)} \cdot \f{
            \bthat - \eta \nabla_\mca f (\btht) - \eta \la \bthat
        }{
            r_\mca (t) - \eta \la r_\mca (t) - \f{\eta^2}{2 r_\mca (t)} (- || \nabla_\mca f (\bth) ||^2 + \la^2 r_\mca^2(t) ) + \f{\eta^2}{2}\la^2 r_\mca (t)
        } + O(\eta^3) \\
    =& \f{ (1-\eta \la) r^2_\mca (t) }{(1-\eta \la) r_\mca^2(t) + \f{\eta^2}{2 r_\mca^2 (t)}||\nabla_\mca f (\hbtha + \bthac)||^2 } + O (\eta^3)
    \, .
\end{align}
At equilibrium, we have $r_\mca^2 \xrightarrow{t\rightarrow \infty} r^2_{\mca *} = \sqrt{\f{\eta}{2\la+\eta\la^2}}c_*$ and $|| \nabla_\mca f (\hbthat + \bthac(t)) || \xrightarrow{t\rightarrow \infty} c_*$ because of Corollary \ref{cor: r at equil.}. Thus,
\begin{align}
    \cos \De_* 
    =& \f{(1-\eta\la)r_{\mca *}^2}{(1-\eta\la) r_{\mca *}^2 + \f{\eta^2}{2 r_{\mca *}^2}c_*^2}
    + O(\eta^3) \\
    =& \f{1-\eta\la}{1-\eta^2\la^2/2} + O(\eta^3) \, , \label{tmpeq: cosdestar}
\end{align}
and we have shown the first statement of the theorem. 

The second statement follows from Equation (\ref{tmpeq: cosdestar}). By definition of cosine and tangent, we have
\begin{align}
    \tan \De_* 
    = \f{\sqrt{(1-\eta^2\la^2/2)^2 - (1-\eta\la)^2}}{1-\eta\la} +O(\eta^3) 
    = \f{\sqrt{2\eta\la-2\eta^2\la^2+\eta^4\la^4/4}}{1-\eta} +O(\eta^3) \, .
\end{align}
Therefore, using Taylor's series of the tangent function, we have
\begin{align}
    \De_* = \tan \De_* -\f{1}{3}\De_*^3 - \f{2}{15}\De_*^5 - ... = \sqrt{2\eta\la} + O((\eta\la)^{3/2}) \, .
\end{align}
This concludes the proof.
\end{proof}

\subsection{Proof of Theorem \ref{thm: EoM of bthperp (translation)}} \label{app: Proof of thm: EoM of bthperp (translation)}
We use the following Lemma:
\begin{lemma}\label{lem: gradient constraint and relation of trln-inv. layers}
For translation-invariant layers $\mca$, the following equations hold:
\begin{align}
    & \bthaperp\cdot\bthapara = 0 \\
    & \mbone_\mca \cdot \nabla f (\bth) = \mbone_\mca \cdot \nabla_\mca f (\bth) = 0 \\
    & P \nabla f (\bth) = P \nabla_\mca f (\bth) = 0 \\
    & \bthaperp \cdot \nabla f (\bth) = \bthaperp \cdot \nabla_\mca f (\bth) = 0 \\
    & H(\bth) \mbone_\mca = 0 \\
    & P H(\bth) = 0 \\
    & H(\bth) \bthaperp = 0 \\
    & \nabla f (\bth) = \nabla f (\bthapara + \bthac) \\
    & H(\bth) = H(\bthapara + \bthac) \, .
\end{align}
\end{lemma}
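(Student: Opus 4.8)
The plan is to derive every identity from the single defining property of a translation-invariant layer, $f(\bth + \al\mbone_\mca) = f(\bth)$ for all $\al \in \mbr$, together with the purely algebraic fact that $P = \f{1}{d_\mca}\mbone_\mca\mbone_\mca^\top$ is the orthogonal projection onto $\mathrm{span}(\mbone_\mca)$. I would dispatch the first identity, $\bthaperp \cdot \bthapara = 0$, immediately and algebraically: since $\mbone_\mca^\top\mbone_\mca = d_\mca$ the matrix $P$ is symmetric and idempotent ($P^2 = P$), so $\bthaperp\cdot\bthapara = \btha^\top P^\top(I-P)\btha = \btha^\top(P - P^2)\btha = 0$. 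No invariance is needed here.

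For the first-order (gradient) identities I would differentiate $f(\bth + \al\mbone_\mca) = f(\bth)$ in $\al$ and set $\al = 0$, obtaining $\mbone_\mca\cdot\nabla f(\bth) = 0$; the restricted version $\mbone_\mca\cdot\nabla_\mca f(\bth) = 0$ follows because $(\mbone_\mca)_i\in\{0,1\}$ gives $\mbone_\mca\cdot(\mbone_\mca\odot\nabla f) = \mbone_\mca\cdot\nabla f$. Identities (3) and (4) are then immediate consequences: $P\nabla f = \f{1}{d_\mca}\mbone_\mca(\mbone_\mca\cdot\nabla f) = 0$ and $\bthaperp\cdot\nabla f = (P\btha)^\top\nabla f = \btha^\top(P\nabla f) = 0$, with the $\nabla_\mca$ variants handled verbatim.

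For the Hessian identities I would differentiate the pointwise relation $\mbone_\mca\cdot\nabla f(\bth) = 0$ once more with respect to $\bth$, which yields $H(\bth)\mbone_\mca = 0$. Then $PH = \f{1}{d_\mca}\mbone_\mca(\mbone_\mca^\top H) = 0$ (using $\mbone_\mca^\top H = (H\mbone_\mca)^\top = 0$ by symmetry of $H$) and $H\bthaperp = \f{\mbone_\mca\cdot\btha}{d_\mca}H\mbone_\mca = 0$ follow by substitution. The last two identities require the sharper observation that the gradient and Hessian, not merely the value of $f$, are invariant under translation along $\mbone_\mca$: taking the gradient (resp.\ Hessian) of both sides of $f(\bth + \al\mbone_\mca) = f(\bth)$ with respect to $\bth$ at fixed $\al$ gives $\nabla f(\bth+\al\mbone_\mca) = \nabla f(\bth)$ and $H(\bth+\al\mbone_\mca) = H(\bth)$ for every $\al$. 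Observing that $\bthapara + \bthac = \bth - \bthaperp = \bth + \al\mbone_\mca$ for the particular choice $\al = -\f{\mbone_\mca\cdot\btha}{d_\mca}$, I would read off $\nabla f(\bth) = \nabla f(\bthapara+\bthac)$ and $H(\bth) = H(\bthapara+\bthac)$.

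I expect the only genuinely subtle step to be this last one: one must resist inferring the gradient identity from value-invariance alone, and instead differentiate the functional equation in $\bth$ with $\al$ held fixed, so that the chain rule contributes only the identity Jacobian of the map $\bth\mapsto\bth+\al\mbone_\mca$. Everything else is routine first- and second-order differentiation combined with the projection algebra, and the derivations reuse earlier results in a natural chain (the gradient constraint feeds the Hessian constraint, which in turn feeds $PH = 0$ and $H\bthaperp = 0$).
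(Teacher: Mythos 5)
Your proof is correct and follows essentially the same route as the paper's: the algebraic projection identities for $P$, differentiation of $f(\bth+\al\mbone_\mca)=f(\bth)$ in $\al$ at $\al=0$ for the gradient constraints, differentiation of $\mbone_\mca\cdot\nabla f(\bth)=0$ in $\bth$ for the Hessian constraints, and differentiation of the invariance equation in $\bth$ at fixed $\al$ (then choosing $\al=-\f{\mbone_\mca\cdot\btha}{d_\mca}$) for the last two identities. The one step you flag as subtle --- deriving gradient/Hessian invariance by differentiating the functional equation in $\bth$ rather than from value-invariance alone --- is exactly how the paper handles it as well.
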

\begin{proof}
Note that $P^\top = P$, $P^2 = P$, and thus, $P^\top (I - P) = P(I-P) = P - P = 0$. Therefore,
\begin{align}
    \bthaperp \cdot \bthapara = \btha^\top P^\top (I-P) \btha = 0 \, .
\end{align}
Next, differentiating $f(\bth) = f(\bth+\al \mbone_\mca)$ with respect to $\al$, we have
\begin{align}
    \mbone_\mca \cdot \nabla f (\bth + \al \mbone_\mca) = 0 \, .
\end{align}
For $\al = 0$, we have 
\begin{align}
    \mbone_\mca \cdot \nabla f (\bth) = \mbone_\mca \cdot \nabla_\mca f (\bth) = 0 \, .
    \label{tmpeq: 1nf}
\end{align}
Therefore, 
\begin{align}
    P \nabla f (\bth) = P \nabla_\mca f (\bth) = (\mbone_\mca \cdot \nabla f (\bth)) \f{1}{d_\mca}\mbone_\mca = 0 
\end{align}
and
\begin{align}
    \bthaperp \cdot \nabla f (\bth) = \f{\mbone_\mca \cdot \btha}{d_\mca} \mbone_\mca \cdot \nabla f (\bth) = \f{\mbone_\mca \cdot \btha}{d_\mca} \mbone_\mca \cdot \nabla_\mca f (\bth) = 0 \, .
\end{align}
Next, differentiating Equation \ref{tmpeq: 1nf} with respect to $\bth$, we have
\begin{align}
    H (\bth) \mbone_\mca = 0 \, .
\end{align}
Therefore,
\begin{align}
    PH(\bth) = \f{\mbone_\mca}{d_\mca} \mbone_\mca^\top H (\bth) = 0
\end{align}
and
\begin{align}
    H(\bth) \bthaperp = \f{\mbone_\mca \cdot \btha}{d_\mca} H(\bth) \mbone_\mca = 0 \, . 
\end{align}
Next, differentiating $f(\bth) = f(\bth + \al \mbone_\mca)$ with respect to $\bth$, we have
\begin{align}
    \nabla f (\bth) = \nabla f(\bth + \al \mbone_\mca)
\end{align}
and
\begin{align}
    H(\bth) = H(\bth + \al \mbone_\mca) \, .
\end{align}
For $\al = - \f{\mbone_\mca \cdot \btha}{d_\mca}$, we have
\begin{align}
    \nabla f (\bth) = \nabla f (\bth - P \btha) = \nabla f (\btha + \bthac - P \btha) = \nabla f (\bthapara + \bthac) 
\end{align}
and 
\begin{align}
    H(\bth) = H(\bthapara + \bthac) \, .
\end{align}
\end{proof}
We begin the proof of Theorem \ref{thm: EoM of bthperp (translation)}.
\begin{proof}
We use Lemma \ref{lem: gradient constraint and relation of trln-inv. layers}.
\begin{align}
    \dbthaperp = P \dot{\bth}_\mca = P(- \nabla_\mca f (\bth) - \la \btha - \eta \bx_\mca) = - \la \bthaperp - \eta P \bx_\mca \, .
\end{align}
When $\bth = \bm{0}$, EoM is
\begin{align}
    \dbthaperpt =  -\la \bthat \, .
\end{align}
When $\bx = \tbx_{0\mca}$, note that 
\begin{align}
    \tbx_{0} = \f{1}{2}(H(\bth) \nabla f (\bth) + \la \nabla f(\bth) + \la H (\bth) \bth + \la^2 \bth)
\end{align}
and
\begin{align}
    \tbx_0 \cdot \mbone_\mca = \tbx_{0\mca} \cdot \mbone_\mca = \f{\la^2}{2}\mbone_\mca \cdot \btha \, .
\end{align}
Thus,
\begin{align}
    P \tbx_0 = \f{\la^2}{2}\bthaperp \, .
\end{align}
Therefore,
\begin{align}
    \dbthaperp = - \la \bthaperp - \eta P \tbx_{0\mca} = -\la \bthaperp -\eta \f{\la^2}{2}\bthaperp = - (\la + \f{\eta \la^2}{2})\bthaperp \, .
\end{align}
Using $\dot{\bm{v}}(t) = -a\bm{v}(t) \Leftrightarrow \bm{v}(t) = \bm{v}(0)e^{-at}$, we can show the remaining equations.
\end{proof}

\subsection{Proof of Theorem \ref{thm: EoM of bthpara (translation)}} \label{app: Proof of thm: EoM of bthpara (translation)}
\begin{proof}
We use Lemma \ref{lem: gradient constraint and relation of trln-inv. layers}.
First, note that 
\begin{align}
    \dbthapara = \dbth_\mca - \dbthaperp \, .
\end{align}
Because 
\begin{align}
    \dbth_\mca = - \nabla_\mca f (\bth) - \la \btha - \eta \bx_\mca
\end{align}
and
\begin{align}
    \dbthaperp = -\la \bthaperp - \eta P \bx_\mca \, ,
\end{align}
we have
\begin{align}
    \dbthapara = \dbth_\mca - \dbthaperp &= - \nabla_\mca f (\bth) - \la \bthapara - \eta (I-P) \bx_\mca \\
    &= - \nabla_\mca f (\bthapara + \bthac) - \la \bthapara - \eta (I-P) \bx_\mca
    \, .
    \label{tmpeq: dbthapara}
\end{align}
Note that $\dbthapara$ is orthogonal to $\bthapara$ because $\bthaperp \cdot \dbthapara = -\bthaperp \cdot \nabla_\mca f (\bth) - \la \bthaperp \cdot \bthapara - \eta \bthaperp^\top (I-P) \tbx_{0\mca} = 0 - 0 - 0 = 0$ (we used $\bthaperp^\top (I-P) = \btha^\top P^\top (I-P) = \btha^\top (P-P) = 0$).

When $\bx = \bm{0}$, we have
\begin{align}
    \dbthapara = - \nabla_\mca f (\bth) - \la \bthapara = -\nabla_\mca f (\bthapara + \bthac) - \la \bthapara \, .
\end{align}
When $\bx = \tbx_0$, we have 
\begin{align}
    \dbthapara 
    &= - \nabla_\mca f (\bth) - \la \bthapara \nn
    &\,\,\,\,\,\,\,- \eta(\f{1}{2}(\mbone_\mca H(\bth) \nabla f (\bth) + \la \nabla_\mca f (\bth) + \la \mbone_\mca \odot H (\bth) \bth + \la^2 \btha) - \f{\la^2}{2}\bthaperp) \\
    &= -\nabla_\mca f (\bth) - \la \bthapara - \eta ( \f{1}{2}\mbone_\mca \odot H (\bth) \nabla f (\bth) + \f{\la}{2} \nabla_\mca f (\bth) + \f{\la}{2}\mbone_\mca \odot H (\bth) \bth  + \la{\la^2}{2} \bthapara ) \\
    &= - \la \bthapara - \f{\eta \la^2}{2} \bthapara - \nabla_\mca f (\bth) -\f{\eta \la}{2} \nabla_\mca f (\bth) - \f{\eta}{2} \mbone_\mca \odot H (\bth) \nabla f (\bth) - \f{\eta \la}{2} \mbone_\mca \odot H (\bth) \bth \\
    &= - (1+\f{\eta\la}{2})(\nabla_\mca f (\bth) + \la \bthapara) \nn
        & \,\,\,\,\,\,\, - \f{\eta}{2}H_\mca(\bth) \nabla_\mca f (\bth) - \f{\eta}{2} \nabla_\mca \nabla_\mcac^\top f (\bth) \nabla_\mcac f (\bth) - \f{\eta\la}{2} H_\mca (\bth) \btha - \f{\eta \la}{2} \nabla_\mca \nabla_\mcac^\top f (\bth) \bthac \\
    &= - (I + \f{\eta\la}{2} I + \f{\eta}{2} H_\mca(\bthapara + \bthac)) (\nabla_\mca f (\bthapara + \bthac) + \la \bthapara) \nn
    & \,\,\,\,\,\,\, - \f{\eta} {2} \nabla_\mca \nabla_\mcac^\top f (\bthapara + \bthac) (\nabla_\mcac f (\bthapara + \bthac) + \la \bthac) \\
    &= -\la (I + \f{\eta \la}{2} I + \f{\eta}{2} H_\mca (\bthapara + \bthac)) \bthapara - (I + \f{\eta\la}{2}I + \f{\eta}{2} H_\mca (\bthapara + \bthac) \nn
    & \,\,\,\,\,\,\,  + \f{\eta}{2}I ((\nabla_\mcac f (\bthapara + \bthac)+\la\bthac) \cdot \nabla_\mcac)) \nabla_\mca f (\bthapara + \bthac) \, .
\end{align}
\end{proof}

\subsection{Proof of Theorem \ref{thm: Equation of Learning}} \label{app: Proof of thm: Equation of Learning}
\begin{proof}
First, note that
\begin{align}
    \nabla f (\bth) \cdot \bG(\bth, \al) = 0 \, ,
\end{align}
which can be shown by differentiating $f(\bth) = f(\bG(\bth, \al))$ with respect to $\al$.
Thus, assuming $\bth \cdot ((\nabla f (\bth) \cdot \nabla) \bG(\bth, \al))$ and using $\dbtht = -  \nabla f (\btht) - \la \btht - \eta \bx(\btht)$, we have
\begin{align}
    &\f{d}{dt}(\btht \cdot \bG(\btht, \al)) \\
    =& \dbth \cdot \bG (\bth, \al) + \bth \cdot (\dbth \cdot \nabla \bG(\bth, \al)) \\
    =& - \nabla f (\bth) \cdot \bG (\bth, \al) - \la \bth \cdot \bG (\bth, \al) - \eta \bx(\bth) \cdot \bG (\bth, \al) + \bth \cdot (-(\nabla f (\bth) \cdot \nabla) - \la (\bth \cdot \nabla) \nn
    &- \eta \bx(\bth) \cdot \nabla ) \bG (\bth, \al) \\
    =& - \la (\bth \cdot \bG(\bth,\al) + \bth \cdot ((\bth \cdot \nabla) \bG(\bth, \al))) - \eta \bx(\bth) \cdot \bG(\bth, \al) - \bth \cdot ((\eta \bx(\bth) \cdot \nabla) \bG(\bth, \al)) \, .
\end{align}
Using $\dot{\bm{v}}(t) = - a \bm{v}(t) + \bm{u}t \Leftrightarrow \bm{v}(t) = \bm{v}(0) e^{-at} + \int_0^t d\tau e^{-a(t-\tau)} \bm{u}(\tau)$, we have
\begin{align}
    &\,\,\btht \cdot \bG(\btht, \al) \\
    =&\,\, \bth(0) \cdot \bG(\bth(0), \al)  \\
    &- \la \int_0^t d\tau e^{-\la(t-\ta)} \bth(\tau) \cdot ((\bth(\ta)\cdot \nabla) \bG(\bth(\ta), \al)) \\
    &- \eta \int_0^t d \tau e^{-\la (t-\ta)} (\bx(\bth(\ta)) \cdot \bG(\bth(\ta), \al) + \bth(\ta) \cdot ((\bx(\bth(\ta))\cdot \nabla ) \bG(\bth(\ta), \al)) )  \, .
\end{align}

\end{proof}

\clearpage
\section{Learning Dynamics Induced by Symmetry Breaking: Neural Mechanics} 
\label{app: Broken Conservation Laws}
To show the benefits of the counter term, we apply it to broken conservation laws \cite{kunin2021neural_mechanics_1}.
In \cite{kunin2021neural_mechanics_1}, the authors build relationships between the symmetries of weights and conserved quantities (i.e., Noether's theorem \cite{noether1918invariante_variationsprobleme, noether1971invariant_variation_problem} for DNNs), and they also investigate the dynamics of DNNs under symmetry breaking. 
We address three shortcomings of their analysis: 1) it includes a counter term only up to order one, 2) a discretization error analysis is missing, and 3) their experiment makes too optimistic an assumption on gradients.

First, we generalize broken conservation laws (Equations (18--20) in \cite{kunin2021neural_mechanics_1}) by adding all orders of the counter term. 
Let $\bG (\bth, \al) := \partial_\al \bm{\psi}(\bth, \al)$, which is called the generator of symmetry transformation $\bpsi$.
\begin{theorem}[Generalized broken conservation law] \label{thm: Equation of Learning}
Let $f$ be symmetric under transformation $\bpsi$. Assume that $\bm{G}$ satisfies $\btht \cdot \{ (\nabla f (\btht) \cdot \nabla) \bgbtht\} = 0$. 
Then,
\begin{align}
    &\f{d}{dt} ( \btht \cdot \bm{G} (\btht, \al)) = \nn
    &- \la \btht \cdot \bgbtht - \la \btht \cdot \{ (\btht \cdot \nabla) \bgbtht \}
    - \eta (\bm{\xi} (\btht) \cdot \nabla) \cdot (\bm{\xi} (\btht) \cdot \bgbtht) \, .
    \label{eq: EoL}
\end{align}
Note that the assumption holds for translation, scale, and rescale transformation \cite{kunin2021neural_mechanics_1}.
Furthermore, Equation (\ref{eq: EoL}) can be formally solved:
\begin{align}
    &\btht \cdot \bm{G} (\btht, \al) = 
    \bth (0) \cdot \bm{G} (\bth (0), \al) e^{-\la t}\nn
    &- \la \int_0^t e^{-\la(t - \tau)} \bth (\tau) \cdot \{ (\bth (\tau) \cdot \nabla) \bm{G} (\bth (\tau), \al) \} d\tau \nn
    &- \eta \int_0^t e^{-\la(t - \tau)} (\bm{\xi}(\bth(\tau)) \cdot \nabla) (\bm{\xi} (\bth (\tau)) \cdot \bm{G} (\bth(\tau), \al)) d\tau \, .
    \label{eq: integral of EoL}
\end{align}
\end{theorem}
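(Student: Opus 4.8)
The plan is to track the time evolution of the scalar quantity $q(t) := \btht\cdot\bgbtht$ directly, using the chain rule together with the EoM $\dbtht = -\nabla f(\btht) - \la\btht - \eta\bx(\btht)$ from Equation (\ref{eq: gradient flow}). First I would record the single consequence of symmetry that is needed: differentiating the symmetry identity $f(\bpsi(\bth,\al)) = f(\bth)$ with respect to $\al$ yields $\nabla f(\bth)\cdot\bm{G}(\bth,\al) = 0$, which of course holds along the trajectory as well. This fact, together with the stated hypothesis $\btht\cdot\{(\nabla f(\btht)\cdot\nabla)\bgbtht\} = 0$, is exactly what I expect to annihilate every term that is neither proportional to $\la$ (weight decay) nor to $\eta$ (counter term).

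Next I would differentiate. Applying the product rule and then the chain rule to the generator (which depends on $t$ only through $\btht$) gives
\begin{align}
    \f{d}{dt}(\btht\cdot\bgbtht)
    = \dbtht\cdot\bgbtht + \btht\cdot\big((\dbtht\cdot\nabla)\bgbtht\big) \, ,
\end{align}
and then I substitute the EoM for $\dbtht$ in both occurrences. In the first summand the $\nabla f$ contribution drops out by $\nabla f\cdot\bm{G}=0$; in the second summand the $\nabla f$ contribution drops out by the hypothesis. What survives are the weight-decay terms $-\la\btht\cdot\bgbtht - \la\,\btht\cdot\{(\btht\cdot\nabla)\bgbtht\}$ and the counter-term terms $-\eta\,\bx(\btht)\cdot\bgbtht - \eta\,\btht\cdot\{(\bx(\btht)\cdot\nabla)\bgbtht\}$, which is Equation (\ref{eq: EoL}). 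Because $\bx$ here is the \emph{full} counter term (all orders in $\eta$, from Theorem \ref{thm: Solution of EoLDE}), this already generalizes the first-order broken conservation laws of \cite{kunin2021neural_mechanics_1}.

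Finally, to reach the integral form I would read Equation (\ref{eq: EoL}) as a scalar linear ODE $\dot q(t) = -\la q(t) + u(t)$, where the source $u(t)$ gathers the remaining $\la$- and $\eta$-terms evaluated along the trajectory, and apply the integrating-factor identity $\dot q = -a q + u \Leftrightarrow q(t) = q(0)e^{-at} + \int_0^t e^{-a(t-\tau)}u(\tau)\,d\tau$ to obtain Equation (\ref{eq: integral of EoL}); this is only a \emph{formal} solution, since $u$ itself depends on the unknown trajectory. The computation is essentially routine, so the only genuine care points are (i) applying the chain rule to $\bgbtht$ correctly, so that the Jacobian-of-$\bm{G}$ term appears, and (ii) verifying that the two hypotheses line up exactly with the two $\nabla f$ terms that must cancel. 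I expect step (ii)---confirming that the somewhat opaque assumption $\btht\cdot\{(\nabla f\cdot\nabla)\bgbtht\}=0$ is the correct one, and recalling that it holds for translation, scale, and rescale symmetries as the paper notes---to be the main conceptual obstacle, since everything else is direct differentiation.
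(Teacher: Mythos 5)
Your proposal is correct and follows essentially the same route as the paper's proof: differentiate $f(\bpsi(\bth,\al))=f(\bth)$ in $\al$ to get $\nabla f\cdot\bm{G}=0$, apply the product/chain rule to $\frac{d}{dt}(\btht\cdot\bm{G}(\btht,\al))$, substitute the EoM so that the two $\nabla f$ contributions cancel via $\nabla f\cdot\bm{G}=0$ and the stated hypothesis, and then solve the resulting linear ODE $\dot q=-\la q+u$ by the integrating-factor identity. No gaps; the care points you flag (correct Jacobian term for $\bm{G}$, matching the two hypotheses to the two cancellations) are exactly the steps the paper executes.
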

The proof is given in Appendix \ref{app: Proof of thm: Equation of Learning}.
Now, Equation (\ref{eq: integral of EoL}) includes all orders of the counter term $\bx = \sum_{\al=0}^\infty \tbx_\al$. 
We can reproduce \cite{kunin2021neural_mechanics_1} by setting $\bx = \tbx_0$.
In addition, we already know the discretization error (Corollary \ref{cor: Discretization error of bx}), which is lacking in \cite{kunin2021neural_mechanics_1}.
We also provide empirical results on Equation (\ref{eq: integral of EoL}) in the following sections.

\subsection{Scale-invariant Layers} \label{app: Scale-invariant Layers}
For scale transformation, $\bG (\bth, \al) = \al_\mca \bth$, and thus, the left hand side of Equation (\ref{eq: integral of EoL}) becomes $||\btha||^2$. Therefore, Equation \ref{eq: integral of EoL} describes the temporal evolution of the weight norm of scale-invariant layers. Figure \ref{fig: norm squared} shows the temporal evolution of $||\btha||^2$ for the network explained in Section \ref{sec: Experiment}. Figure \ref{fig: norm squared gap} shows the gap of $||\btha||^2$ between GD and its theoretical predictions (GF and EoM) (Equation \ref{eq: integral of EoL}). We see that the counter term reduces the gap.
There is an improvement in the experimental settings compared with \cite{kunin2021neural_mechanics_1}. As described in \cite{kunin2021neural_mechanics_1}, they substitute the gradients computed in GD for the gradients used for GF's simulation instead of using small learning rates to simulate continuous trajectories of GF. This approximation reduces computational costs, but it causes an additional gap between the surrogate gradients and the true gradients of GF along the continuous trajectories. Therefore, we avoid this approximation; we use a small learning rate ($\eta=10^{-5}$) to simulate GF and EoM, as explained in Section \ref{sec: Experiment}.

\begin{figure}[htbp]
	\centering
	\includegraphics[width=0.8\linewidth]
      {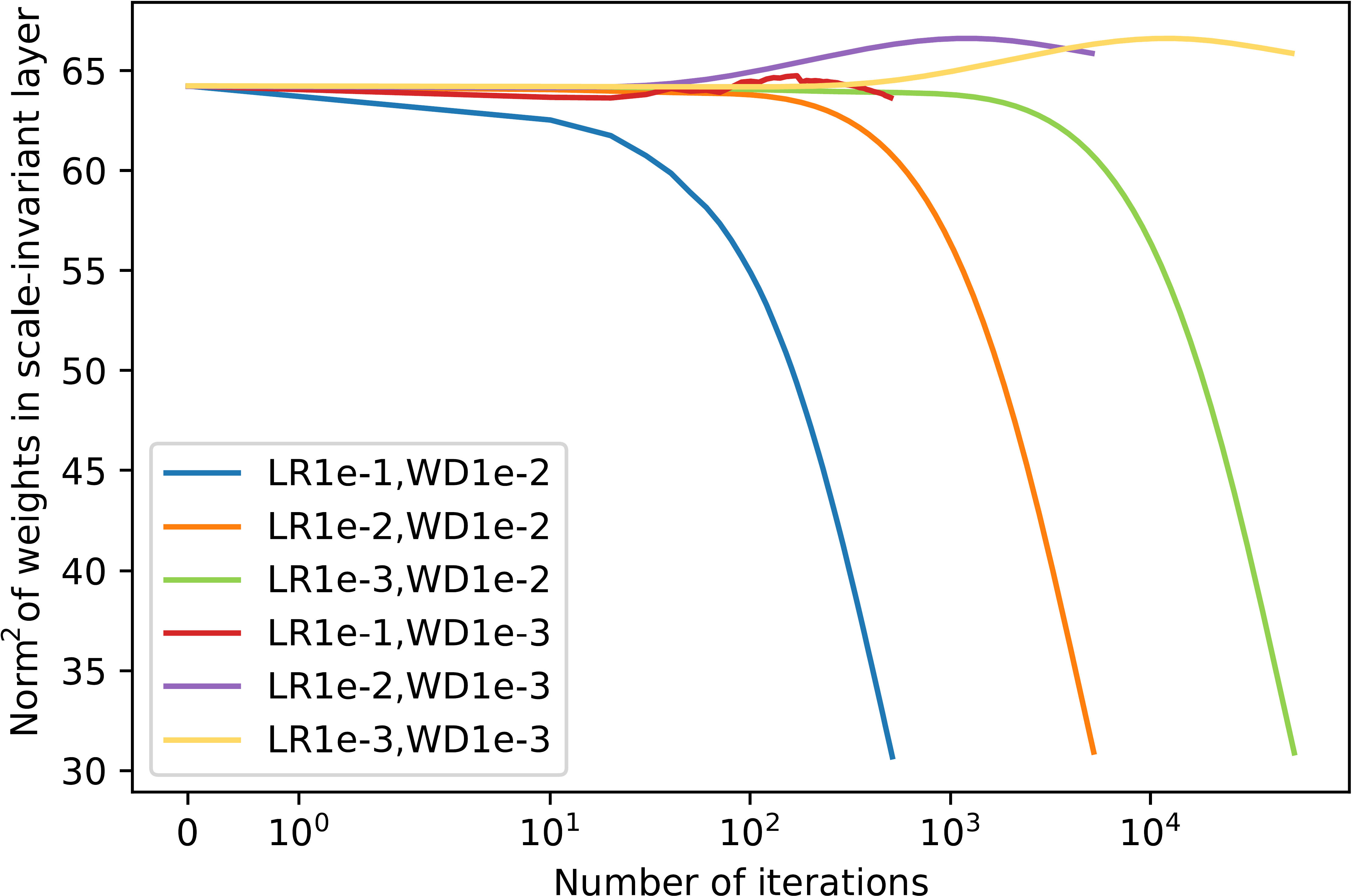}
	\caption{\textbf{Dynamics of squared weight norm of scale-invariant layer.} LR and WD mean learning rate and weight decay, respectively. See Section \ref{sec: Experiment} for experimental settings.}
	\label{fig: norm squared}
\end{figure}

\begin{figure}[htbp]
	\centering
	\includegraphics[width=0.8\linewidth]
      {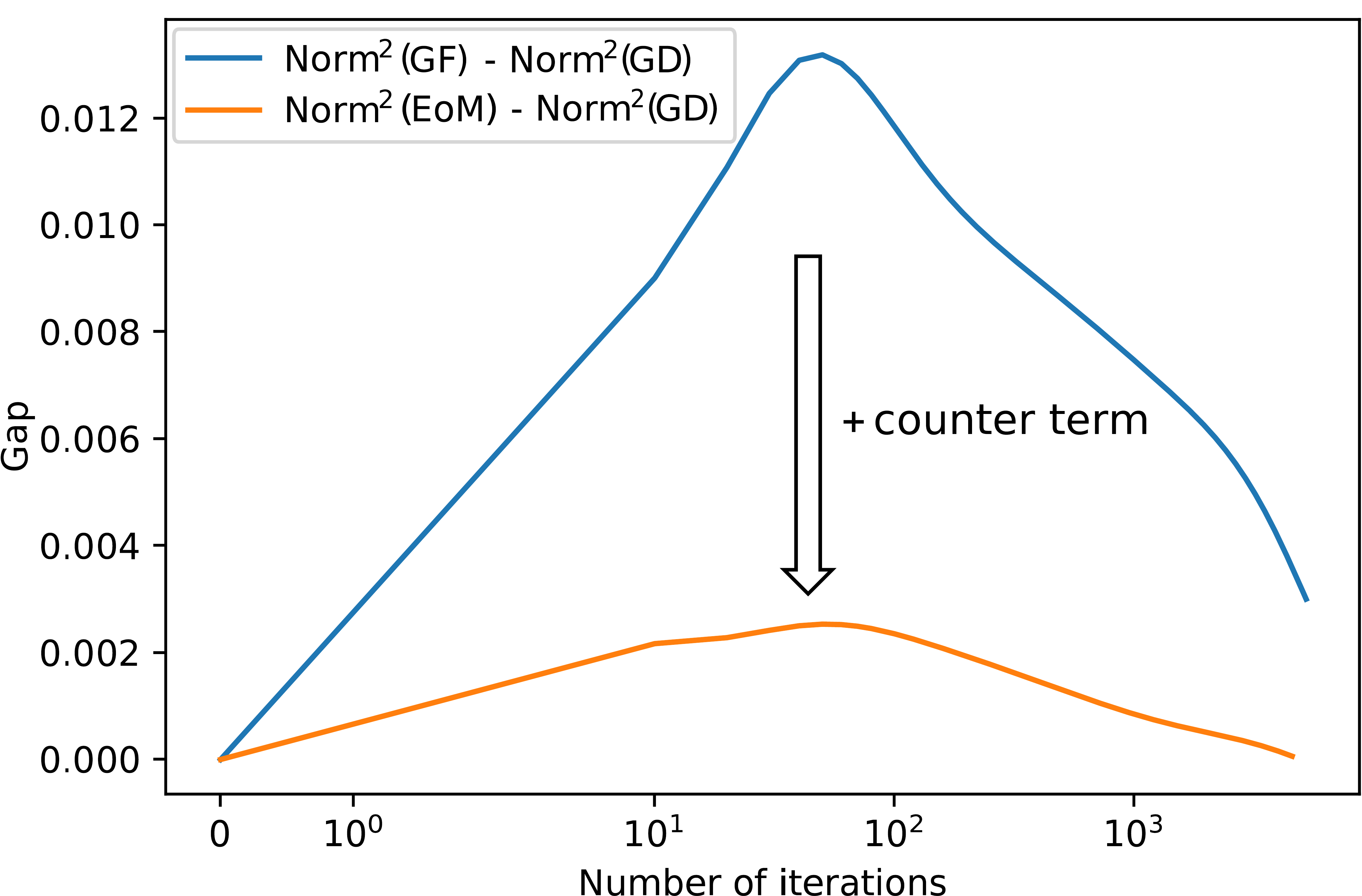}
	\caption{\textbf{Discrepancy between actual dynamics of GD and its theoretical prediction (GF and EoM) of squared weight norm of scale-invariant layer.} We see that our counter term reduces the gap between the actual dynamics of GD and its theoretical prediction. See Section \ref{sec: Experiment} for experimental settings.}
	\label{fig: norm squared gap}
\end{figure}

\clearpage
\subsection{Translation-invariant Layers}
We also provide an empirical result for translation-invariant layers. For translation transformation, $\bG (\bth, \al) = \al \mbone_\mca$ and thus the left hand side of Equation (\ref{eq: integral of EoL}) becomes $\mbone_\mca \cdot \btha$ (sum of weights). Therefore, Equation (\ref{eq: integral of EoL}) describes the temporal evolution of the sum of weights of translation-invariant layers. Figure \ref{fig: sum of weights} shows the temporal evolution of $\mbone_\mca \cdot \btha$ for the network described in Section \ref{sec: Experiment}. Figure \ref{fig: sum of weights gap} shows the gap of $\mbone_\mca \cdot \btha$ between GD and its theoretical predictions (GF and EoM) (Equation \ref{eq: integral of EoL}). We see that the counter term reduces the gap.

\begin{figure}[htbp]
	\centering
	\includegraphics[width=0.8\linewidth]
      {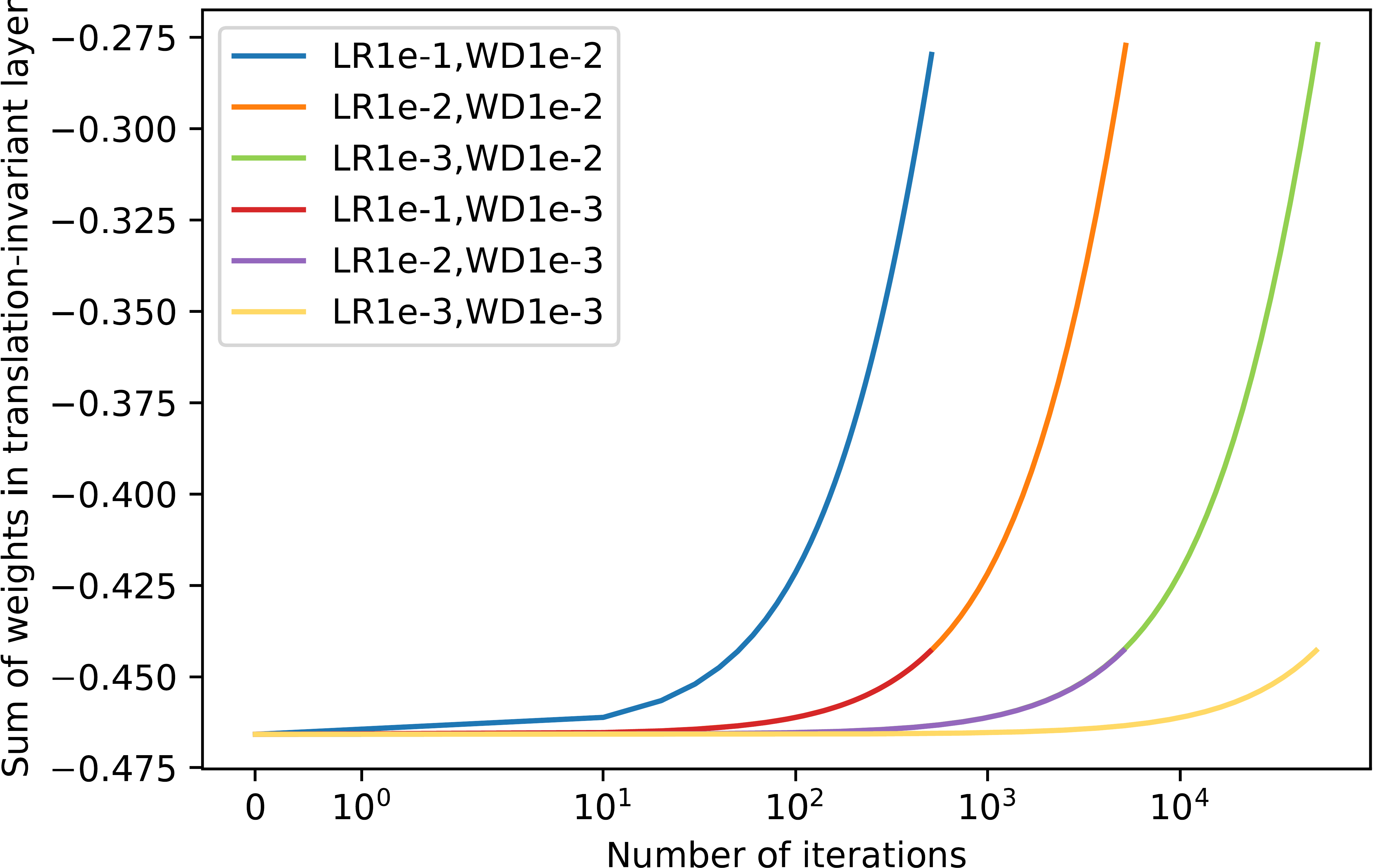}
	\caption{\textbf{Sum of weights of translation-invariant layer.} LR and WD mean learning rate and weight decay, respectively. See Section \ref{sec: Experiment} for experimental settings.}
	\label{fig: sum of weights}
\end{figure}

\begin{figure}[htbp]
	\centering
	\includegraphics[width=0.8\linewidth]
      {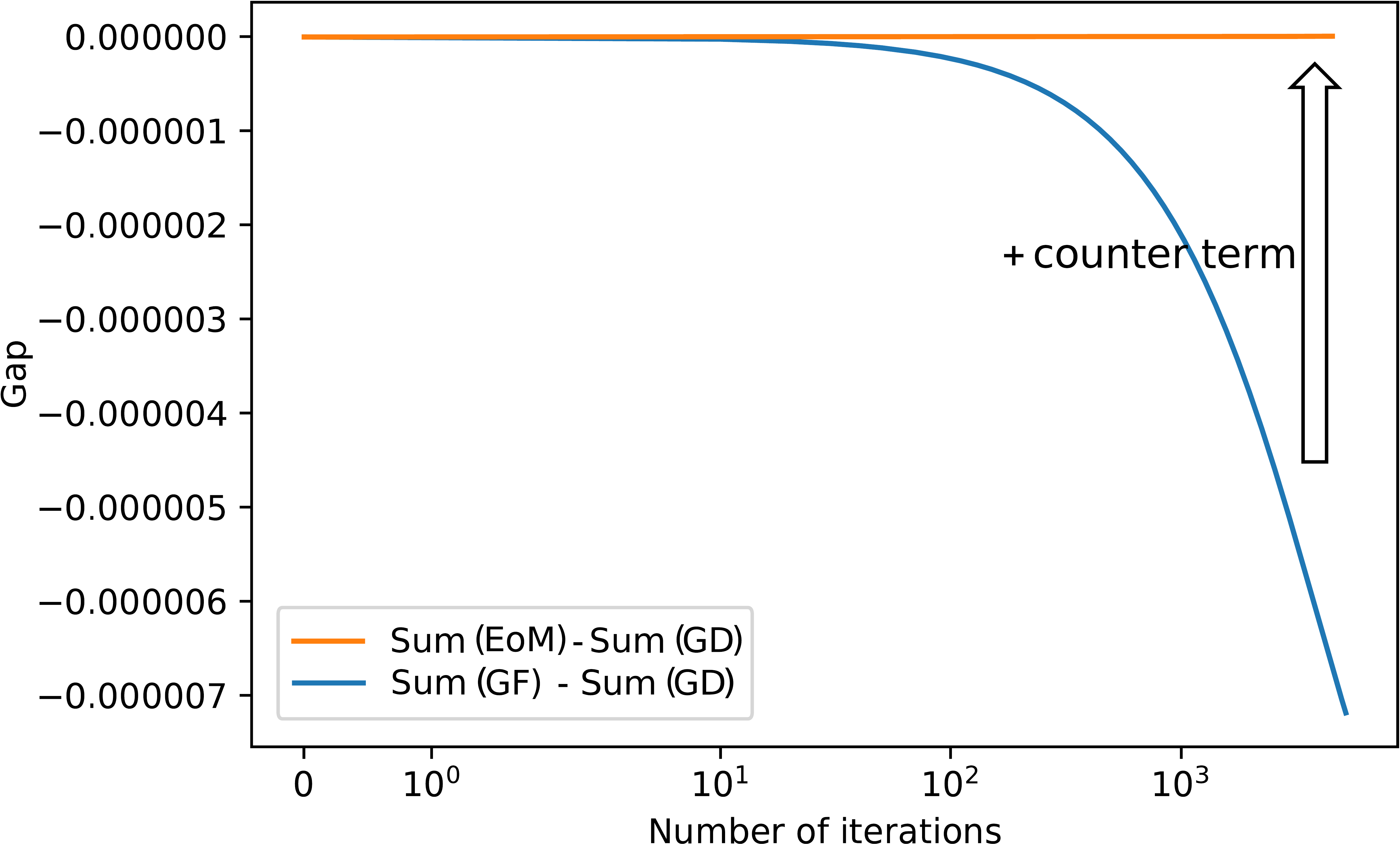}
	\caption{\textbf{Discrepancy between actual dynamics of GD and its theoretical prediction (GF and EoM) of sum of weights of translation-invariant layer.} We see that our counter term reduces the gap. See Section \ref{sec: Experiment} for experimental settings.}
	\label{fig: sum of weights gap}
\end{figure}

\clearpage
\section{Equation of Motion for 
$\hbth_\mca$} 
\label{app: Equation of Motion for hbth}
For completeness, we construct the EoM for $\hbth_\mca$ for scale-invariant layers $\mca$. See Section \ref{sec: Learning Dynamics of Scale-invariant Layers} for the EoM for $r_\mca$.
\begin{theorem}[EoM for $\hbtha$] \label{thm: EoM of hattheta}
EoM (\ref{eq: gradient flow}) gives $\dot{\hbth}_\mca(t) = - \f{1}{r_\mca^2(t)} \nabla_\mca f(\hbth_\mca(t)) + \f{\eta}{r_\mca(t)} ( (\hbth_\mca(t) \cdot \bx(\btht)) \, \hbth_\mca(t) - \bx(\btht) )$.
Specifically, this is equivalent to:
\begin{align}
    \dot{\hbth}_\mca (t) = - \f{1}{r_\mca^2(t)} \nabla_\mca f(\hbth_\mca(t))
    \label{eq: EoM of dhbtht with xi=0}
\end{align}
for $\bx = \bm{0}$ (GF) and
\begin{align}
    \dhbth_\mca 
    = - \f{1}{r_\mca^2} \left(
            I 
            + \f{\eta}{2} H_\mca (\bth) 
            + \f{\eta}{2} I ((\nabla_\mcac f (\bth) + \la \bthac) \cdot \nabla_\mcac) 
            + \f{\eta}{2} \hbtha \nabla_\mca^\top f(\bth)
        \right) \nabla_\mca f (\hbtha + \bthac) \, 
    \label{eq: EoM of dhbtht with xi=xi0}
\end{align}
for $\bx = \tbx_0$ (EoM), where $H_\mca (\hbth_\mca) := (\mbone_\mca \odot \nabla) (\mbone_\mca \odot \nabla)^\top f (\bth) |_{\bth = \hbth_\mca}$.
\end{theorem}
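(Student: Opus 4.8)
The plan is to read off the dynamics of the unit vector $\hbtha = \btha / r_\mca$ from the dynamics of its two ingredients, both of which are already fixed by the EoM. First I would differentiate the quotient,
\begin{align}
    \dhbth_\mca = \f{d}{dt} \f{\btha}{r_\mca}
    = -\f{\dot{r}_\mca}{r_\mca^2} \btha + \f{1}{r_\mca} \dbth_\mca ,
\end{align}
and substitute the $\mca$-components of EoM (\ref{eq: gradient flow}), namely $\dbth_\mca = -\nabla_\mca f(\bth) - \la \btha - \eta \bx_\mca(\bth)$, together with $\dot{r}_\mca = -\la r_\mca - \eta \hbtha \cdot \bx(\bth)$, which I obtain from Theorem \ref{thm: EoM of r} via $\dot{r_\mca^2} = 2 r_\mca \dot{r}_\mca$.

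After substitution the two contributions proportional to $\la \hbtha$ cancel, leaving
\begin{align}
    \dhbth_\mca = -\f{1}{r_\mca} \nabla_\mca f(\bth)
    + \f{\eta}{r_\mca} \big( (\hbtha \cdot \bx_\mca(\bth)) \hbtha - \bx_\mca(\bth) \big) .
\end{align}
I would then apply Lemma \ref{lem: gradient relation of scale-invariant layers}, $\nabla_\mca f(\bth) = r_\mca^{-1} \nabla_\mca f(\hbtha + \bthac)$, to turn the first term into $-r_\mca^{-2} \nabla_\mca f(\hbtha + \bthac)$; since $\hbtha$ is supported on $\mca$ we have $\hbtha \cdot \bx_\mca = \hbtha \cdot \bx$, and this yields the general statement. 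The case $\bx = \bm{0}$ in (\ref{eq: EoM of dhbtht with xi=0}) is then immediate.

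The substantive case is $\bx = \tbx_0$, where I must reduce the bracket $(\hbtha \cdot \tbx_{0\mca}) \hbtha - \tbx_{0\mca}$. I would first record the scalar $\btha \cdot \tbx_{0\mca} = \f{1}{2}( - || \nabla_\mca f(\bth) ||^2 + \la^2 r_\mca^2 )$ from Equation (\ref{eq: theta dot tbx0}), so that, after using the gradient relation once more,
\begin{align}
    (\hbtha \cdot \tbx_{0\mca}) \hbtha = -\f{1}{2 r_\mca^3} || \nabla_\mca f(\hbtha + \bthac) ||^2 \hbtha + \f{\la^2}{2} \btha .
\end{align}
Next I would expand $\tbx_0 = \f{1}{2}(\bg \cdot \nabla)\bg = \f{1}{2}(H(\bth) + \la I)(\nabla f(\bth) + \la \bth)$ (Equation (\ref{eq: hat xi0})), project onto $\mca$, use Lemma \ref{lem: hessian relation of scale-invariant layers} to rewrite $\mbone_\mca \odot H(\bth)\bth$, and use Lemma \ref{lem: gradient constraints for scale-invariant layers} (in particular $H_\mca(\bth)\btha = -\nabla_\mca f(\bth)$) to cancel the two $O(\la)$ terms, obtaining $\tbx_{0\mca} = \f{1}{2}( \mbone_\mca \odot H(\bth)\nabla f(\bth) + \la \nabla_\mca \nabla_\mcac^\top f(\bth) \bthac + \la^2 \btha )$. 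Subtracting this from the display above and evaluating every factor at the rescaled point $\hbtha + \bthac$ should collapse the result into the operator shown in (\ref{eq: EoM of dhbtht with xi=xi0}).

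I expect the main obstacle to be exactly this last reorganization: keeping track of the off-diagonal second derivatives $\nabla_\mca \nabla_\mcac^\top f$, consistently re-expressing each factor at $\hbtha + \bthac$, and recognizing that the surviving pieces assemble into the directional-derivative operator $(\nabla_\mcac f(\bth) + \la \bthac) \cdot \nabla_\mcac$ and the rank-one term $\f{\eta}{2} \hbtha \nabla_\mca^\top f(\bth)$. No single manipulation is deep, but the bookkeeping is error-prone, so I would split each vector into its $\hbtha$-parallel and $\hbtha$-orthogonal parts and carry them separately. As a final consistency check I would verify $\hbtha \cdot \dhbth_\mca = 0$, which must hold because $|| \hbtha || = 1$; indeed both the gradient term (by $\btha \cdot \nabla_\mca f(\bth) = 0$) and the counter-term correction (which is manifestly orthogonal to $\hbtha$) satisfy this.
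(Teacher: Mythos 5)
Your proposal is correct and follows essentially the same route as the paper's own proof: differentiate the quotient $\btha/r_\mca$, substitute the EoM together with $\dot{r}_\mca = -\la r_\mca - \eta\, \hbtha \cdot \bx$ from Theorem \ref{thm: EoM of r}, apply Lemma \ref{lem: gradient relation of scale-invariant layers}, and for $\bx = \tbx_0$ combine the scalar identity (\ref{tmpeq: 141}) with the projected counter term (\ref{tmpeq: 146}) (which you re-derive via Lemmas \ref{lem: gradient constraints for scale-invariant layers} and \ref{lem: hessian relation of scale-invariant layers}, exactly as the paper does) before reassembling the operator form. Your closing orthogonality check $\hbtha \cdot \dhbth_\mca = 0$ is also the sanity check the paper itself points out in Appendix \ref{app: Equation of Motion for hbth}.
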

The proof is given in Appendix \ref{app: Proof of thm: EoM of hattheta}.

\paragraph{Effective learning rate.}
This result highlights the differences between GD and GF on scale-invariant layers.
The factor $\f{1}{r_\mca^2}$ (Equation (\ref{eq: EoM of dhbtht with xi=0})), which is $\f{\eta}{r_\mca^2}$ at discretization, is called the \textit{effective learning rate} \cite{van2017l2_effectiveLR_orginal1, hoffer2018norm, zhang2018three, arora2018theoretical, chiley2019online, zhiyuan2020reconciling_NIPS2020, li2020exponential_learning_rate_ICLR2020, wan2021spherical_motion_dynamics, li2022robust, roburin2022spherical}. The dynamics of $\hbtha$ is induced by $\nabla_\mca f (\hbth_\mca + \bthac)$ with the effective learning rate $\f{\eta}{r_\mca^2}$, not $\eta$.
We find that the counter term corrects the effective learning rate to a matrix operator form (Equation (\ref{eq: EoM of dhbtht with xi=xi0})).
Let us see the meaning of each correction in order.
First, $I$ (identity matrix) corresponds to the original effective learning rate.
Second, $\f{\eta}{2} H_\mca$ directs the gradient $\nabla_\mca f(\hbtha + \bthac)$ toward the maximum eigenvector of $H_\mca$, i.e., a flat direction. Therefore, GD tends to go through flatter regions than GF.
Third, $\f{\eta}{2} I ((\nabla_\mcac f (\bth) + \la \bthac) \cdot \nabla_\mcac)$ involves $\nabla_{\mcac} f$ into the learning dynamics of $\mca$; therefore, $\mca$ is explicitly affected by $\mcac$ in GD, unlike in GF. This point is often missing in the literature on scale-invariant networks because it is often assumed that the whole network is scale-invariant. 
Fourth, $\f{\eta}{2} \hbtha \nabla_\mca^\top f(\bth)$ cancels the $\hbtha$ component of the right hand side of Equation (\ref{eq: EoM of dhbtht with xi=xi0}), which may not seem obvious but can be seen from the proof of Theorem \ref{thm: EoM of hattheta} (see Appendix \ref{app: Proof of thm: EoM of hattheta}), and thus, $\dot{\hbth}_\mca$ is orthogonal to $\hbtha$, which should be satisfied anyway because $|| \hbtha ||^2 \equiv 1 \Longrightarrow 2 \dot{\hbth}_\mca \cdot \hbtha = 0$.

\clearpage
\section{Equation of Motion for ${\theta}_{\mca\parallel}$} \label{app: Equation of Motion for bthapara}
For completeness, we provide the EoM for $\bthpara$. The proof is given in Appendix \ref{app: Proof of thm: EoM of bthpara (translation)}.
\begin{theorem}[EoM for $\bthpara$] \label{thm: EoM of bthpara (translation)}
EoM (\ref{eq: gradient flow}) gives
\begin{align}
    \dbthparat = -\la \bthparat - \nabla f(\bthparat) - \eta (I-P)\bx(\btht) \, .
\end{align}
Specifically, this is equivalent to:
\begin{align}
    &\,\,\dbthparat = -\la \bthparat - \nabla f(\bthparat + \bthac) 
    \label{eq: EoM of bthapara with xi=0}
\end{align}
for $\bx = \bm{0}$ (GF) and
\begin{align}
    &\dbthparat 
    =-\la (I + \f{\eta \la}{2} I + \f{\eta}{2} H_\mca (\bthapara + \bthac)) \bthapara \nn
    &  - \Big(I + \f{\eta\la}{2}I + \f{\eta}{2} H_\mca (\bthapara + \bthac) + \f{\eta}{2}I ((\nabla_\mcac f (\bthapara + \bthac)+\la\bthac) \cdot \nabla_\mcac)\Big) \nabla_\mca f (\bthapara + \bthac)
    \label{eq: EoM of bthapara with txi0}
\end{align}
for $\bx = \tbx_0$ (EoM).
\end{theorem}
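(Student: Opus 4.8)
The plan is to reduce the statement to the already-established EoM for $\bthaperp$ (Theorem \ref{thm: EoM of bthperp (translation)}) by exploiting the orthogonal decomposition $\btha = \bthaperp + \bthapara$. First I would differentiate this identity in time to obtain $\dbthapara = \dot{\bth}_\mca - \dbthaperp$, where $\dot{\bth}_\mca = \mbone_\mca \odot \dbtht$ is the restriction of EoM (\ref{eq: gradient flow}) to the layer $\mca$; this gives $\dot{\bth}_\mca = -\nabla_\mca f(\bth) - \la\btha - \eta\bx_\mca$ with $\bx_\mca := \mbone_\mca\odot\bx$. Substituting $\dbthaperp = -\la\bthaperp - \eta P\bx_\mca$ from Theorem \ref{thm: EoM of bthperp (translation)} and using $\btha - \bthaperp = \bthapara$ together with $\bx_\mca - P\bx_\mca = (I-P)\bx_\mca$ immediately yields the general form $\dbthapara = -\la\bthapara - \nabla_\mca f(\bth) - \eta(I-P)\bx_\mca$, which is the first displayed equation (here the gradient term is $\nabla_\mca f(\bth) = \mbone_\mca\odot\nabla f(\bth)$).

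Next I would invoke Lemma \ref{lem: gradient constraint and relation of trln-inv. layers}, in particular $\nabla f(\bth) = \nabla f(\bthapara + \bthac)$, to replace $\nabla_\mca f(\bth)$ by $\nabla_\mca f(\bthapara + \bthac)$, so that the right-hand side depends on $\btha$ only through $\bthapara$. Setting $\bx = \bm{0}$ then reproduces Equation (\ref{eq: EoM of bthapara with xi=0}) directly, with no further work.

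The substantive case is $\bx = \tbx_0$. Here I would insert the explicit counter term $\tbx_0 = \f{1}{2}(H(\bth) + \la I)(\nabla f(\bth) + \la\bth)$ from Equation (\ref{eq: hat xi0}) and compute $(I-P)\tbx_{0\mca}$. The key simplifications come from the translation-invariance identities of Lemma \ref{lem: gradient constraint and relation of trln-inv. layers}: $H(\bth)\mbone_\mca = 0$ and $PH(\bth) = 0$ annihilate the sum-direction projection, $P\nabla_\mca f(\bth) = 0$ removes the corresponding gradient component, and $H(\bth) = H(\bthapara + \bthac)$ lets me freeze the argument. I would expand the Hessian–gradient products into their $\nabla_\mca$/$\nabla_\mcac$ blocks (namely $H_\mca$ and the cross term $\nabla_\mca\nabla_\mcac^\top f$), group the $\la\bthapara$ and $\nabla_\mca f$ contributions, and collect the $\f{\eta\la}{2}$ and $\f{\eta}{2}H_\mca$ corrections to match the operator $I + \f{\eta\la}{2}I + \f{\eta}{2}H_\mca(\bthapara + \bthac)$ in Equation (\ref{eq: EoM of bthapara with txi0}).

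Finally, as a consistency check I would verify $\bthaperp \cdot \dbthapara = 0$, using $\bthaperp^\top(I-P) = \btha^\top P^\top(I-P) = 0$ (since $P^\top(I-P) = P - P^2 = 0$) together with $\bthaperp\cdot\nabla_\mca f(\bth) = 0$ and $\bthaperp\cdot\bthapara = 0$ from the lemma, which confirms that $\bthapara$ remains in the subspace orthogonal to $\bthaperp$. The main obstacle is the bookkeeping in the $\bx = \tbx_0$ case: correctly splitting $H(\bth)\nabla f(\bth)$ and $H(\bth)\bth$ into their $\mca$/$\mcac$ block structure and tracking which projections vanish, since a single misapplied identity (for instance forgetting that $H(\bth)\bthaperp = 0$) would spoil the cancellation that delivers the clean corrected operator.
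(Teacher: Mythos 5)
Your proposal is correct and follows essentially the same route as the paper's own proof: differentiating $\btha = \bthaperp + \bthapara$, substituting $\dbth_\mca = -\nabla_\mca f(\bth) - \la\btha - \eta\bx_\mca$ together with $\dbthaperp = -\la\bthaperp - \eta P\bx_\mca$ from Theorem \ref{thm: EoM of bthperp (translation)}, invoking Lemma \ref{lem: gradient constraint and relation of trln-inv. layers} (in particular $P\nabla_\mca f = 0$, $H(\bth)\mbone_\mca = 0$, $H(\bth) = H(\bthapara+\bthac)$) and then expanding $(I-P)\tbx_{0\mca}$ into its $H_\mca$ and $\nabla_\mca\nabla_\mcac^\top f$ blocks to assemble the corrected operator. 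Even your closing orthogonality check $\bthaperp\cdot\dbthapara = 0$ appears verbatim in the paper's argument, so the only remaining work is the routine bookkeeping you already identified.
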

This result highlights the differences between the dynamics of GD and GF.
The two factors $\f{\eta\la}{2}I$ in Equation (\ref{eq: EoM of bthapara with txi0}) mean that the existence of weight decay increases the learning rate (increases the velocity $\dbthpara$). 
The factor $\f{\eta}{2}H$ means that, as mentioned in Appendix \ref{app: Equation of Motion for hbth}, GD tends to go along sharper paths than GF.
Note that velocity $\dbth_{\mca\parallel}$ is orthogonal to $\bthperp$ because $\nabla f$, $\bthpara$, and $H (\nabla f + \la \bthpara)$ are orthogonal to $\bthperp$.
$H (\nabla f + \la \bthpara) \perp \bthperp$ follows because $H\bm{v} \perp \bthperp$ for arbitrary non-zero vector $\bm{v} \in \mbr^d$ ($\because H \mbone_\mca = H\bthperp=0$) (see Lemma \ref{lem: gradient constraint and relation of trln-inv. layers}).
$\f{\eta}{2}I ((\nabla_\mcac f (\bthapara + \bthac)+\la\bthac) \cdot \nabla_\mcac)$ involves $\nabla_{\mcac} f$ into the learning dynamics of $\mca$.
We see that the dynamics of $\bthpara$ is also independent of that of $\bthperp$, and thus, they are completely separable.
A summary of Theorems \ref{thm: EoM of bthperp (translation)} and \ref{thm: EoM of bthpara (translation)} is given in Figure \ref{fig: dynamics of trln. inv. layers}.

\clearpage
\section{Details of Experiment} \label{app: Details of Experiment}
We provide detailed experimental settings (see also Section \ref{sec: Experiment}).
Our computational infrastructure is a DGX-1 server.
The fundamental libraries used in the experiment are TensorFlow 2.3 \cite{tensorflow}, Numpy 1.18 \cite{numpy}, and Python 3.6.8 \cite{Python3}.
The random seeds used for TensorFlow and Numpy are both 7.
The input image is first divided by 127.5 and subtracted by 1.
The maximum total number of iterations is 5 million steps for GF and EoM. 
The total runtime is approximately a month.
We use least square fitting (\texttt{np.polyfit}) to calculate the decay rates in Table \ref{tab: decay rates of bthaperp}.
More information and detailed experimental results can be found in our code.

In Figures \ref{fig: th vs ex discerr} and \ref{fig: th. pred. vs. exp. of disc. err.}, the theoretical prediction of discretization error is defined as $|| \mathbf{e}_{k} || = \frac{\eta^2}{2} || \sum_{s=0}^{k-1} ( H (\mathbf{\theta}(s\eta)) + \lambda I) \mathbf{g} (\mathbf{\theta}(s\eta)) ||$ (Equation (\ref{eq: disc. err. with no xi})).
To reduce computational costs, we approximate the r.h.s.: $(H({\bf \theta}(t)) + \lambda I){\bf g} (\btht) \sim \frac{{\bf g} ({\bf \theta}(t) + \epsilon {\bf g} ({\bf \theta}(t))) - {\bf g} ({\bf \theta}(t) - \epsilon {\bf g} ({\bf \theta}(t)))}{2\epsilon}$, where $\ep$ is set to $10^{-7}$.
The green curve in Figure \ref{fig: th vs ex discerr} is defined as $\berr_k = \ti{\berr}_{100} + \frac{\eta^2}{2} \sum_{s=100}^{k-1}  ( H (\mathbf{\theta}(s\eta)) + \lambda I) \mathbf{g} (\mathbf{\theta}(s\eta))$ (compare this with Equation (\ref{eq: disc. err. with no xi})), where $\ti{\berr}_{100}$ is the actual discretization error at the 100th step that is obtained from GD. Therefore, the green curve represents the theoretical prediction of discretization error after the 100th step, given $\ti{\berr}_{100}$.


\clearpage
\section{Supplementary Experiment} \label{app: Supplementary Experiment}
\subsection{Relative Discretization Error} \label{app: Discretization Error}
We provide the relative discretization error, which is defined as $||\berr_k||/||\bth_k||$ ($k \in \mbz_{\geq 0}$).  See Figure \ref{fig: relaive disc. err.}. We can see that a large learning rate ($\eta = 10^{-1}$) leads to a large discretization error (Figure \ref{fig: relaive disc. err.} (a) and (c)).
We also see that the counter term reduces the discretization error as expected (Figure \ref{fig: relaive disc. err.} (b) and (d)). 

\begin{figure}[htbp]
    \begin{minipage}[b]{0.5\linewidth}
        \centering
        \includegraphics[width=0.9\linewidth]
        {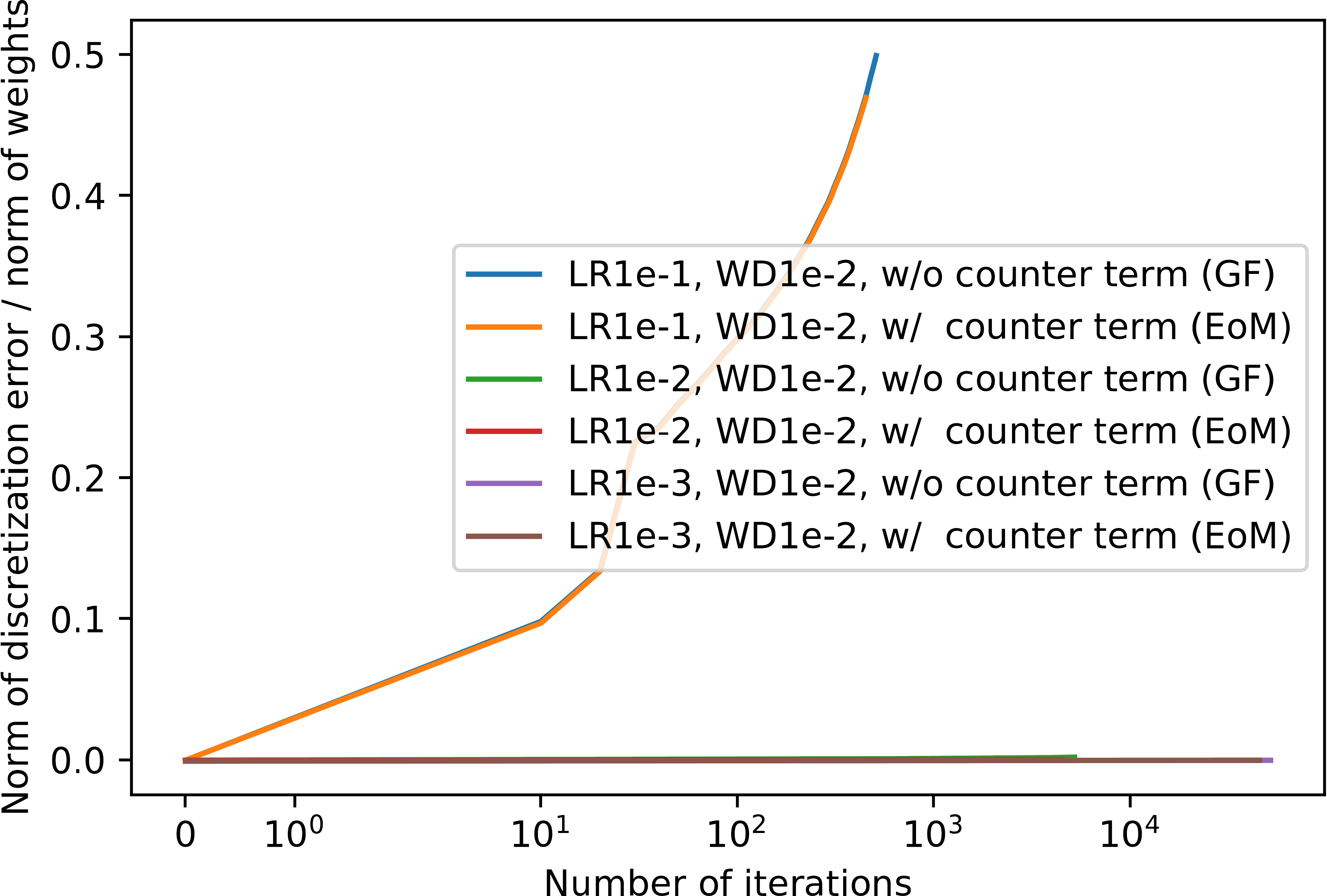}
        \subcaption{Weight decay = $10^{-2}$.}
    \end{minipage}
    \begin{minipage}[b]{0.5\linewidth}
        \centering
        \includegraphics[width=0.9\linewidth]
        {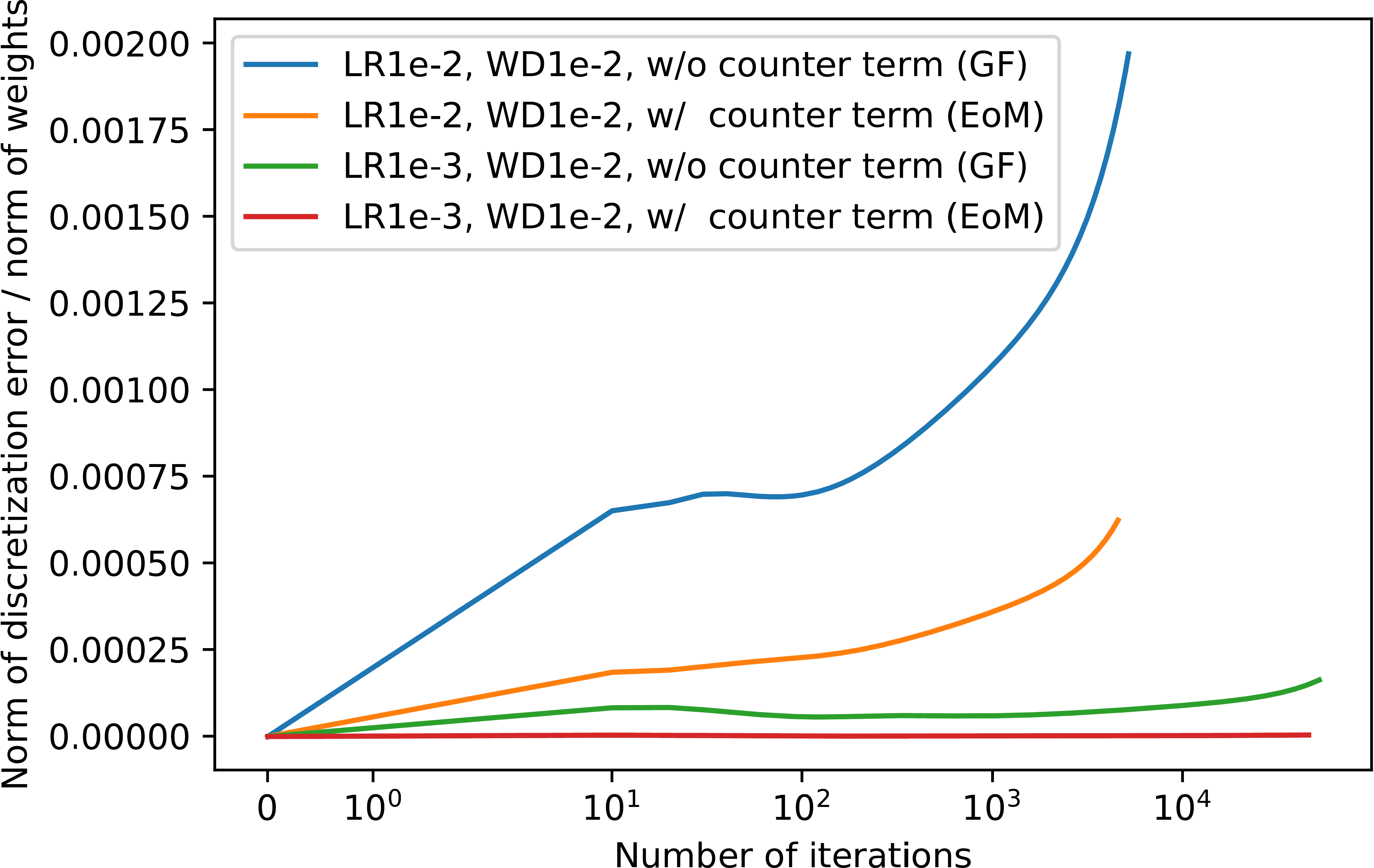}
        \subcaption{Weight decay = $10^{-2}$. Magnified.}
    \end{minipage}\\
    \begin{minipage}[b]{0.5\linewidth}
        \centering
        \includegraphics[width=0.9\linewidth]
        {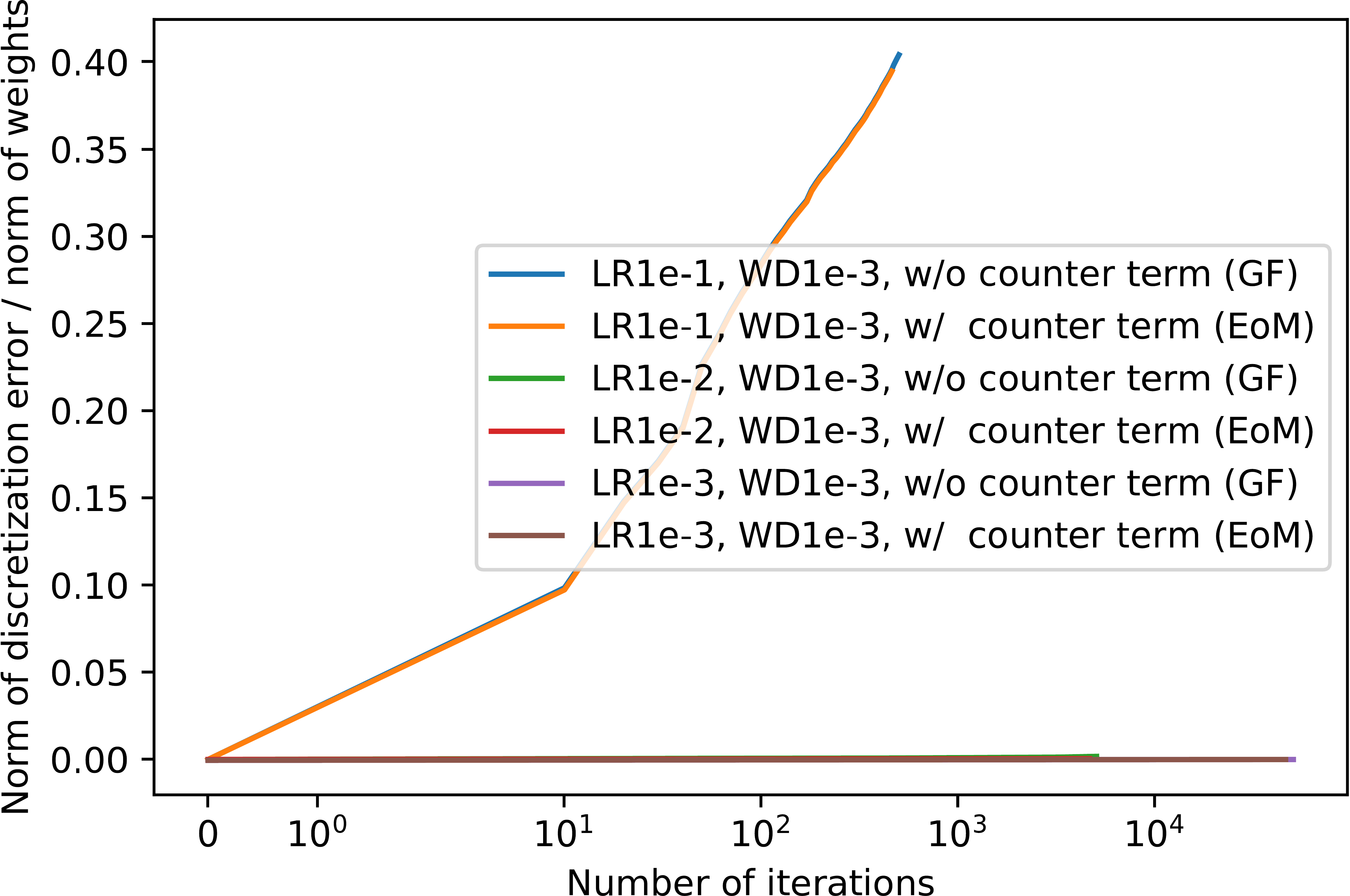}
        \subcaption{Weight decay = $10^{-3}$.}
    \end{minipage}
    \begin{minipage}[b]{0.5\linewidth}
        \centering
        \includegraphics[width=0.9\linewidth]
        {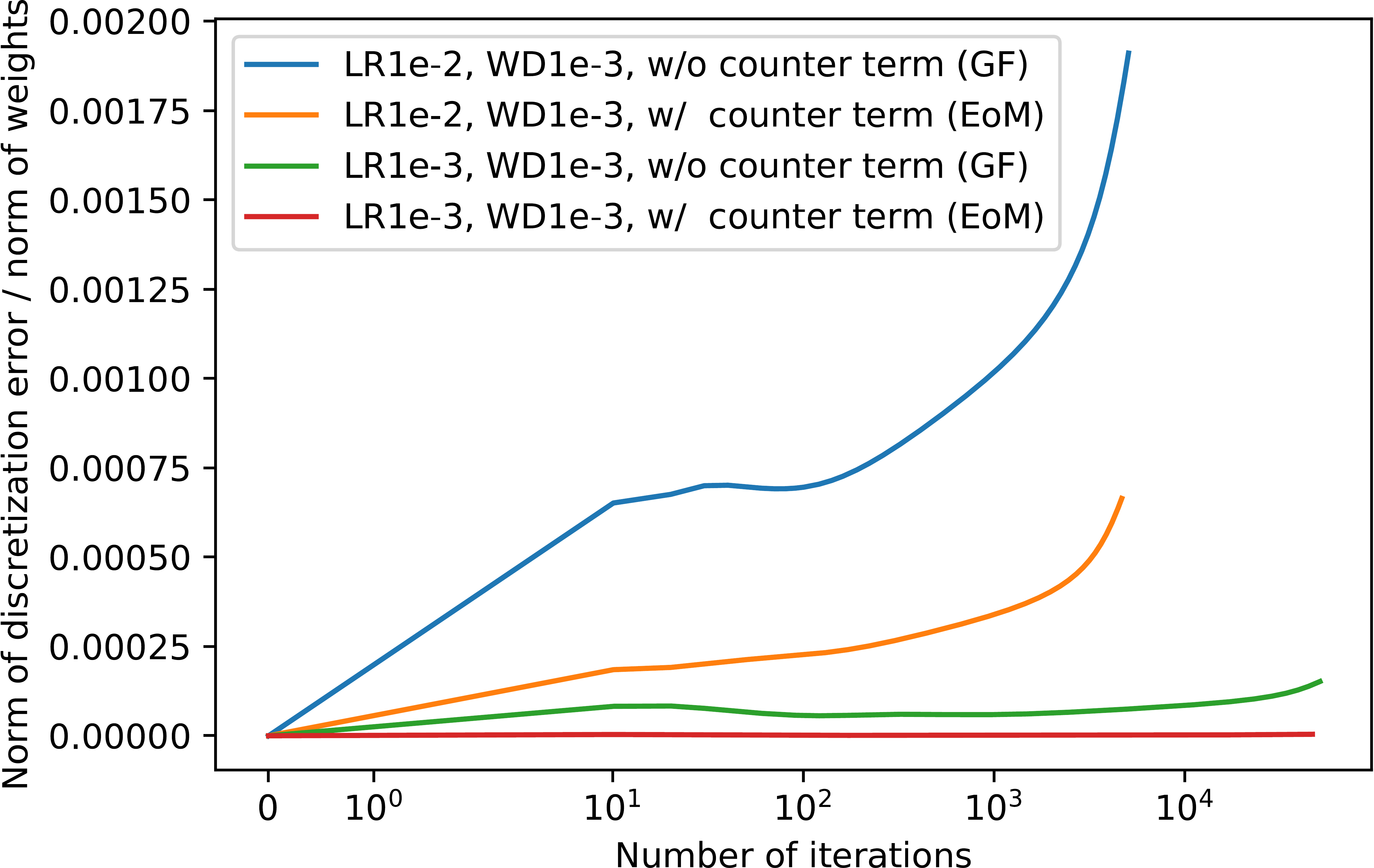}
        \subcaption{Weight decay = $10^{-3}$. Magnified.}
    \end{minipage}
    \caption{\textbf{Relative discretization error.} In (a) and (c), the LR1e-1 curves overlap each other, and the LR1e-2 and LR1e-3 curves collapse in the lower region of the figure. The LR1e-2 and LR1e-3 are magnified and shown in (c) and (d). See Section \ref{sec: Experiment} and Appendix \ref{app: Details of Experiment} for experimental settings.}
    \label{fig: relaive disc. err.}
\end{figure}

\clearpage
\subsection{Theoretical Prediction Vs. Experimental Result of Discretization Error} \label{app: Theoretical prediction vs. experimental result of discretization error}
We compare the theoretical prediction of discretization error between GF and GD (Equation (\ref{eq: disc. err. with no xi})) with the actual discretization error obtained in the experiment.
The green curve is defined as $\berr_k = \ti{\berr}_{100} + \frac{\eta^2}{2} \sum_{s=100}^{k-1}  ( H (\mathbf{\theta}(s\eta)) + \lambda I) \mathbf{g} (\mathbf{\theta}(s\eta)) + O(\eta^3)$ (compare this with Equation (\ref{eq: disc. err. with no xi})), where $\ti{\berr}_{100}$ is the actual discretization error at the 100th step. Therefore, the green curve represents the theoretical prediction of discretization error after the 100th step given $\ti{\berr}_{100}$.

\begin{figure}[htbp]
    \begin{minipage}[b]{0.5\linewidth}
        \centering
        \includegraphics[width=0.9\linewidth]
        {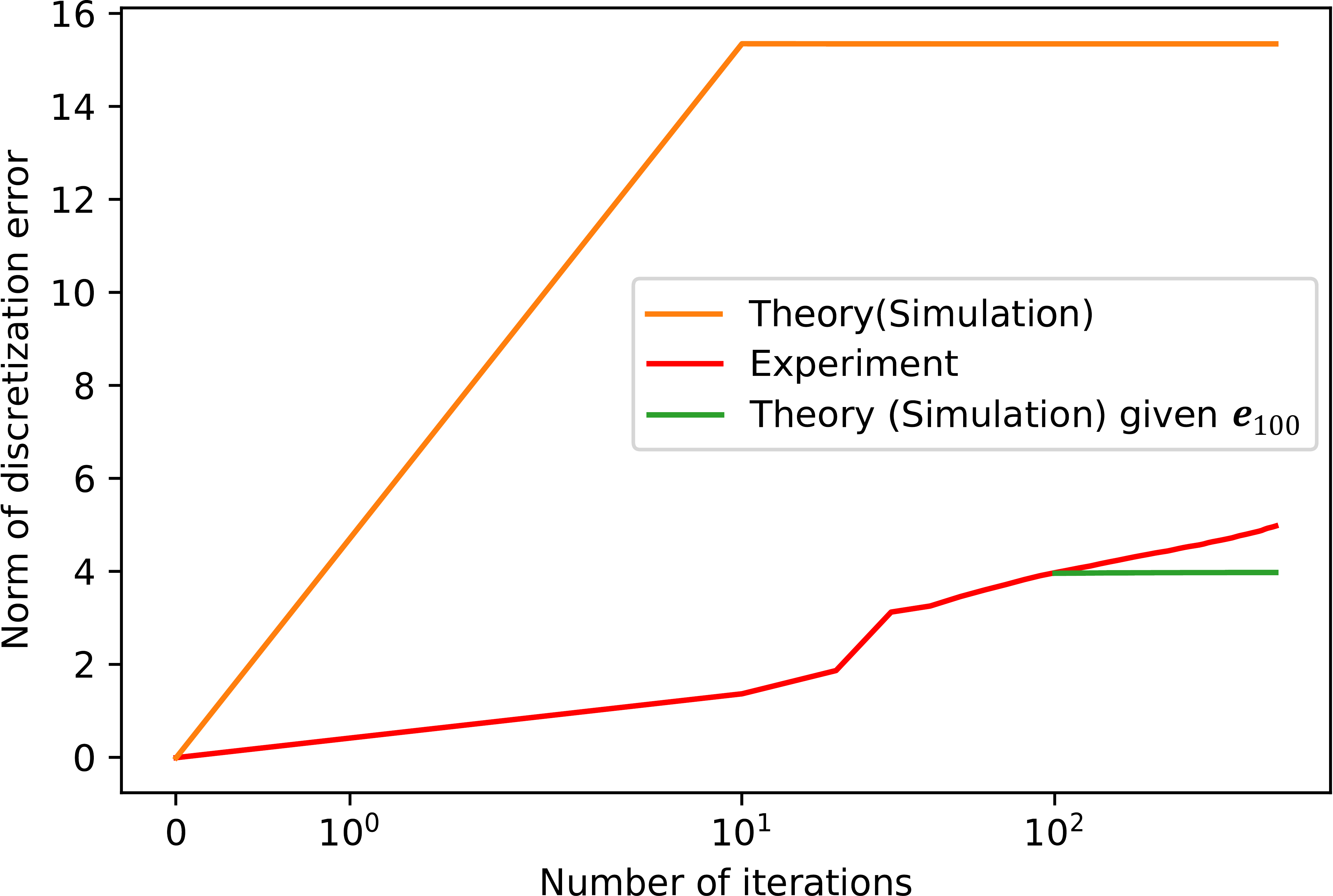}
        \subcaption{Learning rate = $10^{-1}$.}
    \end{minipage}
    \begin{minipage}[b]{0.5\linewidth}
        \centering
        \includegraphics[width=0.9\linewidth]
        {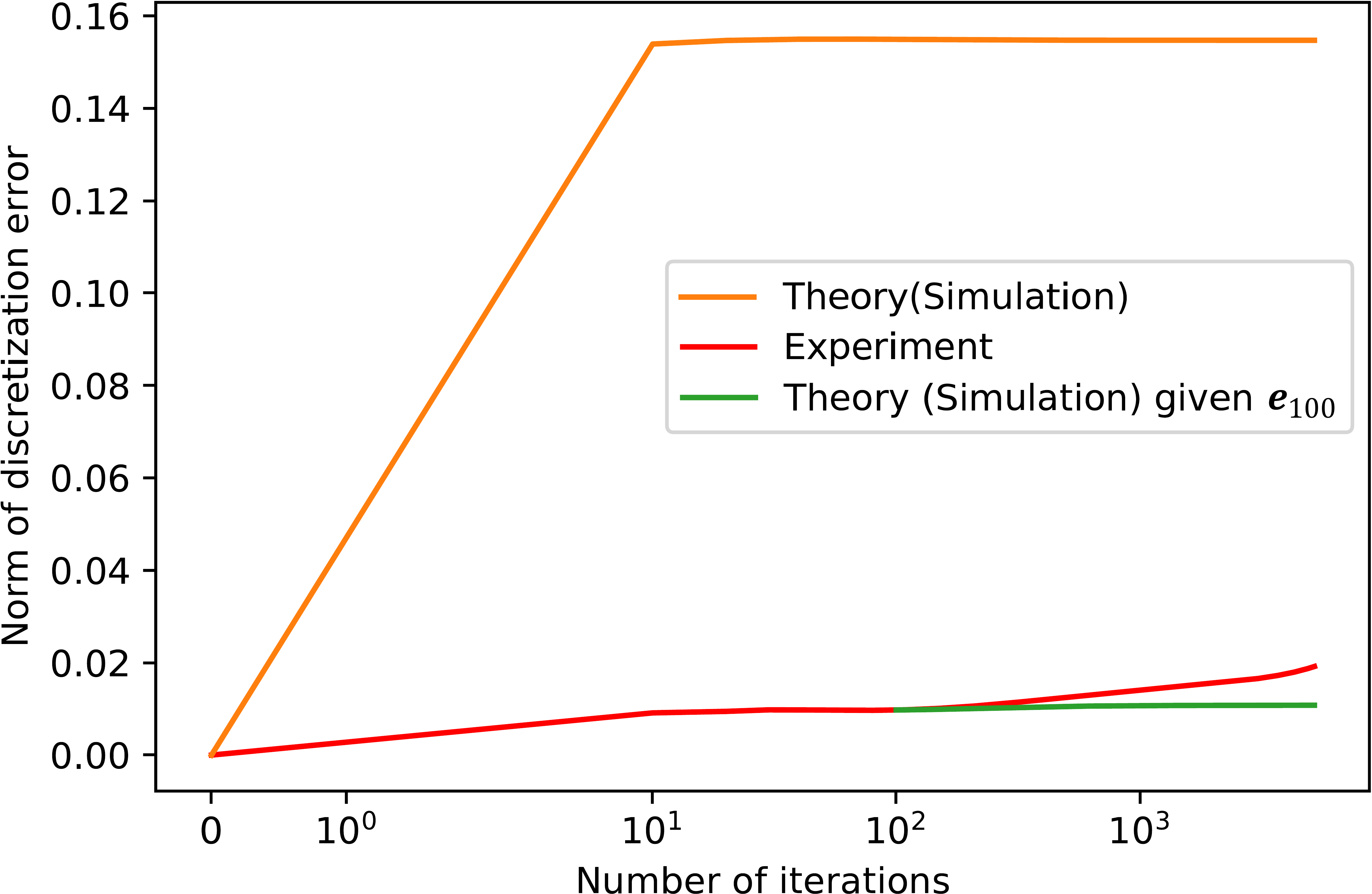}
        \subcaption{Learning rate = $10^{-2}$.}
    \end{minipage}\\
    \begin{minipage}[b]{0.5\linewidth}
        \centering
        \includegraphics[width=0.9\linewidth]
        {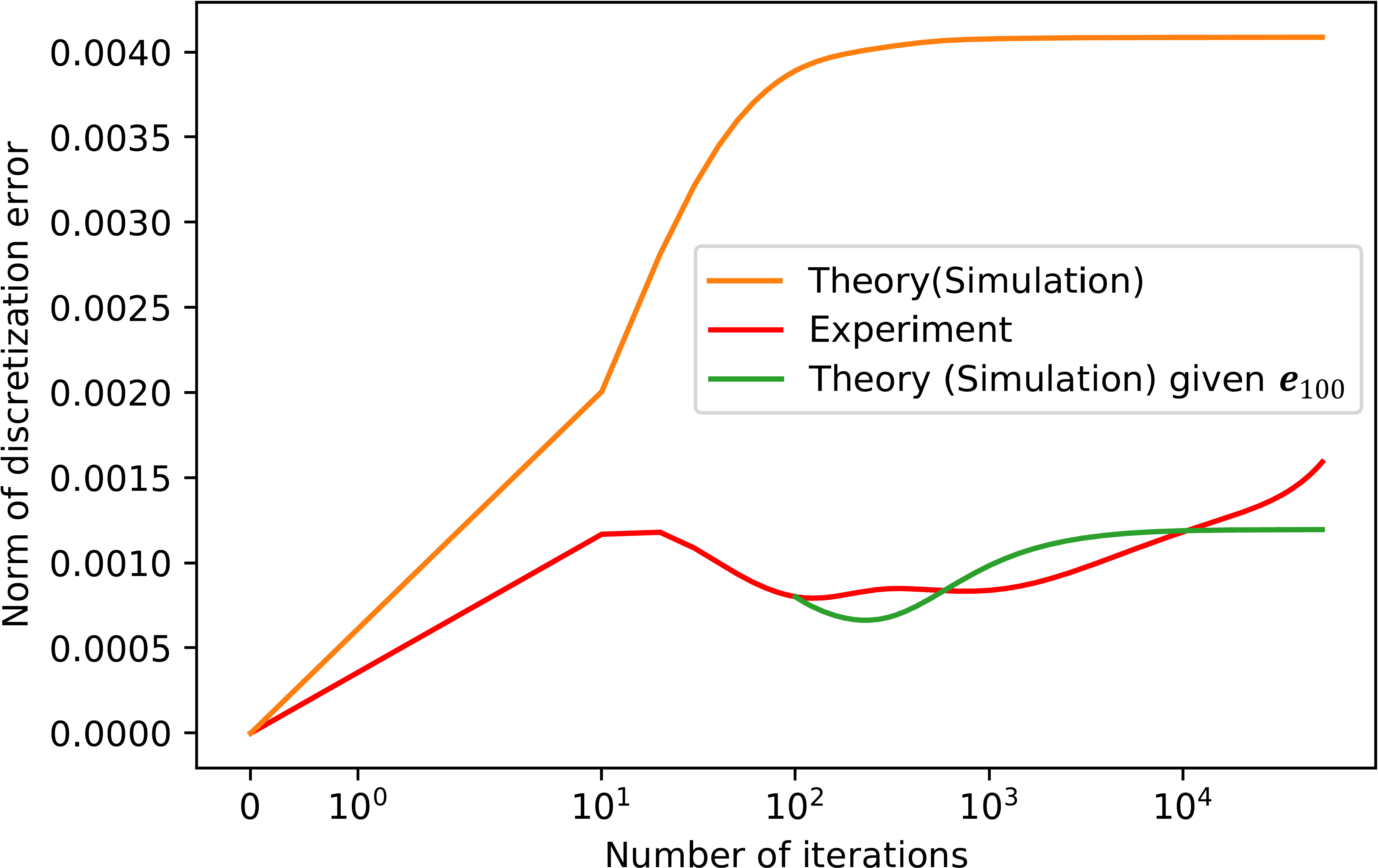}
        \subcaption{Learning rate = $10^{-3}$.}
    \end{minipage}
    \caption{\textbf{Theoretical prediction (Equation (\ref{eq: disc. err. with no xi})) vs. experimental result of discretization error between GF and GD.} The weight decay is $10^{-2}$. See Section \ref{sec: Experiment} and Appendix \ref{app: Details of Experiment} for experimental settings.}
    \label{fig: th. pred. vs. exp. of disc. err.}
\end{figure}

\clearpage
\section{Supplementary Discussion} \label{app: Supplementary Discussion}
\paragraph{Supplementary related work (Section \ref{sec: Related Work}).}
To show the benefits of EoM, we focus on scale-invariant layers \cite{van2017l2_effectiveLR_orginal1, hoffer2018norm, zhang2018three, arora2018theoretical, chiley2019online, zhiyuan2020reconciling_NIPS2020, li2020exponential_learning_rate_ICLR2020, wan2021spherical_motion_dynamics, li2022robust, roburin2022spherical} and translation-invariant layers \cite{kunin2021neural_mechanics_1, tanaka2021noethers_neural_mechanics_2} in Section \ref{sec: Applications}.
To carry over the stability of a continuous optimization algorithm to a discretized system, the authors of \cite{bu2022feedback} add a feedback term to the optimization, and after that, they apply a discretization method to it. The authors' primary motivation is to keep the orthogonality of the weight parameters of DNNs, which is different from ours.

\paragraph{Convergence of $\bx$ (Section \ref{sec: Solution to EoLDE}).}
Note that the expansion of $\bx$ in terms of $\eta$ is not necessarily convergent, as is also pointed out in \cite{hairer2006geometric_numerical_integration_book}. Thus, we have to truncate the expansion at a suitable order. The discretization error at the truncation is given in Theorem \ref{cor: Discretization error of bx}.

\paragraph{Beyond leading order of discretization error (Theorem \ref{thm: Leading order of EoDE} and Section \ref{sec: Leading Order of Discretization Error Is Given By Counter Term}).}
In this work, we analyze the leading order of discretization error. However, higher-order terms cannot always be negligible.
We discuss in Section \ref{sec: Leading Order of Discretization Error Is Given By Counter Term} that the higher-order terms are important at the beginning of training. 


\paragraph{Existence of $\mcac$ (Section \ref{sec: Applications}).}
In our theoretical analysis of scale- and translation-invariant layers, the network contains both invariant ($\mca$) and non-invariant layers ($\mcac$), while previous works assume the whole network is invariant for simplicity \cite{van2017l2_effectiveLR_orginal1, hoffer2018norm, zhang2018three, arora2018theoretical, chiley2019online, zhiyuan2020reconciling_NIPS2020, li2020exponential_learning_rate_ICLR2020, wan2021spherical_motion_dynamics, li2022robust, roburin2022spherical}. We avoid this assumption and show that such mixed networks require appropriate modifications to analyses of invariant networks. For example, $\nabla f(\bth) = \f{1}{||\bth||} \nabla f (\hbth)$ for invariant networks, while $\nabla_\mca f (\bth) = \f{1}{||\btha||} \nabla_\mca f (\hbtha + \bthac)$ for mixed networks (Lemma \ref{lem: gradient relation of scale-invariant layers}), not $\f{1}{||\btha||} \nabla_\mca f (\hbtha)$. Such a naive replacement is not allowed.

\paragraph{Higher-order corrections to decay rate of $r_\mca$ (Section \ref{sec: Learning Dynamics of Scale-invariant Layers}).}
We can compute more corrections to the decay rate of $r_\mca$ ($\mca$ is a scale-invariant layer), using more counter terms. For example, a long algebra gives decay rate $\eta\la ( 1 + \f{\eta\la}{2} + \f{\eta^2\la^2}{3})$ for $\bx = \tbx_0 + \eta \tbx_1$. The proof is similar to Appendix \ref{app: Proof of thm: EoM of r}.

\paragraph{On equilibrium assumptions in Corollaries \ref{cor: r at equil.} and \ref{cor: angular update at equil.} (Section \ref{sec: Learning Dynamics of Scale-invariant Layers}).}
We make assumptions in Corollaries \ref{cor: r at equil.} and \ref{cor: angular update at equil.}; there exist two constants $r_{\mca *} \geq 0$ and $c_* \geq 0$ such that $r_\mca(t) \xrightarrow{t\rightarrow\infty} r_{\mca *}$ and $|| \nabla_\mca f(\hbth_\mca(t) + \bthac (t)) || \xrightarrow{t\rightarrow\infty} c_*$. These assumptions are similar to those given in previous studies \cite{van2017l2_effectiveLR_orginal1, wan2021spherical_motion_dynamics}. 
However, whether the assumptions are valid in the actual learning dynamics of DNNs is of independent interest.
In fact, the equilibrium assumption ($r_{\mca *}(t)$ and $||\nabla_\mca f(\hbtha(t) + \bthact)|| \xrightarrow{t\rightarrow \infty}$ constant) could not be satisfied even at one million steps of GD, and potentially because of it, $r_{\mca *}$ and $\De_*$ have a large discrepancy between the empirical results and theoretical predictions.
Deeper analyses on this point are needed. Under what conditions are the equilibrium assumptions valid? Can we relax the equilibrium assumptions and obtain realistic limiting dynamics of scale-invariant layers? This is exciting future work.

In contrast to our empirical result mentioned above, in \cite{wan2021spherical_motion_dynamics}, their experiments dramatically match their theoretical prediction.
This is potentially because of differences in experimental settings; in \cite{wan2021spherical_motion_dynamics}, SGD is used (ours is GD) and variance is induced, ResNet-50 \cite{ResNetV1, ResNetV2} is used (ours is a fully-connected network with three layers), ImageNet \cite{deng2009imagenet_original1, russakovsky2014imagenet_original2} and MSCOCO \cite{lin2014MSCOCO_original} are used (ours is MNIST \cite{MNIST}), and large learning rates ($\sim 10^{-1}$) and small weight decays ($\sim 10^{-4}$) are used (ours are given in Appendix \ref{app: Details of Experiment}).


\paragraph{Extension of EoM to general settings (Section \ref{sec: Conclusion and Limitations}).}
While we focus on GD and GF for simplicity, our counter-term-based approach and discretization error analysis can be extended to more general settings, such as SGD, acceleration methods (e.g., momentum SGD), and adaptive optimizers (e.g., Adam \cite{Adam}).
First, to extend our analysis to SGD, discretization error analysis of the Euler-Maruyama method, e.g., \cite{deng2016strong}, can be used. SDE's error analysis \cite{li2019stochastic_modified_equations_math_found_JMLR2019, feng2020uniform} is also relevant.
Second, we can extend our counter-term-based approach and discretization error analysis to acceleration methods by modifying the analysis for different differential equations from GF and different discretization schemes from the Euler method,
as is discussed in \cite{krichene2015accelerated, xu2018continuous_SDE_is_good_for_SGD, scieur2017integration}.
Third, \cite{barakat2021continuous_Adam} is the first work that provides a continuous approximation of Adam. However, its counter term and discretization error are open questions.


\end{document}